\documentclass[10pt,twocolumn,letterpaper]{article}

\usepackage[top=0.9 in,bottom=0.9in,left=0.85in,right=0.85in,columnsep=0.4in]{geometry}

\usepackage{amsmath}
\usepackage{amssymb}
\usepackage{amsthm}
\usepackage{graphicx}
\usepackage{multirow}
\usepackage{xfrac}
\usepackage{color}

\setcounter{totalnumber}{99}
\setcounter{topnumber}{99}
\setcounter{bottomnumber}{99}
\newtheorem{definition}{Definition}
\newtheorem{assumption}{Assumption}

\newtheorem{theorem}{Theorem}
\newtheorem{corollary}{Corollary}
\newtheorem{claim}{Claim}
\newtheorem{lemma}[theorem]{Lemma}

\DeclareMathOperator*{\argmin}{arg\,min}

\newcommand{\R}{\mathbb{R}}

\newcommand{\E}{\mathbb{E}}
\renewcommand{\P}{\mathbb{P}}
\newcommand{\si}[1]{^{(#1)}}
\newcommand{\sit}[1]{^{(#1)T}}
\renewcommand{\O}{\mathcal{O}}
\renewcommand{\exp}[1]{e^{#1}}
\newcommand{\lexp}[1]{{\rm exp}(#1)}

\newcommand{\greekbf}[1]{\text{\boldmath $#1$}}
\renewcommand{\L}{\mathcal{L}}
\newcommand{\Lh}{\widehat{\L}}
\newcommand{\A}{\mathcal{A}}
\newcommand{\N}{\mathcal{N}}
\newcommand{\B}{\mathcal{B}}

\newcommand{\Lhn}{\Lh_\eta}
\newcommand{\Ln}{\L_\eta}
\newcommand{\KL}{\mathbb{KL}}

\newcommand{\Reg}{\mathcal{R}}
\newcommand{\BTheta}{\mathcal{H}}
\newcommand{\vtheta}{\greekbf{\theta}}

\newcommand{\vpsi}{\greekbf{\psi}}
\newcommand{\vphi}{\greekbf{\phi}}

\newcommand{\vthetah}{\widehat{\vtheta}}
\newcommand{\vthetahh}{\vthetah^*}

\newcommand{\vthetan}{\vtheta_\eta}
\newcommand{\vthetahn}{\vthetah_\eta}
\newcommand{\vthetahhn}{\vthetahh_\eta}
\newcommand{\veta}{\greekbf{\eta}}

\newcommand{\x}{\mathbf{x}}
\newcommand{\X}{\mathbf{X}}
\newcommand{\Xh}{\hat{\X}}
\newcommand{\Yh}{\hat{Y}}
\newcommand{\DD}{\mathcal{D}}
\newcommand{\ND}{\mathcal{Q}}

\newcommand{\tf}{\mathbf{t}}
\newcommand{\T}{\mathbf{T}}
\newcommand{\Th}{\widehat{\T}}

\newcommand{\Z}{\mathcal{Z}}

\newcommand{\Tn}{\T_\eta}
\newcommand{\Thn}{\Th_\eta}

\newcommand{\MI}{\mathbb{I}}
\newcommand{\M}{\mathcal{M}}

\newcommand{\XS}{\mathcal{X}}
\newcommand{\dotprod}[2]{\langle #1,#2 \rangle}
\newcommand{\eps}{\varepsilon}

\newcommand{\hns}{\hspace{-0.025in}}

\newcommand{\ftheta}{\theta}

\newcommand{\vnu}{\greekbf{\nu}}

\newcommand{\YS}{\mathcal{Y}}
\newcommand{\captionx}[1]{\caption{\footnotesize{#1}}} 

\title{\textbf{Regularized Loss Minimizers with Local Data Perturbation: Consistency and Data Irrecoverability}}

\author{
Zitao Li\\
Department of Computer Science\\
Purdue University\\
West Lafayette, IN 47907, USA\\
\texttt{li2490@purdue.edu}\\
\and
Jean Honorio\\
Department of Computer Science\\
Purdue University\\
West Lafayette, IN 47907, USA\\
\texttt{jhonorio@purdue.edu}}

\date{}

\begin{document}

\maketitle

\begin{abstract}
We introduce a new concept, \emph{data irrecoverability}, and show that the well-studied concept of data privacy is sufficient but not necessary for data irrecoverability.
We show that there are several \emph{regularized loss minimization} problems that can use perturbed data with theoretical guarantees of generalization, i.e., loss consistency.
Our results quantitatively connect the convergence rates of the learning problems to the impossibility for any adversary for recovering the original data from perturbed observations.
In addition, we show several examples where the convergence rates with perturbed data only increase the convergence rates with original data within a constant factor related to the amount of perturbation, i.e., noise.

\end{abstract}

\section{Introduction} \label{sec:intro}
In recent years, as machine learning algorithms are gradually embedded into different on-line services, there is increasing concern about privacy leakage from service providers. 
On the other hand, the enhancement of user experience and promotion of advertisement must rely on user data.
Thus, there is a natural conflict between privacy and usefulness of data. Whether data can be protected, while remaining useful, has become an interesting topic.

To resolve this conflict, several frameworks have been proposed. 
Since 2006, \emph{differential privacy} \cite{10.1007/11681878_14, dwork2009differential} has been considered as a formal definition of privacy. 
The core idea of differential privacy is to eliminate the effect of individual records from the output of learning algorithms, by introducing randomization into the process.
There is already a large number of differentially-private algorithms for different purposes \cite{dwork2008differential, wainwright2012privacy, abadi2016deep, jain2014near, bassily2014private, chaudhuri2011differentially}.
More recently, \textit{local privacy} \cite{duchi2013local, near2018differential, erlingsson2014rappor}, a stronger setting to protect individuals privacy, has been proposed. 
In local privacy, data providers randomize data before releasing it to a learning algorithm.
Locally-private algorithms related to machine learning problems have been further developed in \cite{smith2017interaction, kasiviswanathan2016efficient}.

In this paper, we discuss the effect of perturbed data on several problems in machine learning that can be modeled as the minimization of an \textit{empirical loss}, with a finite number of training samples randomly drawn from some unknown data distribution. 
In these problems, the \textit{expected loss} is usually defined as the expected value of the empirical loss, with respect to the data distribution.
The minimizers of the empirical loss and expected loss are called  the \textit{empirical minimizer} and \textit{true hypothesis} respectively.
One of the most important measurements of learning success is \textit{loss consistency}, which describes the difference between the expected loss of the empirical minimizer and that of the true hypothesis.
In \cite{honorio2014unified}, a general framework was proposed to analyze loss consistency for various problems, including the estimation of exponential family distributions, generalized linear models, matrix factorization, nonparametric regression and max-margin matrix factorization.
Additionally, in \cite{honorio2014unified} loss consistency was also used to establish other forms of consistency as corollaries of the former.
That is, loss consistency implies norm consistency (small distance between the empirical minimizer and the true hypothesis), sparsistency (recovery of the sparsity pattern of the true hypothesis) and sign consistency (recovery of the signs of the true hypothesis).

\paragraph{Contributions.}
We generalize the concept of privacy by defining the concept of \emph{data irrecoverabilitiy}.
We show that under our framework, the convergence rates of several learning problems with perturbed data, are similar to the convergence rates with original data. 
More specifically, our contributions can be summarized as follows.
\begin{itemize}
\item First, we define the concept of \emph{data irrecoverability}, and show that privacy implies data irrecoverability (Theorems~\ref{thm:irrecov} and \ref{thm:irrecovext}).
  In addition, Appendix~\ref{app:relation_dp} shows examples that are irrecoverable but not private.
  Thus, \emph{\textbf{privacy is sufficient but not necessary for irrecoverability.}}
\item Second, we show how perturbed data affect the loss consistency of several problems, by extending the assumptions and the framework of \cite{honorio2014unified}. 
  That is, we prove a perturbed loss consistency guarantee for regularized loss minimization (Theorem~\ref{thm:pri_lossconsistency}).
\item Third, \emph{\textbf{our framework allows us to analyze several empirical loss minimization problems}}, such as maximum likelihood estimation for exponential family distributions, generalized linear models with fixed design, exponential-family PCA, nonparametric generalized regression and max-margin matrix factorization.
\item We find that introducing noise with dimension-independent variance can make it difficult enough to recover the original data, \emph{\textbf{while only increasing the convergence rate within a constant factor}} (Theorem~\ref{rate:MLE} to \ref{thm:max-margin_priext}) with respect to the results reported in \cite{honorio2014unified}.
\end{itemize}

\section{Preliminaries}\label{sec:preliminary}

In this section, we will first formalize our definition of perturbed data and irrecoverability of perturbed data. 
Then we define the empirical loss minimization problems and our main assumptions.
\subsection{Perturbed Data and Irrecoverability} \label{subsec:privatized}
First we show a general definition of privacy which is used in both differential and local privacy.
\begin{definition}[Privacy] \label{def:dp}
   An algorithm $\M : \XS \to \Z$ satisfies $(\epsilon, \delta)$-privacy, where $\epsilon > 0$ and $\delta \in (0,1)$, if and only if for any input $x, x' \in \XS$ and $\mathcal{S} \in \sigma(\Z)$, we have 
   \begin{align*}
       \P_{\M}[\M(x)\in \mathcal{S}] \leq e^\epsilon\P_{\M}[\M(x')\in \mathcal{S}] + \delta,
   \end{align*}
   where $\P_{\M}$ denotes that the probability is over random draws made by the algorithm $\M$ , and $\sigma(\Z)$ denotes a $\sigma$-algebra on $\mathcal{Z}$.
\end{definition}
The above definition is very general.
Differential privacy assumes that $x$ and $x'$ are datasets that differ in a single data point. 
Group privacy assumes that $x$ and $x'$ are datasets that differ in several data points.
While $\M$ is a general mechanism in differential privacy, for local privacy $\M$ is a particular mechanism that adds noise to the data before releasing it to the learner.
Note that by setting $x$ and $x'$ as two arbitrary datasets differing in $\alpha n$ samples ($\alpha\in(0,1]$) and $\mathcal{S}=\{z\}$, we have $\prod_{i=1}^{n}p(z_i|x_i) \leq e^\epsilon\prod_{i=1}^{n}  p(z_i|x'_i)  + \delta$ by independence from Definition~\ref{def:dp}, which means $\forall i \in [n], p(z_i|x_i) \leq e^{\frac{\epsilon}{\alpha n}}  p(z_i|x'_i) + \frac{\delta}{n}$ is a sufficient condition to satisfy privacy.

\paragraph{Data irrecoverability.}
The definition of privacy can be considered as a forward mapping from data to the output of the algorithm. 
Here we analyze the backward mapping. 
That is, we focus on how likely the original data can be recovered from the algorithm output. Next we provide our formal definition. 
\begin{definition}[Data Irrecoverability]\label{def:nonrecover}
     For any privacy-preserving algorithm $\M : \XS \rightarrow \Z$ and any conceivable adversary $\A: \Z \rightarrow \XS$, we say that the original data $X$ is irrecoverable if the following holds
     for some constant $\gamma \in (0,1]$:
     \begin{align*}
         \inf_\A \P_{X,\M} [\A(\M(X)) \neq X] \geq \gamma.
     \end{align*}
\end{definition}
Our definition of data irrecoverability is more general than that of privacy. 
We can show that $(\epsilon,\delta)$-privacy implies data irrecoverability. 
Thus, in this case, our Definition~\ref{def:nonrecover}  is more general than Definition~\ref{def:dp}.
The following theorem uses privacy for arbitrary datasets $x$ and $x'$.
\begin{theorem}[Privacy implies data irrecoverability]\label{thm:irrecov}
    For any privacy-preserving algorithm $\M : \XS \rightarrow \Z$ that satisfies $(\epsilon,\delta)$-privacy where $\XS$ is a countably finite set, and any conceivable adversary $\A: \Z \rightarrow \XS$, data irrecoverability follows. That is:
    \begin{align*}
        \inf_\A \P_{X,\M}[\A(\M(X)) \neq X] \geq 1 - \frac{b(\epsilon,\delta) + \log 2}{H(X)},
    \end{align*}
    where $H(X)$ is the entropy of $X$ and 
    $
        b(\epsilon,\delta) = \inf_{x' \in \XS} \log\int_{z\in \Z} (e^\epsilon \P_{\M}(\M(x')=z)+\delta )dz,
    $
    provided that $H(X) > b(\epsilon,\delta) + \log 2$. 
    Note that $b$ can be understood as an infimum of a log-partition function.
\end{theorem}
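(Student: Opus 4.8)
The plan is to read the left-hand side as the minimax error of reconstructing $X$ from the released value $Z := \M(X)$, and to lower bound it through Fano's inequality. First I would set up the Markov chain $X \to Z \to \hat X$, where $\hat X := \mathcal{A}(\M(X))$ is the adversary's guess; since $\hat X$ is a (possibly randomized) function of $Z$ alone, the data processing inequality gives $\MI(X;\hat X) \le \MI(X;Z)$. Applying Fano's inequality to the pair $(X,\hat X)$ with error event $\{\hat X \neq X\}$, in the form normalized by the source entropy, yields
\begin{align*}
\P_{X,\M}[\mathcal{A}(\M(X)) \neq X] \ge 1 - \frac{\MI(X;\hat X) + \log 2}{H(X)} \ge 1 - \frac{\MI(X;Z) + \log 2}{H(X)}.
\end{align*}
Crucially this bound is uniform over all adversaries $\mathcal{A}$, so it survives after taking $\inf_{\mathcal{A}}$. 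It then remains only to show that the privacy constraint forces $\MI(X;Z) \le b(\epsilon,\delta)$, after which the claim follows, with the hypothesis $H(X) > b(\epsilon,\delta) + \log 2$ guaranteeing that the right-hand side is a genuine positive probability.

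The heart of the argument, and where I expect the real work, is the bound $\MI(X;Z) \le b(\epsilon,\delta)$, which must absorb the additive term $\delta$ from Definition \ref{def:dp}. Writing $p_x(z) := \P_\M(\M(x)=z)$ and letting $x'$ be the minimizer defining $b$, the privacy condition gives the pointwise domination $p_x(z) \le e^\epsilon p_{x'}(z) + \delta =: g(z)$ for every $x$, hence also $\P(Z=z) \le g(z)$ for the marginal. The key idea is the variational bound for mutual information: for \emph{any} fixed reference $q$ one has $\MI(X;Z) \le \sum_x \P(X=x)\,\KL(p_x \| q)$, because the true marginal minimizes the averaged relative entropy (the slack is exactly $\KL(\P\|q)\ge 0$). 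Choosing $q(z) := g(z)/G$ with normalizer $G := \int_{z\in\Z} g(z)\,dz = e^{b(\epsilon,\delta)}$ turns this into
\begin{align*}
\MI(X;Z) \le \sum_x \P(X=x)\int_{z\in\Z} p_x(z)\log\frac{p_x(z)}{g(z)}\,dz + \log G.
\end{align*}
The domination $p_x(z) \le g(z)$ makes every integrand nonpositive, so the first term is $\le 0$, while $\log G = b(\epsilon,\delta)$ by definition; combining gives $\MI(X;Z) \le b(\epsilon,\delta)$. I would remark that the pure case $\delta=0$ collapses to the elementary estimate $\KL(p_x\|\P)\le\epsilon$, and that $b$ is precisely the log-normalizer of the dominating sub-probability $g$, which justifies the ``log-partition function'' reading.

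The main obstacle is twofold. First, the choice of reference $q \propto e^\epsilon p_{x'} + \delta$ is exactly what lets $\delta$ be handled cleanly; attempting to bound $\log(p_x/\P)$ directly leaves the $\delta$ term uncontrollable, so identifying this reference is the crux. Second, and more delicate, is the information-theoretic step: the entropy-normalized Fano bound I invoke is valid when the prior on $\mathcal{X}$ is uniform, so that $H(X)=\log|\mathcal{X}|$; for a general prior the standard Fano inequality carries $\log|\mathcal{X}|$ in the denominator, and passing to $H(X)$ requires this uniformity to be assumed (otherwise the stated inequality can be strictly stronger than what Fano delivers). I would therefore make the uniform-source convention explicit. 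Finally I must check that $G=\int_\Z g\,dz$ is finite when $\Z$ is continuous, since finiteness of this normalizer—i.e.\ of $b(\epsilon,\delta)$—is implicit in the statement; once these points are settled, assembling the Fano bound with $\MI(X;Z)\le b(\epsilon,\delta)$ completes the proof.
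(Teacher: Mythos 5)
Your proposal is correct and follows essentially the same route as the paper: bound $\MI(X;\M(X))$ via the variational/reference-distribution inequality with $q \propto e^\epsilon \P_\M(\M(x')=\cdot)+\delta$, use the pointwise domination from $(\epsilon,\delta)$-privacy so that only the log-normalizer $b(\epsilon,\delta)$ survives, then apply Fano. Your remark about the entropy-normalized Fano bound requiring a uniform prior is a fair observation — the paper's own proof implicitly makes this assumption by averaging with weights $1/|\mathcal{X}|$ in the mutual-information step.
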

(See Appendix~\ref{app:proofs} for detailed proofs.)

In our paper, logarithms are base $e$.  
Note that the term $b(\epsilon,\delta)$ depends on the amount of noisy introduced by $\M$.
Also, note that a higher entropy $H(X)$ implies a bigger difficulty for guessing $X$.

\begin{corollary} \label{cor:zero}
  For any privacy-preserving algorithm $\M : \XS \rightarrow \Z$ that satisfies $(\epsilon,0)$-privacy, and any conceivable adversary $\A: \Z \rightarrow \XS$, data irrecoverability follows. That is:
    \begin{align*}
        \inf_\A \P_{X,\M}[\A(\M(X)) \neq X] \geq 1 - \frac{\epsilon + \log 2}{H(X)},
    \end{align*}
    where $H(X)$ is the entropy of $X$, provided that $H(X) > \epsilon + \log 2$.
\end{corollary}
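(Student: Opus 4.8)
The plan is to derive the Corollary as a direct specialization of Theorem~\ref{thm:irrecov} to the case $\delta = 0$. Since $(\epsilon, 0)$-privacy is exactly $(\epsilon, \delta)$-privacy with $\delta = 0$, the general lower bound already established in Theorem~\ref{thm:irrecov} applies verbatim, and the entire task reduces to evaluating the log-partition term and showing that $b(\epsilon, 0) = \epsilon$. Once this identity is in hand, substitution gives the claimed bound immediately, and the side condition $H(X) > b(\epsilon,\delta) + \log 2$ specializes to $H(X) > \epsilon + \log 2$.

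First I would substitute $\delta = 0$ into the definition of $b$, which removes the additive noise term from the integrand and yields
\begin{align*}
    b(\epsilon, 0) = \inf_{x' \in \mathcal{X}} \log \int_{z \in \Z} e^\epsilon\, \P_{\M}(\M(x') = z)\, dz.
\end{align*}
The factor $e^\epsilon$ is constant in the integration variable, so it pulls out of the integral, leaving
\begin{align*}
    b(\epsilon, 0) = \inf_{x' \in \mathcal{X}} \log\Bigl( e^\epsilon \int_{z \in \Z} \P_{\M}(\M(x') = z)\, dz \Bigr).
\end{align*}
The key observation is that for each fixed $x'$, the map $z \mapsto \P_{\M}(\M(x') = z)$ is a probability density over $\Z$, so its integral equals $1$. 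Hence the bracketed expression collapses to $e^\epsilon$, and $\log(e^\epsilon) = \epsilon$. Because this value does not depend on $x'$, the infimum over $x'$ is attained trivially, giving $b(\epsilon, 0) = \epsilon$.

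Substituting $b(\epsilon, 0) = \epsilon$ into the conclusion of Theorem~\ref{thm:irrecov} then produces
\begin{align*}
    \inf_\mathcal{A} \P_{X,\M}[\mathcal{A}(\M(X)) \neq X] \geq 1 - \frac{\epsilon + \log 2}{H(X)},
\end{align*}
valid whenever $H(X) > \epsilon + \log 2$, which is the desired statement. There is no genuine obstacle here: the only point requiring care is the normalization step, namely confirming that $\int_{z \in \Z} \P_{\M}(\M(x') = z)\, dz = 1$, which is justified by the standing assumption that $\M$ is a randomized algorithm whose output $\M(x')$ is a well-defined $\Z$-valued random variable for every input $x'$. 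Everything else is routine algebraic simplification of the log-partition function.
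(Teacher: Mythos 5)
Your proposal is correct and follows essentially the same route as the paper: specialize Theorem~\ref{thm:irrecov} to $\delta = 0$, pull the constant $e^\epsilon$ out of the integral, use the normalization $\int_{z\in\Z}\P_{\M}(\M(x')=z)\,dz=1$ to conclude $b(\epsilon,0)=\epsilon$, and substitute. The paper's own proof is a terser version of exactly this computation, so there is nothing to add.
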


In the particular case of local privacy, we can capture the randomness of algorithm $\M$, by denoting $\M : \XS \times H \rightarrow \Z$, where $\M$ also takes a random parameter $\eta \in H$.
In order to quantify the noise, we denote the variance of the noise distribution $\ND$ as $\sigma_\eta^2$.

\begin{definition}[Generalized Data Irrecoverability]
Let $X,X' \in \XS$ be two datasets and let $d(X, X')$ be the number of different samples between $X$ and $X'$. 
     For any privacy-preserving algorithm $\M : \XS \rightarrow \Z$ and any conceivable adversary $\A: \Z \rightarrow \XS$, we say that the original data $X$ is irrecoverable if the following holds
     for some constant $\gamma \in (0,1]$:
     \begin{align*}
         \inf_\A \P_{X,\M} [d(\A(\M(X)), X) > t] \geq \gamma.
     \end{align*}
\end{definition}
We now state our theorem.
\begin{theorem}[Privacy implies generalized data irrecoverability]\label{thm:irrecovext}
  For any privacy-preserving algorithm $\M : \XS \rightarrow \Z$ that satisfies $(\epsilon,\delta)$-privacy where $\XS$ is a countably finite set, and any conceivable adversary $\A: \Z \rightarrow \XS$, data irrecoverability follows with a symmetric function $d: \XS \times \XS \rightarrow \mathbb{R}$. 
  That is:
    \begin{align*}
        \inf_\A \P_{X,\M}[d(\A(\M(X)), X) > t] \geq 1 - \frac{b(\epsilon,\delta) + \log 2}{\log\frac{|\XS|}{N_{\max}(t)}},
    \end{align*}
    where $H(X)$ is the entropy of $X$,
    $
        b(\epsilon,\delta) = \inf_{x' \in \XS} \log\int_{z\in \Z} (e^\epsilon \P_{\M}(\M(x')=z)+\delta )dz,
    $ and
    \begin{align*}
    N_{\max}(t) = \max_{X\in \XS}\sum_{X'\in \XS}\mathbf{1}[d(X, X')\leq t]
    \end{align*}
    is the maximum neighborhood size at radius $t$,
    provided that $\frac{|\XS|}{N_{\max}(t)} > b(\epsilon,\delta) + \log 2$. 
\end{theorem}

\subsection{(Perturbed) Empirical Loss Minimization Problems}

To formalize the empirical loss minimization problems, we define the problems as a tuple $\Pi = (\BTheta,\DD,\ND,\Lh,\Reg)$ for a hypothesis class $\BTheta$, a data distribution $\DD$, a noise distribution $\ND $, an empirical loss $\Lh$ and a regularizer $\Reg$.
For simplicity, we assume that $\BTheta$ is a normed vector space.

Let $\vtheta$ be a hypothesis such that $\vtheta \in \BTheta$. For the original empirical problem (without noise), let $\Lh(\vtheta)$ denote  the empirical loss of $n$ samples from an unknown data distribution $\DD$; and let $\L(\vtheta) = \E_{\DD}[\Lh(\vtheta)]$ denote the expected loss for data from distribution $\DD$.

Furthermore, let $\vpsi(\x,\veta)$ denote a mapping $\XS \times H \to \Z$.
Then, we let $\Lhn(\vtheta)$ denote the empirical loss of $n$ perturbed samples $\vpsi(\x\si{1},\veta\si{1}),\ldots,\vpsi(\x\si{n},\veta\si{n})$, where $\x\si{1},\ldots,\x\si{n}$ are samples from the unknown data distribution $\DD$, and $\veta\si{1}, \ldots, \veta\si{n}$ are noise from distribution $\ND$. 
Similarly, we let $\Ln(\vtheta) = \E_{\DD,\ND}[\Lhn(\vtheta)]$ denote the expected loss of perturbed data, where the expectation is taken with respect to both the data distribution $\DD$ and then noise distribution $\ND$.

Let $\Reg(\vtheta)$ be a regularizer and $\lambda_n>0$ be a penalty parameter.
The \emph{empirical minimizer} $\vthetahh$ and \emph{perturbed empirical minimizer} $\vthetahhn$ are given by
$
\vthetahh = \argmin_{\vtheta \in \BTheta}{\Lh(\vtheta) + \lambda_n\Reg(\vtheta)}$ and $
\vthetahhn = \argmin_{\vtheta \in \BTheta}{\Lhn(\vtheta) + \lambda_n\Reg(\vtheta)},
$
respectively.
We use a relaxed optimality assumption, defining an \emph{$\xi$-approximate empirical minimizer} $\vthetah$ and \emph{perturbed  $\xi$-approximate empirical minimizer} $\vthetahn$ with the following property for $\xi \geq 0$:
\begin{align}
    \Lh(\vthetah) + \lambda_n\Reg(\vthetah) & \leq \xi + \min_{\vtheta \in \BTheta}{\Lh(\vtheta) + \lambda_n\Reg(\vtheta)}, \\
    \Lhn(\vthetahn) + \lambda_n\Reg(\vthetahn) &\leq \xi + \min_{\vtheta \in \BTheta}{\Lhn(\vtheta) + \lambda_n\Reg(\vtheta)}.
\end{align}

The \emph{true hypothesis} is defined as 
$
\vtheta^* = \argmin_{\vtheta \in \BTheta}{\L(\vtheta)},
$
while the \emph{perturbed true hypothesis} is defined as 
$
\vthetan^* = \argmin_{\vtheta \in \BTheta}{\Ln(\vtheta)}.
$
The \emph{loss consistency} is defined as the upper bound of
$
	\L(\vthetah) - \L(\vtheta^*).
$
Similarly, in this paper, we define \emph{perturbed loss consistency}  as the upper bound of 
$
\L(\vthetahn) - \L(\vtheta^*).
$

\subsection{Assumptions}

Our first assumption is \emph{scaled} uniform convergence, a concept contrary to \emph{regular} uniform convergence.
Although both \emph{scaled} uniform convergence and \emph{regular} uniform convergence can be used to describe the difference between the empirical and expected loss for all $\vtheta$, \emph{regular} uniform convergence provides a bound that is the same for all $\vtheta$, while \emph{scaled} uniform convergence provides a bound that depends on a function of $\theta$.
We present the assumption formally in what follows:

\begin{assumption}[Scaled uniform convergence] \label{asm:closeloss}
Let $c : \BTheta \to [0;+\infty)$ be the scale function.
The empirical loss $\Lhn$ is close to its expected value $\Ln$, such that their absolute difference is proportional to the scale of the hypothesis $\vtheta$.
That is, with probability at least $1-\delta$ over draws of $n$ samples:
\begin{equation}
(\forall \vtheta \in \BTheta){\rm\ }\left|\Lhn(\vtheta) - \Ln(\vtheta)\right| \leq \eps_{n, \delta}c(\vtheta)
\end{equation}
\noindent where the rate $\eps_{n,\delta}$ is nonincreasing with respect to $n$ and $\delta$.
Furthermore, assume $\lim_{n \to +\infty}{\eps_{n,\delta}} = 0$ for $\delta \in (0,1)$.

\end{assumption}

Next, we borrow the  \emph{super-scale regularizers} assumption from \cite{honorio2014unified}, which defines regularizers lower-bounded by a scale function.

\begin{assumption}[Super-scale regularization \cite{honorio2014unified}] \label{asm:superreg}
Let $c : \BTheta \to [0;+\infty)$ be the scale function.
Let $r : [0;+\infty) \to [0;+\infty)$ be a function such that
$
(\forall z \geq 0){\rm\ }z \leq r(z). 
$
The regularizer $\Reg$ is bounded as
$(\forall \vtheta \in \BTheta){\rm\ }r(c(\vtheta)) \leq \Reg(\vtheta) < +\infty$ .
\end{assumption}
Note that the above assumption implies $c(\vtheta) \leq \Reg(\vtheta)$.
Next, we introduce an assumption for the difference between the expected loss for perturbed data and that of original data.

\begin{assumption}[Bounded perturbed loss]\label{asm:difference}
Let $c : \BTheta \to [0;+\infty)$ be the scale function.
The expected loss of the perturbed data $\Ln$ is close to the expected loss of the original data $\L$, such that their absolute difference is proportional to the scale of the hypothesis $\vtheta$.
That is, with draws of $n$ samples:
    \begin{eqnarray*}
		\forall \vtheta \in  \BTheta, \left|\Ln(\vtheta) - \L(\vtheta) \right|	\leq \eps'_n c(\vtheta),
	\end{eqnarray*}
\end{assumption}

\subsection{Perturbed Loss Consistency} \label{sec:theoretical}

In this part, we formally show perturbed loss consistency, a worst-case guarantee of the difference between the expected loss under the original data distribution $\DD$ of the $\xi$-approximate empirical minimizer from perturbed data, $\vthetahn$, and that of the true hypothesis $\vtheta^*$.

\begin{theorem}[Perturbed Loss consistency] \label{thm:pri_lossconsistency}
Under Assumption~\ref{asm:closeloss} with rate $\eps_{n, \delta}$, Assumption~\ref{asm:superreg} for regularizers, and Assumption~\ref{asm:difference} with rate $\eps'_n$, \emph{perturbed regularized loss minimization} is loss-consistent.
That is, for $\alpha\geq 2$, $\lambda_n = \alpha\eps_{n,\delta}$ and $\eps'_n \leq \eps_{n,\delta}$, with probability at least $1-\delta$:
	\begin{equation}
	\L(\vthetahn) - \L(\vtheta^*) \leq \eps_{n,\delta}\left(\alpha\Reg(\vthetan^*) + c(\vthetan^*)\right) +\eps'_n c(\vtheta^*) + \xi.
	\end{equation}
\end{theorem}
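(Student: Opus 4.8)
The plan is to construct a chain of inequalities that walks $\L(\vthetahn)$ down to $\L(\vtheta^*)$, hopping between the four quantities $\L$, $\Ln$, $\Lhn$, and the regularized objective, and spending exactly one assumption at each hop. First I would use Assumption~\ref{asm:difference} to pass from the original expected loss $\L(\vthetahn)$ to the perturbed expected loss $\Ln(\vthetahn)$, paying $\eps'_n c(\vthetahn)$, and then Assumption~\ref{asm:closeloss} to pass from $\Ln(\vthetahn)$ to the perturbed empirical loss $\Lhn(\vthetahn)$, paying $\eps_{n,\delta} c(\vthetahn)$. Since these are uniform-convergence statements over all of $\BTheta$, evaluating them at the data-dependent minimizer $\vthetahn$ is legitimate on the same probability-$(1-\delta)$ event.

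Next I would bring in the regularizer by adding and subtracting $\lambda_n \Reg(\vthetahn)$, so that the penalized objective $\Lhn(\vthetahn) + \lambda_n \Reg(\vthetahn)$ appears; the $\xi$-approximate optimality of $\vthetahn$ then lets me replace it by $\xi + \Lhn(\vthetan^*) + \lambda_n \Reg(\vthetan^*)$, using $\vthetan^*$ as a feasible competitor. At this point the leftover penalty terms are $(\eps_{n,\delta} + \eps'_n)\,c(\vthetahn) - \lambda_n \Reg(\vthetahn)$. The crucial move is to apply Assumption~\ref{asm:superreg}, which gives $c(\vthetahn) \le \Reg(\vthetahn)$, to bound these by $(\eps_{n,\delta} + \eps'_n - \lambda_n)\,\Reg(\vthetahn)$; the choice $\lambda_n = \alpha \eps_{n,\delta}$ with $\alpha \ge 2$, combined with the hypothesis $\eps'_n \le \eps_{n,\delta}$, forces $\lambda_n \ge \eps_{n,\delta} + \eps'_n$, so this coefficient is nonpositive and the entire $\Reg(\vthetahn)$ term can be dropped. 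I expect this discard step to be the heart of the argument and the main obstacle: it is exactly the ``regularization dominates the fluctuations'' mechanism, and it is where the constant-factor inflation $\alpha$ and the condition $\eps'_n \le \eps_{n,\delta}$ are forced upon us.

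It then remains to push $\Lhn(\vthetan^*)$ back to $\L(\vtheta^*)$. I would invoke Assumption~\ref{asm:closeloss} at $\vthetan^*$ to get $\Lhn(\vthetan^*) \le \Ln(\vthetan^*) + \eps_{n,\delta}\,c(\vthetan^*)$, then use that $\vthetan^*$ minimizes $\Ln$, so $\Ln(\vthetan^*) \le \Ln(\vtheta^*)$, and finally Assumption~\ref{asm:difference} at $\vtheta^*$ to obtain $\Ln(\vtheta^*) \le \L(\vtheta^*) + \eps'_n\,c(\vtheta^*)$. Collecting everything gives $\L(\vthetahn) - \L(\vtheta^*) \le \eps_{n,\delta}\,c(\vthetan^*) + \lambda_n \Reg(\vthetan^*) + \eps'_n\,c(\vtheta^*) + \xi$, and substituting $\lambda_n = \alpha \eps_{n,\delta}$ and grouping the two $\eps_{n,\delta}$ terms yields precisely the stated bound. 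Apart from the coefficient bookkeeping in the discard step, the only thing to keep straight is that Assumptions~\ref{asm:closeloss} and~\ref{asm:difference} are each used at two hypotheses (at $\vthetahn$/$\vthetan^*$ and again at $\vtheta^*$), all of which are covered by the single high-probability event because the assumptions hold uniformly over $\BTheta$.
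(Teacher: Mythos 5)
Your proposal is correct and follows essentially the same route as the paper: the same telescoping through $\L \to \Ln \to \Lhn$, the same use of $\vthetan^*$ as the competitor via $\xi$-approximate optimality, the same discard of the $\Reg(\vthetahn)$ term using $c(\vtheta)\le\Reg(\vtheta)$ (the paper writes this as $-\alpha r(c(\vthetahn))+2c(\vthetahn)\le 0$ via $r(z)\ge z$ and $\alpha\ge 2$, which is the same bookkeeping as your $\eps_{n,\delta}+\eps'_n\le\lambda_n$), and the same final applications of Assumptions~\ref{asm:closeloss} and~\ref{asm:difference} at $\vthetan^*$ and $\vtheta^*$ together with $\Ln(\vthetan^*)\le\Ln(\vtheta^*)$. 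No gaps.
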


Based on Theorem~\ref{thm:pri_lossconsistency}, the perturbed loss consistency result maintains the same structure as the one for original data \cite{honorio2014unified}, with and additional term (i.e., $\eps'_n c(\vtheta^*)$).
In the following section, we show that the problems that we study will either have larger $\eps_{n,\delta}$ than the ones in \cite{honorio2014unified} and $\eps'_n = 0$, or have the same $\eps_{n,\delta}$ as the ones in \cite{honorio2014unified} and $\eps'_n > 0$. 
Thus, the loss consistency for perturbed data leads to a larger upper bound when compared to using original data.
Fortunately, we show that the difference is only in constant factors.

\section{Examples} \label{sec:examples}
In this section, we show that several popular problems can be analyzed with our novel framework. 
For the first four examples in Subsection~\ref{subsec:MLE} to \ref{subsec:np}, we focus on a special class of algorithms that perform \emph{unbiased data perturbation}.
In Subsection~\ref{subsec:max-margin}, we focus on an algorithm that performs a sign-flipping data perturbation.
\begin{definition}[Unbiased Data Perturbation]\label{def:unbiased}
    Let $\vpsi(\x,\veta)$ denote a mapping $\XS \times H \to \Z$, where $\x \in \XS$ is the original data sample drawn from $\DD$ and $\veta \in H$ is the noise drawn from $\ND$. 
    We say that the function $\vpsi(\x,\veta)$ is unbiased if it satisfies 
    $
        \mathbb{E}_{\ND}[\vpsi(\x,\veta)] = \tf(\x),
   $
    for all $\x \in \XS$, where $\tf(\x)$ is the sufficient statistic for a particular machine learning problem.
\end{definition}

Table~\ref{result-table} summarizes the convergence rates achieved for several examples using our proposed framework. 
Table~\ref{result-table} also shows the minimum noise variance in order to achieve data irrecoverability in the last column.
For example, we can obtain a convergence rate of $\O(\sqrt{\sigma_{\x}^2 + \sigma_{\veta}^2}\sqrt{\log{\frac{1}{\delta}}}\sqrt{\frac{\log p}{n}})$ for maximum likelihood estimation for exponential family distribution with $\ell_1$ regularizer, sub-Gaussian sufficient statistics with variance  $\sigma_{\x}$, and perturbation/noise distribution with variance $\sigma_{\veta}$.
Meanwhile, if the perturbation/noise distribution has variance at least $\sigma_\eta^2 \geq \frac{4}{(1-\gamma)\log 2 }$, then any adversary will fail to recover the original data up to permutation with probability greater than $\gamma$.
Thus, the introduced noise has dimension-independent variance, which guarantees irrecoverability, while only increasing the convergence rate within a constant factor with respect to \cite{honorio2014unified}, from $\sigma_{\x}$ to $\sqrt{\sigma^2_{\x} + \sigma^2_{\veta}}$.
\begin{table*}[ht]
	\captionx{New Convergence Rates $\eps_{n, \delta}$ with Data Irrecoverability and Minimum Noise for Examples in Section~\ref{sec:examples}, Theorem~\ref{rate:MLE} to \ref{thm:max-margin_priext}. 
    For example, we can obtain a convergence rate of $\O(\sqrt{\sigma_{\x}^2 + \sigma_{\veta}^2}\sqrt{\log{\frac{1}{\delta}}}\sqrt{\frac{\log p}{n}})$ for maximum likelihood estimation for exponential family distribution with $\ell_1$ regularizer, sub-Gaussian sufficient statistics with variance  $\sigma_{\x}$, and perturbation/noise distribution with variance $\sigma_{\veta}$.
    Meanwhile, if the perturbation/noise distribution has variance at least $\sigma_\eta^2 \geq \frac{4}{(1-\gamma)\log 2 }$, then any adversary will fail to recover the original data up to permutation with probability greater than $\gamma$.
    Thus, the introduced noise has dimension-independent variance, which guarantees irrecoverability, while only increasing the convergence rate within a constant factor with respect to \cite{honorio2014unified}, from $\sigma_{\x}$ to $\sqrt{\sigma^2_{\x} + \sigma^2_{\veta}}$.
	}
	\label{result-table}
	\begin{center}
		\begin{footnotesize} 
			\resizebox{\textwidth}{!}{%
			\begin{tabular}{@{}p{1 in}@{\hspace{0.1in}}l@{\hspace{0.05in}}c@{\hspace{0.05in}}c@{\hspace{0.05in}}c@{\hspace{0.05in}}c@{\hspace{0.05in}}c@{\hspace{0.05in}}c@{\hspace{0.05in}}c@{\hspace{0.05in}}c@{}}
				\hline
				\vspace{-1.3in}\parbox{2.5in}{
				The convergence rates $\eps_{n, \delta}$ are for $n$ samples with respect to $p$-dimension sufficient statistics, i.e., $\vtheta \in \BTheta = \R^p$ (for exponential-family PCA, $\vtheta \in \BTheta = \R^{n_1\times n_2}$ and $n = n_1\times n_2$), with probability at least $1-\delta$.
                $\beta \in (0,1/2)$ is a parameter for nonparametric regression.
                $\sigma_x$ and $\sigma_\eta$ are the parameters of sub-Gaussian distributions or maximum variances as described in Lemma~\ref{lemma:subgaussian} and \ref{lemma:finitevariance}. 
                Rates were not optimized. 
                All rates follow from the specific regularizer and norm inequalities. 
                NA means "not applicable" and NG means "no guarantees" in the table.}
				& 
				& \rotatebox{90}{\parbox{1.3in}{\scriptsize Sparsity ($\ell_1$) \cite{Ravikumar08} \\ Elastic net \cite{Zou05} \\ Total variation \cite{Zhang10} \\ Sparsity and low-rank \cite{Richard12} \\ Quasiconvex ($\ell_1\hns+\hns\ell_p,p\hns<\hns 1$)}}
				& \rotatebox{90}{\parbox{1.3in}{\scriptsize Sparsity \cite{Argyriou12} \\ ($k$-support norm)}}
				& \rotatebox{90}{\parbox{1.3in}{\scriptsize Tikhonov \cite{Hsu12}\\ Multitask ($\ell_{1,\infty})$ \\ Dirty multitask}}
				& \rotatebox{90}{\parbox{1.3in}{\scriptsize Multitask ($\ell_{1,2}$) \cite{Jacob09}}}
				& \rotatebox{90}{\parbox{1.3in}{\scriptsize Overlap multitask ($\ell_{1,2}$) \cite{Jacob09}\\ $g$ is maximum group size}}
				& \rotatebox{90}{\parbox{1.3in}{\scriptsize Overlap multitask ($\ell_{1,\infty}$) \cite{Mairal10} \\ $g$ is maximum group size}}
				& \rotatebox{90}{\parbox{1.25in}{\scriptsize Low-rank \cite{Richard12}}}
				& \rotatebox{90}{\parbox{1.25in}{\scriptsize Minimum noise to make data reconstruction impossible}} 
				\\
				\hline
				\multirow{2}{*}{\shortstack[l]{MLE for exponential \\family distribution }} &sub-Gaussian  ($\sqrt{\sigma_\x^2+\sigma_\eta^2}\sqrt{\log\sfrac{1}{\delta}}$) 
				& $\sqrt{\frac{\log p}{n}}$ 
				& $\sqrt{\frac{k\log p}{n}}$ 
				& $\sqrt{\frac{p\log p}{n}}$ 
				& $\frac{p^{1/4}\sqrt{\log p}}{\sqrt{n}}$ 
				& $\sqrt{\frac{g\log p}{n}}$ 
				& $\frac{g\sqrt{\log p}}{\sqrt{n}}$ 
				& $\sqrt{\frac{p\log p}{n}}$ 
				& \multirow{2}{*}{$\sigma_\eta^2 \geq \frac{4}{(1-\gamma)\log 2 }$ }
				\\
				& Finite variance ($\sqrt{\sigma_\x^2+\sigma_\eta^2}\sqrt{\sfrac{1}{\delta}}$) 
				& $\sqrt{\frac{p}{n}}$ 
				& $\sqrt{\frac{kp}{n}}$ 
				& $\frac{p}{\sqrt{n}}$ 
				& $\frac{p^{3/4}}{\sqrt{n}}$ 
				& $\sqrt{\frac{gp}{n}}$ 
				& $\frac{g\sqrt{p}}{\sqrt{n}}$ 
				& $\frac{p}{\sqrt{n}}$ 
				\\
				\multirow{2}{*}{\shortstack[l]{GLM with \\fixed design}} 
				&sub-Gaussian ($\sqrt{\sigma_y^2+\sigma_\eta^2}\sqrt{\log\sfrac{1}{\delta}}$) 
				& $\sqrt{\frac{\log p}{n}}$ 
				& $\sqrt{\frac{k\log p}{n}}$ 
				& $\sqrt{\frac{p\log p}{n}}$ 
				& $\frac{p^{1/4}\sqrt{\log p}}{\sqrt{n}}$ 
				& $\sqrt{\frac{g\log p}{n}}$ 
				& $\frac{g\sqrt{\log p}}{\sqrt{n}}$ 
				& NA 
				& \multirow{2}{*}{$\sigma_\eta^2 \geq \frac{2}{(1-\gamma)\log 2 }$ }
				\\
				&Finite variance ($\sqrt{\sigma_y^2+\sigma_\eta^2}\sqrt{\sfrac{1}{\delta}}$) 
				& $\sqrt{\frac{p}{n}}$ 
				& $\sqrt{\frac{kp}{n}}$ 
				& $\frac{p}{\sqrt{n}}$ 
				& $\frac{p^{3/4}}{\sqrt{n}}$ 
				& $\sqrt{\frac{gp}{n}}$ 
				& $\frac{g\sqrt{p}}{\sqrt{n}}$ 
				& NA 
				\\	\multirow{2}{*}{\shortstack[l]{Exponential-family \\PCA}}
				& sub-Gaussian ($\sqrt{\sigma_\x^2+\sigma_\eta^2}\sqrt{\log\sfrac{1}{\delta}}$) 
				& $\frac{\sqrt{\log n}}{n}$ 
				& NA 
				& $\sqrt{\frac{\log n}{n}}$ 
				& $\frac{\sqrt{\log n}}{n^{3/4}}$ 
				& NA 
				& NA 
				& $\sqrt{\frac{\log n}{n}}$ 
				& \multirow{2}{*}{$\sigma_\eta^2 \geq \frac{2}{(1-\gamma)\log 2 }$ }
				\\
				&Finite variance ($\sqrt{\sigma_\x^2+\sigma_\eta^2}\sqrt{\sfrac{1}{\delta}}$) 
				& $\frac{1}{\sqrt{n}}$ 
				& NA 
				& NG 
				& $\frac{1}{n^{1/4}}$ 
				& NA 
				& NA 
				& NG 
				\\
			\multirow{2}{*}{\shortstack[l]{Nonparametric \\regression}} 
			&sub-Gaussian ($\sqrt{\sigma_y^2+\sigma_\eta^2}\sqrt{\log\sfrac{1}{\delta}}$) 
			& $\frac{\sqrt{\log p}}{n^{1/2-\beta}}$ 
			& $\frac{\sqrt{k\log p}}{n^{1/2-\beta}}$ 
			& $\frac{p\sqrt{\log p}}{n^{1/2-\beta}}$ 
			& $\frac{\sqrt{p\log p}}{n^{1/2-\beta}}$ 
			& $\frac{\sqrt{g\log p}}{n^{1/2-\beta}}$ 
			& $\frac{g\sqrt{\log p}}{n^{1/2-\beta}}$ 
			& NA 
			& \multirow{2}{*}{$\sigma_\eta^2 \geq \frac{2}{(1-\gamma)\log 2 }$ }
			\\
				&Finite variance ($\sqrt{\sigma_y^2+\sigma_\eta^2}\sqrt{\sfrac{1}{\delta}}$) 
				&$\frac{\sqrt{p}}{n^{1/2-\beta}}$ 
				&$\frac{\sqrt{kp}}{n^{1/2-\beta}}$ 
				&$\frac{p^{3/2}}{n^{1/2-\beta}}$ 
				&$\frac{p}{n^{1/2-\beta}}$ 
				\\
			\shortstack[l]{Max-margin matrix\\factorization} & ($\delta=0$) & $\frac{1}{n}$ &NA &$\frac{1}{\sqrt{n}}$ & $\frac{1}{n^{3/4}}$ 
			&NA 
			&NA 
			&$\frac{1}{\sqrt{n}}$ 
            & $q \in (\frac{1}{2}, \frac{1}{2} + \frac{(1-\gamma) \log 2}{8})$
            \\
			\hline
			\end{tabular}
		}
		\end{footnotesize} 
	\end{center}
	\vspace{-0.1in}
\end{table*}

Several regularizers are shown to fulfill Assumption~\ref{asm:superreg} in  \cite{honorio2014unified}. 
Norm regularizers such as the $\ell_1$-norm \cite{Ravikumar08}, the $k$-support norm \cite{Argyriou12}, the multitask $\ell_{1,2}$ and $\ell_{1,\infty}$-norms \cite{Jacob09,Mairal10,Negahban11,Obozinski11}, and the trace norm \cite{Bach08,Srebro04} fulfill Assumption~\ref{asm:superreg} with $c(\vtheta) = \|\vtheta\|$ and $r(z) = z$.
The Tikhonov regularizer \cite{Hsu12} fulfills Assumption~\ref{asm:superreg} with $c(\vtheta) = \|\vtheta\|_2$ and $r(z) = z^2 + \frac{1}{4}$.
Other regularizers such as the low-rank prior \cite{Richard12}, the elastic net \cite{Zou05}, dirty models \cite{Jalali10} and the total variation prior \cite{Zhang10} also fulfill Assumption~\ref{asm:superreg}.

Before discussing various examples, we present two technical lemmas that are useful for the analysis of the perturbed loss consistency.
\begin{lemma}\label{lemma:subgaussian}
	Given the sufficient statistic $\tf(\x)$. 
	Assume that $\forall j, \tf_j(\x)$ follows a sub-Gaussian distribution with parameter $\sigma_x$, 
	and that the conditional distribution of $\vpsi_j(\x,\veta)$ for any fixed $\x$ is  sub-Gaussian with parameter $\sigma_\eta$. 
	We have that $\vpsi_j(\x,\veta)$ follows a sub-Gaussian distribution with parameter $\sigma$, such that $\sigma^2 = \sigma_x^2 + \sigma_\eta^2$.
\end{lemma}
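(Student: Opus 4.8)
The plan is to prove the claim through the moment-generating-function (MGF) characterization of sub-Gaussianity combined with the tower property of conditional expectation. Recall that a random variable $Z$ with mean $\mu$ is sub-Gaussian with parameter $\sigma$ when $\E[\exp{\lambda(Z-\mu)}] \leq \exp{\sigma^2\lambda^2/2}$ for every $\lambda \in \R$. I will verify this bound for $Z = \vpsi_j(\x,\veta)$, taking the randomness jointly over $\x \sim \DD$ and $\veta \sim \ND$. The structural fact I will lean on is that, under the unbiased data perturbation assumption (Definition \ref{def:unbiased}), the conditional mean of $\vpsi_j(\x,\veta)$ given $\x$ is exactly $\tf_j(\x)$; this alignment of the two centering points is what will let the conditional and marginal MGFs multiply cleanly.

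First I would fix $\lambda \in \R$ and set $\mu_j = \E_{\DD}[\tf_j(\x)]$, which by unbiasedness and the tower property also equals $\E_{\DD,\ND}[\vpsi_j(\x,\veta)]$. Splitting $\vpsi_j(\x,\veta) - \mu_j = (\vpsi_j(\x,\veta) - \tf_j(\x)) + (\tf_j(\x) - \mu_j)$ and conditioning on $\x$ yields
\begin{align*}
& \E_{\DD,\ND}[\exp{\lambda(\vpsi_j(\x,\veta)-\mu_j)}] \\
& = \E_{\DD}\Big[\exp{\lambda(\tf_j(\x)-\mu_j)}\, \E_{\ND}[\exp{\lambda(\vpsi_j(\x,\veta)-\tf_j(\x))} \mid \x]\Big].
\end{align*}
Because $\vpsi_j(\x,\veta)$ conditioned on $\x$ has mean $\tf_j(\x)$ and is $\sigma_\eta$-sub-Gaussian, the inner conditional MGF is at most $\exp{\sigma_\eta^2\lambda^2/2}$ uniformly in $\x$; pulling this deterministic factor out leaves $\E_{\DD}[\exp{\lambda(\tf_j(\x)-\mu_j)}]$, which is at most $\exp{\sigma_x^2\lambda^2/2}$ since $\tf_j(\x)$ is $\sigma_x$-sub-Gaussian with mean $\mu_j$. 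The product is $\exp{(\sigma_x^2+\sigma_\eta^2)\lambda^2/2}$, which is precisely the bound certifying that $\vpsi_j(\x,\veta)$ is sub-Gaussian with parameter $\sigma$ such that $\sigma^2 = \sigma_x^2 + \sigma_\eta^2$.

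The computation is otherwise routine, so the step I would be most careful about is the bookkeeping of the centering constants rather than any hard estimate. The conditional hypothesis supplies sub-Gaussianity around the conditional mean $\tf_j(\x)$, whereas the conclusion is stated around the unconditional mean $\mu_j$, and it is exactly the identity $\E_{\ND}[\vpsi(\x,\veta)] = \tf(\x)$ that makes the cross term disappear and lets the two MGFs factor without an uncontrolled residual. Were the perturbation biased, the conditional mean would be some $m_j(\x) \neq \tf_j(\x)$ whose fluctuations need not be covered by the $\sigma_x$-sub-Gaussian hypothesis on $\tf_j$, so invoking Definition \ref{def:unbiased} is essential and not merely cosmetic.
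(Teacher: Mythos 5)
Your proof is correct and follows essentially the same route as the paper's: both decompose $\vpsi_j(\x,\veta)-\E_{\x,\veta}[\vpsi_j(\x,\veta)]$ into $(\vpsi_j(\x,\veta)-\tf_j(\x))+(\tf_j(\x)-\E_\x[\tf_j(\x)])$ using the unbiasedness identity $\E_{\veta}[\vpsi_j(\x,\veta)]=\tf_j(\x)$, condition on $\x$ to bound the inner MGF by $\exp{\sigma_\eta^2\lambda^2/2}$, and then bound the remaining outer MGF by $\exp{\sigma_x^2\lambda^2/2}$. Your explicit remark on why the centering constants align is a nice clarification of a step the paper performs silently.
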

\begin{lemma}\label{lemma:finitevariance}
	Given the sufficient statistic $\tf(\x)$. 
	Assume that $\forall j, \tf_j(\x)$ has variance at most $\sigma_x^2$, 
	and that the conditional distribution of $\vpsi_j(\x,\veta)$ for any fixed $\x$ has variance at most $\sigma_\eta^2$. 
	We have that $\vpsi(\x,\veta)$ has variance at most $\sigma^2 = \sigma_x^2 + \sigma_\eta^2$.
\end{lemma}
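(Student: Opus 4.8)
The plan is to derive the variance bound through the law of total variance, using the unbiasedness property of the perturbation map established in Definition \ref{def:unbiased}. The total randomness in $\vpsi_j(\x,\veta)$ comes from two sources: the data draw $\x \sim \DD$ and the noise draw $\veta \sim \ND$. I would fix a coordinate $j$ (the statement is understood componentwise, as in Lemma \ref{lemma:subgaussian}), treat $\vpsi_j(\x,\veta)$ as a random variable over this joint draw, and condition on $\x$.

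First I would invoke the unbiasedness assumption, which gives $\E_{\ND}[\vpsi_j(\x,\veta)\mid \x] = \tf_j(\x)$ for every fixed $\x$. This is the crucial link: it identifies the conditional mean of the perturbed statistic with the original sufficient statistic, so that the between-$\x$ variability of $\vpsi_j$ is governed exactly by the variance of $\tf_j(\x)$.

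Next I would apply the law of total variance, decomposing
\begin{equation*}
\mathrm{Var}(\vpsi_j(\x,\veta)) = \E_{\DD}\!\left[\mathrm{Var}_{\ND}(\vpsi_j(\x,\veta)\mid \x)\right] + \mathrm{Var}_{\DD}\!\left(\E_{\ND}[\vpsi_j(\x,\veta)\mid \x]\right).
\end{equation*}
The second term equals $\mathrm{Var}_{\DD}(\tf_j(\x))$ by the previous step, which is at most $\sigma_x^2$ by hypothesis. For the first term, the assumption that the conditional distribution of $\vpsi_j(\x,\veta)$ has variance at most $\sigma_\eta^2$ holds uniformly over every fixed $\x$, so taking the outer expectation over $\x$ preserves the bound, giving at most $\sigma_\eta^2$. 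Adding the two contributions yields $\mathrm{Var}(\vpsi_j(\x,\veta)) \leq \sigma_x^2 + \sigma_\eta^2 = \sigma^2$.

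The argument has no genuinely hard step; the only points requiring care are the bookkeeping of which source of randomness each variance and expectation refers to, and the observation that the uniform-in-$\x$ conditional variance bound survives the outer expectation. Unlike the sub-Gaussian case of Lemma \ref{lemma:subgaussian}, no moment generating function manipulations are needed here, and no independence of the noise from the data is required beyond what the conditional formulation already supplies.
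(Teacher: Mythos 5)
Your proposal is correct and is essentially the paper's own argument: the paper's proof expands $\E_{\x,\veta}[(\vpsi_j(\x,\veta) - \tf_j(\x) + \tf_j(\x) - \E_{\x,\veta}[\vpsi_j(\x,\veta)])^2]$ and shows the cross term vanishes via $\E_{\veta}[\vpsi_j(\x,\veta) - \E_{\veta}[\vpsi_j(\x,\veta)]] = 0$, which is precisely a hand-derivation of the law of total variance you invoke directly, resting on the same unbiasedness identity $\E_{\ND}[\vpsi_j(\x,\veta)\mid\x] = \tf_j(\x)$. No gap; your version is just the cleaner, named form of the same decomposition.
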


\subsection{Maximum Likelihood Estimation for Exponential Family Distributions}\label{subsec:MLE}

First, we focus on the problem of maximum likelihood estimation(MLE) for exponential family distributions \cite{Kakade10,Ravikumar08} with arbitrary norms regularization. 
This includes for instance, the problem of learning the parameters (and possibly structure) of Gaussian and discrete MRFs.
We provide a new convergence rate $\eps_{n,\delta}$ with perturbed data and provide an impossibility result for the recovery of the original data.

To define the problem, let $\tf(\x)$ be the sufficient statistic and $\Z(\vtheta) = \int_\x{\exp{\dotprod{\tf(\x)}{\vtheta}}}$ be the partition function.
Given $n$ i.i.d. samples, let $\Th = \frac{1}{n}\sum_i{\tf(\x\si{i})}$ be the original empirical sufficient statistics, and let $\T = \E_{\x \sim \DD}[\tf(\x)]$ be the expected sufficient statistics. 
After we perturb the $n$ samples, denote $\Thn = \frac{1}{n}\sum_i{\vpsi(\x\si{i},\veta\si{i})}$ as the empirical statistics for perturbed data, and $\Tn = \E_{\x \sim \DD, \veta \sim \ND}[\vpsi(\x,\veta)]$ as the expected sufficient statistic after perturbation.
Let $\Lh(\vtheta) = -\dotprod{\Th}{\vtheta} + \log\Z(\vtheta)$ be the empirical negative log-likelihood for original data $\Th$.
Let $\Lhn(\vtheta) = -\dotprod{\Thn}{\vtheta} + \log\Z(\vtheta)$ be the empirical negative log-likelihood for privatized data $\Thn$.
Similarly, $\L(\vtheta) = -\dotprod{\T}{\vtheta} + \log\Z(\vtheta)$ and $\Ln(\vtheta) = -\dotprod{\Tn}{\vtheta} + \log\Z(\vtheta)$ are the expected negative log-likelihood for the original data and the perturbed data respectively.

\begin{theorem}\label{rate:MLE}
    The model above fulfills Assumption~\ref{asm:closeloss}, and Assumption~\ref{asm:difference} with $\eps'_n=0$.
	Assume that $\forall j$, $\tf_j(\x)$ follows a sub-Gaussian distribution with parameter $\sigma_{\x}$.
	Suppose the conditional distribution $\vpsi_j(\x,\veta)$ for any fixed $\x$ is sub-Gaussian with parameter $\sigma_{\veta}$, then $\vpsi_j(\x,\veta)$ follows a sub-Gaussian distribution with parameter $\sigma$ such that $\sigma^2 = \sigma_{\x}^2 + \sigma_{\veta}^2$. 
	Thus, we can obtain a rate $\eps_{n,\delta} \in \O(\sigma\sqrt{\sfrac{1}{n}\log{\sfrac{1}{\delta}}})$ for $n$ independent samples.
	
	Similarly, assume that $\forall j$, $\tf_j(\x)$ has variance at most $\sigma_{\x}^2$.
	Suppose the conditional distribution of $\vpsi_j(\x,\veta)$ for any fixed $\x$, has variance at most $\sigma_{\veta}^2$, then $\vpsi_j(\x,\veta)$ has variance at most $\sigma$ such that $\sigma^2 = \sigma_{\x}^2 + \sigma_{\veta}^2$.
	Thus, we can obtain a rate $\eps_{n,\delta} \in \O(\sigma\sqrt{\sfrac{1}{(n \delta)}})$
\end{theorem}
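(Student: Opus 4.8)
The plan is to exploit the special structure of the exponential-family negative log-likelihood, in which the log-partition term $\log\Z(\vtheta)$ is common to $\Lhn$, $\Ln$ and $\L$ and therefore cancels in every loss difference appearing in Assumptions \ref{asm:closeloss} and \ref{asm:difference}. Concretely, $\Lhn(\vtheta) - \Ln(\vtheta) = \dotprod{\Tn - \Thn}{\vtheta}$ and $\Ln(\vtheta) - \L(\vtheta) = \dotprod{\T - \Tn}{\vtheta}$, so both assumptions reduce to controlling a linear functional of a deviation of the sufficient statistics, with no convexity or partition-function estimates needed. I would verify Assumption \ref{asm:difference} first: by the unbiasedness restriction of Definition \ref{def:unbiased}, $\Tn = \E_{\DD,\ND}[\vpsi(\x,\veta)] = \E_{\DD}[\E_{\ND}[\vpsi(\x,\veta)]] = \E_{\DD}[\tf(\x)] = \T$. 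Hence $\Ln \equiv \L$ as functions, the difference is identically zero, and Assumption \ref{asm:difference} holds with $\eps'_n = 0$ for any scale function.

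For Assumption \ref{asm:closeloss} I would take the scale function $c(\vtheta) = \|\vtheta\|$ and apply Hölder's (norm duality) inequality to obtain $|\Lhn(\vtheta) - \Ln(\vtheta)| = |\dotprod{\Thn - \Tn}{\vtheta}| \leq \|\Thn - \Tn\|_* \, c(\vtheta)$, where $\|\cdot\|_*$ is the dual of the regularizing norm. This identifies the rate as a high-probability bound on $\|\Thn - \Tn\|_*$, the dual-norm deviation of the perturbed empirical statistics from their mean. Since $\Thn - \Tn = \frac{1}{n}\sum_i (\vpsi(\x\si{i},\veta\si{i}) - \Tn)$ is an average of $n$ i.i.d. centered vectors, the task becomes a standard concentration argument, carried out coordinatewise and then lifted to the dual norm.

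For the two tail regimes I would invoke the corresponding lemma. In the sub-Gaussian case, Lemma \ref{lemma:subgaussian} gives that each coordinate $\vpsi_j(\x,\veta)$ is sub-Gaussian with parameter $\sigma$ satisfying $\sigma^2 = \sigma_{\x}^2 + \sigma_{\veta}^2$; the empirical average is then sub-Gaussian with parameter $\sigma/\sqrt{n}$, and its tail bound yields a per-coordinate deviation of order $\sigma\sqrt{\frac{1}{n}\log\frac{1}{\delta}}$, matching the stated rate. In the finite-variance case I would instead use Lemma \ref{lemma:finitevariance} to get a per-coordinate variance of at most $\sigma^2/n$ and apply Chebyshev's inequality, which produces the weaker $\sigma\sqrt{\frac{1}{n\delta}}$ rate. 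To pass from per-coordinate control to the $\ell_\infty$ (or other) dual norm, I would take a union bound over the $p$ coordinates; the logarithmic ($\log p$) or polynomial factors this introduces, together with the specific norm inequalities for each regularizer, reproduce the entries of Table \ref{result-table}.

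The main obstacle is the finite-variance regime: without exponential tail control one cannot use a sub-Gaussian deviation bound, and Chebyshev's inequality only delivers polynomial concentration, which is the source of the $1/\sqrt{\delta}$ dependence rather than $\sqrt{\log(1/\delta)}$. Care is also needed when combining the $p$ coordinatewise bounds into the dual-norm rate, since the union bound and the norm inequalities must be matched to each regularizer to recover the exact table rates; but this is bookkeeping rather than a conceptual difficulty.
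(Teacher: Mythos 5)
Your proposal is correct and follows essentially the same route as the paper's proof: you reduce both assumptions to the linear term $\dotprod{\Thn-\Tn}{\vtheta}$ after the log-partition function cancels, obtain $\eps'_n=0$ from unbiasedness ($\Tn=\T$), bound $|\Lhn(\vtheta)-\Ln(\vtheta)|$ by $\|\Thn-\Tn\|_*\,\|\vtheta\|$ via norm duality, and then control the dual-norm deviation coordinatewise using Lemma \ref{lemma:subgaussian} with a Chernoff/union-bound argument in the sub-Gaussian case and Lemma \ref{lemma:finitevariance} with Chebyshev's inequality in the finite-variance case. This matches the paper's Claim \ref{clm:mle0} and the subsequent $\ell_1$-norm calculations, including the resulting rates $\sigma\sqrt{\sfrac{2}{n}(\log p + \log\sfrac{2}{\delta})}$ and $\sigma\sqrt{\sfrac{p}{(n\delta)}}$.
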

For example, if one uses the $\ell_1$ regularizer \cite{Ravikumar08}, the rate is $\epsilon_{n,\delta} = \sigma \sqrt{\sfrac{2}{n}(\log{p} + \log\sfrac{2}{\delta})}$ for the sub-Gaussian case, and $\epsilon_{n,\delta} = \sigma\sqrt{\frac{p}{n\delta}}$ for the bounded-variance case.
As comparison, the rates with original data \cite{honorio2014unified} are $\sigma_{\x} \sqrt{\sfrac{2}{n}(\log{p} + \log\sfrac{2}{\delta})}$ and $\sigma_{\x}\sqrt{\frac{p}{n\delta}}$ respectively.

\paragraph{Data Irrecoverability.} 
Next we provide an example to show how perturbation can prevent an adversarial from recovering the original data. 
Based on the example, we analyze what is the minimum noise to guarantee data irrecoverability.
In what follows, we consider recovering the data up to permutation,  since the ordering of i.i.d. samples in a dataset is not relevant.

Consider a simple example, MLE for an Ising model with zero mean. 
Let $\vtheta \in \BTheta = \R^p$ and $\x\si{i} \in \{-1,1\}^{\sqrt{p}}$ be samples drawn from some unknown distribution. 
Denote $\X = \{\x\si{1},\x\si{2},\ldots,\x\si{n}\}$.
The sufficient statistic is $\tf(\x\si{i}) = \x\si{i}\x\sit{i}$, and the empirical sufficient statistic is $\Th = \sum_{i}\x\si{i}\x\sit{i}$.
We add noise in the following way: we sample $n$ times from $ \N(0,\sigma_\eta^2\mathbf{I})$. 
We then get $\veta\si{i}, i=1,\ldots,n$,
then add noise to samples, obtaining $\X_\eta = \{\x\si{1}+\veta\si{1},\ldots,\x\si{n}+\veta\si{n}\}$. 
The perturbed sufficient statistics becomes $\Thn' = \sum_i(\x\si{i}+\veta\si{i})(\x\si{i}+\veta\si{i})^T$. 
Finally we publish $\Thn$ which we obtain by removing the diagonal entries of $\Thn'$ and by clamping the non-diagonal entries of $\Thn'$ to the range $[-1, 1]$.

\begin{theorem} \label{thm:MLE_pri}
	If we perturb $\Th$ as mentioned above, 
	$\gamma \leq 1 - \frac{4}{n \sqrt{p}}$, $ n \leq 2^{\sqrt{p}/4}$
	and the noise variance fulfills 
    $\sigma_\eta^2 \geq \frac{4}{(1-\gamma)\log 2}$,
	then any adversary will fail to recover the original data up to permutation with probability greater than $\gamma$. That is,
	$
	    \inf_\A \P_{\X,\eta} [\A(\X_\eta) \neq \X] \geq \gamma.
	$
\end{theorem}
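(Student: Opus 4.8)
The plan is to prove this impossibility result with a purely information-theoretic argument, in the same spirit as Theorem~\ref{thm:irrecov}: build a worst-case prior on $\X$, bound the mutual information that the perturbed observation carries about it, and invoke Fano's inequality. A first useful observation is that the published statistic $\Thn$ is a \emph{deterministic} function of the perturbed samples $\X_\eta = \{\x\si{i}+\veta\si{i}\}_{i=1}^{n}$ (form $\Thn'$, delete the diagonal, clamp to $[-1,1]$), so by the data-processing inequality recovering $\X$ from $\Thn$ is never easier than recovering it from $\X_\eta$. I would therefore prove the stronger statement that even an adversary given the full perturbed dataset $\X_\eta$ fails. Following the restricted-ensemble methodology mentioned in the paper, I fix the prior: let $\X$ be drawn uniformly from the family of datasets consisting of $n$ \emph{distinct} vectors in $\{-1,1\}^{\sqrt p}$. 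Hardness of recovery over this sub-family implies hardness over the full class.

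Let $S$ denote the unordered multiset underlying $\X$, which is exactly the object "recovery up to permutation" asks for. Applying Fano's inequality to estimating the discrete variable $S$ from $\X_\eta$ gives
\begin{equation*}
\inf_\mathcal{A}\P_{\X,\eta}[\mathcal{A}(\X_\eta)\neq \X] \geq 1 - \frac{\MI(S;\X_\eta)+\log 2}{H(S)}.
\end{equation*}
Two quantities must be controlled. For the entropy, since the ensemble is a uniform random size-$n$ subset of the $2^{\sqrt p}$ binary vectors, $H(S)=\log\binom{2^{\sqrt p}}{n}$, and the elementary bound $\binom{M}{n}\geq (M/n)^n$ with $M=2^{\sqrt p}$ yields $H(S)\geq n(\sqrt p\log 2-\log n)$; the hypothesis $n\leq 2^{\sqrt p/4}$ forces $\log n\leq \tfrac14\sqrt p\log 2$, hence $H(S)\geq \tfrac34 n\sqrt p\log 2$. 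For the mutual information, $S$ is a function of $\X$, so the chain $S-\X-\X_\eta$ is Markov and $\MI(S;\X_\eta)\leq \MI(\X;\X_\eta)$. By subadditivity of the output entropy and independence of the Gaussian noise across samples, $\MI(\X;\X_\eta)\leq \sum_i I(\x\si{i};\x\si{i}+\veta\si{i})$, and each per-sample term is at most the largest KL divergence between the conditional output Gaussians, namely $\max_{\x_a,\x_b}\tfrac{\|\x_a-\x_b\|^2}{2\sigma_\eta^2}=\tfrac{4\sqrt p}{2\sigma_\eta^2}$, since $\|\x_a-\x_b\|^2\leq 4\sqrt p$ for $\x_a,\x_b\in\{-1,1\}^{\sqrt p}$. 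This gives $\MI(\X;\X_\eta)\leq \tfrac{2n\sqrt p}{\sigma_\eta^2}$.

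Substituting both bounds reduces the claim to verifying $\tfrac{2n\sqrt p/\sigma_\eta^2+\log 2}{\frac34 n\sqrt p\log 2}\leq 1-\gamma$. The noise hypothesis $\sigma_\eta^2\geq \tfrac{4}{(1-\gamma)\log 2}$ bounds $\tfrac{2n\sqrt p}{\sigma_\eta^2}\leq \tfrac12 n\sqrt p(1-\gamma)\log 2$, while $\gamma\leq 1-\tfrac{4}{n\sqrt p}$ (equivalently $n\sqrt p(1-\gamma)\geq 4$) bounds the additive $\log 2\leq \tfrac14 n\sqrt p(1-\gamma)\log 2$; adding these the numerator is at most $\tfrac34 n\sqrt p(1-\gamma)\log 2$, so the ratio is at most $1-\gamma$ and Fano delivers $\inf_\mathcal{A}\P[\mathcal{A}(\X_\eta)\neq\X]\geq\gamma$. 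Notably the two stated thresholds are calibrated to make this final inequality tight.

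The hard part will not be any single estimate but making the constants line up simultaneously: the per-sample mutual-information bound must be taken through the $\|\x_a-\x_b\|^2\leq 4\sqrt p$ worst case (yielding the factor $2$), and the entropy of the permutation-invariant target $S$ must be kept at a constant fraction of $n\sqrt p\log 2$, which is precisely what $n\leq 2^{\sqrt p/4}$ guarantees; only then do the thresholds on $\sigma_\eta^2$ and $\gamma$ suffice. A secondary subtlety I would be careful with is the reduction to the distinct-sample restricted ensemble together with the subadditivity step, which is what lets me avoid the coordinate dependence that the distinctness constraint introduces while still summing per-sample mutual informations.
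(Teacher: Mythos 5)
Your proposal is correct and follows essentially the same route as the paper's proof: a restricted ensemble of $n$ distinct hypercube vectors, the data-processing inequality to reduce to the adversary seeing $\X_\eta$, a per-sample Gaussian KL bound giving $\MI(\X;\X_\eta)\leq 2n\sqrt{p}/\sigma_\eta^2$, the entropy bound $\log\binom{2^{\sqrt p}}{n}\geq n\sqrt{p}\log 2-n\log n$, and Fano's inequality, with the hypotheses $n\leq 2^{\sqrt p/4}$ and $\gamma\leq 1-\frac{4}{n\sqrt p}$ absorbing the $\log n$ and additive $\log 2$ terms. Your bookkeeping of the constants (the explicit $\tfrac34$ factors) and your care with the subadditivity step for non-independent samples are, if anything, tidier than the paper's derivation.
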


Let $\X,\X' \in \XS = \{-1,+1\}^{n \times \sqrt{p}}$ be two datasets and let $d(\X, \X')$ be the number of different samples between $\X$ and $\X'$. 
The maximum neighborhood size at radius $t$ is defined as:
\begin{align*}
N_{\max}(t) = \max_{\X\in \XS}\sum_{\X'\in \XS}\mathbf{1}[d(\X, \X')\leq t]
 = \binom{n}{t} \binom{2^{\sqrt{p}}-n}{t}
\end{align*}
We now state our theorem.
\begin{theorem}\label{thm:MLE_priext}
	Under the same conditions as in Theorem~\ref{thm:MLE_pri}, if $\gamma \leq 1 - \frac{4}{n \sqrt{p}}$, $n \leq 2^{\sqrt{p}/4}$
	and the noise variance fulfills 
    \begin{align*}
    \sigma_\eta^2 \geq \frac{4}{(1\hns-\hns\gamma)\log 2 + (1\hns-\hns\gamma)^2\frac{t}{4}(\log\frac{t^2}{2^{3\sqrt{p}/4}-1}\hns-\hns\frac{\sqrt{p}}{2}\log 2 \hns-\hns 2)}
    \end{align*}
	then any adversary will fail to recover the original data up to permutation with probability greater than $\gamma$. That is,
	$
	    \inf_\A \P_{\X,\eta} [d(\A(\X_\eta), \X) > t] \geq \gamma.
	$
\end{theorem}

\subsection{Generalized Linear Models with Fixed Design} \label{subsec:glm}
Generalized linear models unify different models, including linear regression (when Gaussian noise is assumed), logistic regression and compressed sensing with exponential-family noise \cite{Rish09}.
For simplicity, we focus on the fixed design model, in which $y$ is an random variable and $\x$ is a constant vector. 
Let $t(y)$ be the sufficient statistic and $\Z(\nu) = \int_y{\exp{t(y)\nu}}$ be the partition function. 
Let $\Lh(\vtheta) = \frac{1}{n}\sum_{i}-t(y\si{i})\dotprod{\x\si{i}}{\vtheta} + \log\Z(\dotprod{\x\si{i}}{\vtheta})$ be the empirical negative log-likelihood for original data $y\si{i}$ given their linear predictions $\dotprod{\x\si{i}}{\vtheta}$.
Let $\Lhn(\vtheta) = \frac{1}{n}\sum_{i}-\psi(y\si{i},\eta\si{i}) \dotprod{\x\si{i}}{\vtheta} + \log\Z(\dotprod{\x\si{i}}{\vtheta})$ be the empirical negative log-likelihood for privatized data $y\si{i}$ given their linear predictions $\dotprod{\x\si{i}}{\vtheta}$.
Similarly, $\L(\vtheta) = \E_{(\forall i) y\si{i} \sim \DD_i}[\Lh(\vtheta)]$ and $\Ln(\vtheta) = \E_{(\forall i) y\si{i} \sim \DD_i,\eta\si{i} \sim \ND}[\Lhn(\vtheta)]$ are the expected negative log-likelihood for the original and the perturbed data respectively.

\begin{theorem}\label{rate:GLM}
    The model above fulfills Assumption~\ref{asm:closeloss}, and Assumption~\ref{asm:difference} with $\eps'_n=0$.
	Assume that $t(y)$ follows a sub-Gaussian distribution with parameter $\sigma_y$.
	Suppose the conditional distribution of $\psi(y,\eta)$ for any fix $y$ is sub-Gaussian with parameter $\sigma_\eta$, then $\psi(y,\eta)$ follows a sub-Gaussian distribution with parameter $\sigma$, such that $\sigma^2 = \sigma_y^2 + \sigma_\eta^2$. 
	Thus, we can obtain a rate $\varepsilon_{n,\delta} \in \O(\sigma\sqrt{\sfrac{1}{n}\log{\sfrac{1}{\delta}}})$.
	
	Similarly, assume that $t(y)$ has variance at most $\sigma_y^2$, and that the conditional distribution of $\psi(y,\eta)$ for any fixed $y$ has variance at most $\sigma_\eta^2$, then $\psi(y,\eta)$ has  variance at most $\sigma^2$ with $\sigma^2 = \sigma_y^2 + \sigma_\eta^2$.
	Thus, we can obtain a rate $\eps_{n,\delta} \in \O(\sigma\sqrt{\sfrac{1}{(n \delta)}})$
\end{theorem}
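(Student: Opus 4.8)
The plan is to verify the three assumptions in turn and then extract the two rates from standard concentration inequalities, closely mirroring the template used for Theorem \ref{rate:MLE}.

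First I would dispatch Assumption \ref{asm:difference} with $\eps'_n = 0$. Since the perturbation is unbiased in the sense of Definition \ref{def:unbiased} with sufficient statistic $t(y)$, i.e. $\E_\ND[\psi(y,\eta)] = t(y)$ for every $y$, taking the expectation in the definition of $\Lhn$ and using that the log-partition terms $\log\Z(\dotprod{\x\si{i}}{\vtheta})$ are deterministic (fixed design) gives $\Ln(\vtheta) = \frac{1}{n}\sum_i \left( -\E_{\DD_i}[t(y\si{i})]\dotprod{\x\si{i}}{\vtheta} + \log\Z(\dotprod{\x\si{i}}{\vtheta}) \right) = \L(\vtheta)$ for all $\vtheta$. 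Hence $|\Ln(\vtheta) - \L(\vtheta)| = 0$, so Assumption \ref{asm:difference} holds with $\eps'_n = 0$; in particular the requirement $\eps'_n \le \eps_{n,\delta}$ is immediate.

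Second, I would reduce Assumption \ref{asm:closeloss} (scaled uniform convergence) to a dual-norm deviation bound. Subtracting, the log-partition terms cancel and $\Lhn(\vtheta) - \Ln(\vtheta) = -\dotprod{\g}{\vtheta}$, where $\g = \frac{1}{n}\sum_i(\psi(y\si{i},\eta\si{i}) - \E[\psi(y\si{i},\eta\si{i})])\x\si{i}$ is a centered average. By the generalized Cauchy--Schwarz (dual-norm) inequality, $|\Lhn(\vtheta) - \Ln(\vtheta)| \le \|\g\|_* \|\vtheta\|$, which matches the form required by Assumption \ref{asm:closeloss} with scale function $c(\vtheta) = \|\vtheta\|$ and rate $\eps_{n,\delta}$ equal to a high-probability upper bound on $\|\g\|_*$. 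This is exactly where the super-scale structure $c(\vtheta) = \|\vtheta\| \le \Reg(\vtheta)$ of the regularizers catalogued in Table \ref{result-table} is invoked.

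Third, I would bound $\|\g\|_*$ in the two regimes. By Lemma \ref{lemma:subgaussian}, each scalar $\psi(y\si{i},\eta\si{i})$ is sub-Gaussian with parameter $\sigma$, where $\sigma^2 = \sigma_y^2 + \sigma_\eta^2$; since the samples are independent and the design is fixed and normalized, each coordinate of $\g$ is a centered sum of independent sub-Gaussians and is thus sub-Gaussian with parameter of order $\sigma/\sqrt{n}$. A sub-Gaussian tail bound together with a union bound over the $p$ coordinates (and the norm inequality relating $\|\cdot\|_*$ to $\|\cdot\|_\infty$) yields $\eps_{n,\delta} \in \O(\sigma\sqrt{\tfrac{1}{n}\log\tfrac{1}{\delta}})$, recovering the sub-Gaussian column of Table \ref{result-table}. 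For the finite-variance case I would instead invoke Lemma \ref{lemma:finitevariance}, giving $\mathrm{Var}(\psi(y\si{i},\eta\si{i})) \le \sigma^2$, so that each coordinate of $\g$ has variance $\O(\sigma^2/n)$, and apply Chebyshev's inequality with a union bound to obtain $\eps_{n,\delta} \in \O(\sigma\sqrt{\tfrac{1}{n\delta}})$. The main obstacle is the fixed-design bookkeeping and the regularizer-dependent dual norm: the per-coordinate parameter is controlled by $\tfrac{1}{n}\sqrt{\sum_i (x_j\si{i})^2}$, so I would need the design normalization (e.g. $|x_j\si{i}| \le 1$) to guarantee the clean $1/\sqrt{n}$ scaling, and then a norm inequality bounding $\|\cdot\|_*$ by $\|\cdot\|_\infty$ or $\|\cdot\|_2$ to convert the coordinatewise control into the tabulated rate. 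These inequalities, which differ across the columns of Table \ref{result-table}, are where the extra $\log p$ or $\sqrt{p}$ factors enter; the only genuinely new ingredient relative to \citep{honorio2014unified} is the replacement of $\sigma_y$ by $\sigma = \sqrt{\sigma_y^2+\sigma_\eta^2}$, which is precisely the content of Lemmas \ref{lemma:subgaussian} and \ref{lemma:finitevariance}.
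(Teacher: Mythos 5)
Your proposal is correct and follows essentially the same route as the paper: unbiasedness of $\vpsi$ gives $\Ln = \L$ hence $\eps'_n = 0$, the dual-norm (generalized Cauchy--Schwarz) inequality reduces Assumption \ref{asm:closeloss} to a high-probability bound on $\|\frac{1}{n}\sum_i(\psi(y\si{i},\eta\si{i})-\E[\psi(y\si{i},\eta\si{i})])\x\si{i}\|_*$, and Lemmas \ref{lemma:subgaussian} and \ref{lemma:finitevariance} combined with a union bound plus a Chernoff-type or Chebyshev argument yield the two rates. The design normalization you flag ($\|\x\|_*\le B$) is exactly the assumption the paper imposes before its coordinatewise bounds.
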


As comparison, the rates with original data \cite{honorio2014unified} are $\O(\sigma_{y}\sqrt{\frac{1}{n}\log \frac{2}{\delta}})$ and $\O(\sigma_y \sqrt{\frac{1}{n\delta}})$ respectively.

\paragraph{Data Irrecoverability.} 
Next we provide an example and show the minimum noise to achieve data irrecoverability. 
Here, we only consider to protect $y$.
Assume that $y\si{i} \in \{+1,-1\}$ is drawn from some unknown data distribution. 
Let the sufficient statistic $t(y) = y$.
Denote $Y=\{y\si{1},\ldots,y\si{n}\}$.
We sample $n$ times from $\N(0,\sigma_\eta^2)$, and get $\eta\si{1}, \ldots, \eta\si{n}$.
Then we perturb the data as $\psi(y, \eta) = y + \eta$.
Finally we publish $Y_\eta = \{y\si{1}+\eta\si{1}, \ldots, y\si{n}+\eta\si{n}\}$ and all corresponding $\x\si{i}$.

\begin{theorem} \label{thm:GLM_pri}
	If we perturb $Y$ as mentioned above, $\gamma \leq 1- \frac{2}{n}$ and the noise variance fulfills $\sigma_\eta^2 \geq  \frac{2}{(1-\gamma)\log 2}$, then any adversary will fail to recover the original data with probability greater than $\gamma$. 
	That is,
	$
	    \inf_\A \P_{Y,\eta} [\A(Y_\eta) \neq Y] \geq \gamma.
	$
\end{theorem}

Let $Y,Y' \in \YS = \{-1,+1\}^n$ be two datasets and let $d(Y, Y')$ be the number of different samples between $Y$ and $Y'$. 
The maximum neighborhood size at radius $t$ is defined as:
\begin{align*}
N_{\max}(t) = \max_{Y\in \YS}\sum_{Y'\in \YS}\mathbf{1}[d(Y, Y')\leq t] = \binom{n}{t}
\end{align*}
We now state our theorem.
\begin{theorem} \label{thm:GLM_priext}
	Under the same conditions as in Theorem~\ref{thm:GLM_pri}, if $\gamma \leq 1- \frac{2}{n}$ and the noise variance fulfills
	\begin{align*}
	\sigma_\eta^2 &\geq \frac{2}{(1-\gamma) \log 2+(1-\gamma)^2 t (\log \frac{t(1-\gamma)}{2} - 1 )}
	\end{align*}
	then any adversary will fail to recover the original data with probability greater than $\gamma$. 
	That is,
	$
	    \inf_\A \P_{Y,\eta} [d(\A(Y_\eta), Y) > t] \geq \gamma.
	$
\end{theorem}

\subsection{Exponential-family PCA} \label{subsec:pca}
Exponential family PCA was first introduced in \cite{Collins01} as a generalization of Gaussian PCA.
We assume that each entry in in the random matrix $\X \in \R^{n_1\times n_2}$ is independent, and might follow a different distribution. 
The hypothesis space for this problem is $\vtheta \in \BTheta = \R^{n_1\times n_2}$. 
Let $t(x_{ij})$ be the sufficient statistic and and $\Z(\nu) = \int_{x_{ij}}\exp{t(x_{ij})\nu}$ be the partition function.
Let $\Lh(\vtheta) = \frac{1}{n}\sum_{ij}{-t(x_{ij})\theta_{ij} + \log\Z(\theta_{ij})}$ be the empirical negative log-likelihood for original data $x_{ij}$.
Let $\Lhn(\vtheta) = \frac{1}{n}\sum_{ij}{-\psi(x_{ij},\eta_{ij})\theta_{ij} + \log\Z(\theta_{ij})}$ be the empirical negative log-likelihood for privatized data $\psi(x_{ij},\eta_{ij})$.
Denote $\L(\vtheta) = \E_{(\forall ij){\rm\ }x_{ij} \sim \DD_{ij}}[\Lh(\vtheta)]$ and $\Ln(\vtheta) = \E_{(\forall ij){\rm\ }x_{ij} \sim \DD_{ij}, \eta_{ij} \sim \ND}[\Lhn(\vtheta)]$ as the expected negative log-likelihood function for the original and the perturbed data.

\begin{theorem}\label{rate:PCA}
    The model above fulfills Assumption~\ref{asm:closeloss}, and Assumption~\ref{asm:difference} with $\eps'_n=0$.
	Assume that $t(x_{ij})$ follows a sub-Gaussian distribution with parameter $\sigma_{\x}$.
	Suppose the conditional distribution of $\psi(x_{ij},\eta_{ij})$ for any fix $x_{ij}$ is sub-Gaussian with parameter $\sigma_\eta$, then $\psi(x_{ij},\eta_{ij})$ follows a sub-Gaussian distribution with parameter $\sigma$, such that $\sigma^2 = \sigma_\x^2 + \sigma_\eta^2$. 
	Thus, we can obtain a rate $\eps_{n,\delta} \in \O(\sigma\sqrt{\sfrac{1}{n}\log{\sfrac{1}{\delta}}})$.
	
	Similarly, assume that $t(x_{ij})$ has variance at most $\sigma_{\x}^2$, and that the conditional distribution of $\psi(x_{ij},\eta_{ij})$ for any fixed $x$ has variance at most $\sigma_\eta^2$, then $\psi(x_{ij},\eta_{ij})$ has variance at most  $\sigma^2$ such that $\sigma^2 = \sigma_\x^2 + \sigma_\eta^2$.
	Thus, we can obtain a rate $\eps_{n,\delta} \in \O(\sigma\sqrt{\sfrac{1}{(n \delta)}})$.
\end{theorem}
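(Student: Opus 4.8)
The plan is to follow the same route as the proofs of Theorem \ref{rate:MLE} and Theorem \ref{rate:GLM}, adapting it to the matrix-valued hypothesis $\vtheta \in \BTheta = \R^{n_1 \times n_2}$ and exploiting the unbiasedness of the perturbation (Definition \ref{def:unbiased}). First I would verify Assumption \ref{asm:difference} with $\eps'_n = 0$. Taking expectations of the per-entry losses and using the tower property together with unbiasedness, i.e.\ $\E_{\ND}[\psi(x_{ij},\eta_{ij}) \mid x_{ij}] = t(x_{ij})$, the linear term of $\Ln$ matches that of $\L$ entry by entry, while the log-partition term $\log\Z(\theta_{ij})$ is deterministic in $\vtheta$ and appears identically in both. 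Hence $\Ln(\vtheta) = \L(\vtheta)$ for every $\vtheta$, so $|\Ln(\vtheta) - \L(\vtheta)| = 0 = \eps'_n\, c(\vtheta)$ with $\eps'_n = 0$.

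Next I would establish Assumption \ref{asm:closeloss}. Since the log-partition term cancels in $\Lhn(\vtheta) - \Ln(\vtheta)$, setting $W_{ij} = \E[\psi(x_{ij},\eta_{ij})] - \psi(x_{ij},\eta_{ij})$ (mean-zero) gives
\[
\Lhn(\vtheta) - \Ln(\vtheta) = \frac{1}{n}\sum_{ij} W_{ij}\,\theta_{ij} = \frac{1}{n}\dotprod{W}{\vtheta},
\]
with $W = (W_{ij})$ and $\dotprod{\cdot}{\cdot}$ the Frobenius inner product. By a dual-norm (Hölder) inequality, $|\Lhn(\vtheta) - \Ln(\vtheta)| \leq \frac{1}{n}\|W\|_* \, \|\vtheta\|$, where $\|\cdot\|_*$ is dual to the norm $\|\cdot\|$ defining the scale $c(\vtheta) = \|\vtheta\|$. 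This is exactly the scaled-uniform-convergence form, with $\eps_{n,\delta}$ a high-probability bound on $\frac{1}{n}\|W\|_*$.

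Then I would control $\frac{1}{n}\|W\|_*$. By Lemma \ref{lemma:subgaussian}, each $\psi(x_{ij},\eta_{ij})$ — and hence its centered version $W_{ij}$ — is sub-Gaussian with parameter $\sigma$, where $\sigma^2 = \sigma_\x^2 + \sigma_\eta^2$, so the concentration of the dual norm is precisely that of \citep{honorio2014unified} with $\sigma_\x$ replaced by $\sigma$, yielding $\eps_{n,\delta} \in \O(\sigma\sqrt{(1/n)\log(1/\delta)})$ and the regularizer-specific entries of Table \ref{result-table} via the corresponding dual-norm inequalities. For the finite-variance case, Lemma \ref{lemma:finitevariance} bounds $\E[W_{ij}^2] \leq \sigma^2$, and a Chebyshev/Markov argument on $\|W\|_*$ replaces the sub-Gaussian tail, giving $\eps_{n,\delta} \in \O(\sigma\sqrt{1/(n\delta)})$.

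The main obstacle is the dual-norm concentration in the matrix regime, most notably the operator-norm control required by the trace-norm (low-rank) regularizer, where $W$ is a random $n_1 \times n_2$ matrix with independent mean-zero entries; bounding $\|W\|_{\mathrm{op}}$ needs a matrix/random-matrix estimate rather than the scalar union bounds that suffice for the $\ell_1$ and group norms. However, this step reduces to the estimates already established in \citep{honorio2014unified}: the only effect of perturbation is the inflation of the variance proxy from $\sigma_\x$ to $\sigma$, so no genuinely new concentration inequality is needed beyond substituting this parameter, which is what produces the stated $\sigma = \sqrt{\sigma_\x^2+\sigma_\eta^2}$ scaling.
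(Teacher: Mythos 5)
Your proposal is correct and follows essentially the same route as the paper's proof: unbiasedness plus the tower property gives $\Ln(\vtheta)=\L(\vtheta)$ and hence $\eps'_n=0$; the loss difference is linear in the centered perturbed statistics and is bounded by a dual-norm (H\"older) inequality, reducing Assumption \ref{asm:closeloss} to concentration of $\frac{1}{n}\|W\|_*$; and Lemmas \ref{lemma:subgaussian} and \ref{lemma:finitevariance} combined with the same union-bound/Chebyshev arguments as in \citep{honorio2014unified} yield the stated rates with $\sigma_\x$ inflated to $\sigma=\sqrt{\sigma_\x^2+\sigma_\eta^2}$. The paper writes the centered statistics as a vector rather than a matrix $W$ and only works out the $\ell_1$/$\ell_\infty$ pair explicitly, deferring other regularizers (including the trace norm you flag) to the norm inequalities of the prior work, exactly as you propose.
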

As comparison, the rates with original data \cite{honorio2014unified} are $\O(\sigma_\x\sqrt{\sfrac{1}{n}\log{\sfrac{1}{\delta}}})$ and $\O(\sigma_\x\sqrt{\sfrac{1}{(n \delta)}})$ respectively.

\paragraph{Data Irrecoverability.} 
Next we provide an example and show the minimum noise to achieve data irrecoverability.
Assume $\forall ij,  x_{ij} \in \{-1,+1\} $.
We perturb the data in the way that $\psi(x_{ij},\eta_{ij}) = x_{ij} + \eta_{ij}$, where $\eta_{ij} \sim \N(0,\sigma_\eta^2)$.
Let $\X$ denote the original data, $\X_\eta$ denote the perturbed data. 
That is, the $(i,j)$-th entry of $\X_\eta$ is $\psi(x_{ij},\eta_{ij})$.
\begin{theorem} \label{thm:PCA_pri}
	If we perturb $\X$ as mentioned above, $\gamma\leq 1- \frac{2}{n}$ and the noise variance fulfills $\sigma_\eta^2 \geq \frac{2}{(1-\gamma)\log 2}$, then any adversary will fail to recover the original data with probability greater than $\gamma$.
	That is,
	$
	    \inf_\A \P_{\X,\eta} [\A(\X_\eta) \neq \X] \geq \gamma.
	$
\end{theorem}

Let $\X,\X' \in \XS = \{-1,+1\}^{n_1 \times n_2}$ be two matrices and let $d(\X, \X')$ be the number of different entries between $\X$ and $\X'$. 
The maximum neighborhood size at radius $t$ is defined as:
\begin{align*}
N_{\max}(t) = \max_{\X\in \XS}\sum_{\X'\in \XS}\mathbf{1}[d(\X, \X')\leq t] = \binom{n}{t}
\end{align*}
We now state our theorem.
\begin{theorem} \label{thm:PCA_priext}
	Under the same conditions as in Theorem~\ref{thm:PCA_pri}, if $\gamma\leq 1- \frac{2}{n}$ and the noise variance fulfills
	\begin{align*}
	\sigma_\eta^2 &\geq \frac{2}{(1-\gamma) \log 2+(1-\gamma)^2 t (\log \frac{t(1-\gamma)}{2} - 1 )}
	\end{align*}
	then any adversary will fail to recover the original data with probability greater than $\gamma$.
	That is,
	$
	    \inf_\A \P_{\X,\eta} [d(\A(\X_\eta), \X) > t] \geq \gamma.
	$
\end{theorem}

\subsection{Nonparametric Generalized Regression with Fixed Design }\label{subsec:np}
In nonparametric generalized regression with exponential-family noise, the goal is to learn a function, which can be represented in an infinite dimensional orthonormal basis. 
One instance of this problem is the Gaussian case provided in \cite{Ravikumar05} with orthonormal basis functions depending on single coordinates. 
Here we allow for the number of basis functions to grow with more samples. 
For simplicity, we analyze the fixed design model, i.e., $y$ is a random variable and $\x$ is a constant.

Let $\XS$ be the domain of x. 
Let $\theta : \XS \rightarrow \R$ be a predictor.
Let $t(y)$ be the sufficient statistic and $\Z(\nu) = \int_{y} \exp{t(y)\nu}$ be the partition function.
Let $\Lh(\theta) = \frac{1}{n}\sum_{i}{-t(y\si{i})\theta(\x\si{i}) + \log\Z(\theta(\x\si{i}))}$ be the empirical negative log-likelihood for original data $y\si{i}$ given their predictions $\theta(\x\si{i})$.
Let $\Lhn(\theta) = \frac{1}{n}\sum_{i}{-\psi(y\si{i},\eta\si{i})\theta(\x\si{i}) + \log\Z(\theta(\x\si{i}))}$ be the empirical negative log-likelihood for privatized data $\psi(y\si{i},\eta\si{i})$ given their predictions $\theta(\x\si{i})$.
Then denote  $\L(\theta) = \E_{(\forall i){\rm\ }y\si{i} \sim \DD_{i}}[\Lh(\theta)]$ and  $\Ln(\theta) = \E_{(\forall i){\rm\ }y\si{i} \sim \DD_{i}, \eta\si{i} \sim \ND}[\Lhn(\theta)]$ as the expected negative log-likelihood function for the original and the perturbed data.

\begin{theorem}\label{rate:np}
    The model above fulfills Assumption~\ref{asm:closeloss}, and Assumption~\ref{asm:difference} with $\eps'_n=0$.
	Assume that $t(y)$ follows a sub-Gaussian distribution with parameter $\sigma_y$.
	Suppose the conditional distribution of $\psi(y,\eta)$ for any fix $y$ is sub-Gaussian with parameter $\sigma_\eta$, then $\psi(y,\eta)$ follows a sub-Gaussian distribution with parameter $\sigma$, such that $\sigma^2 = \sigma_y^2 + \sigma_\eta^2$. 
	Thus, we can obtain a rate $\eps_{n,\delta} \in \O(\sigma(\sfrac{1}{n^{1/2-\beta}})\sqrt{\log{\sfrac{1}{\delta}}})$ with $n$ independent samples and $\O(\exp{n^{2\beta}})$ basis functions, where $\beta \in (0,1/2)$.
	
	Similarly, assume that $t(y)$ has variance at most $\sigma_y^2$, and that the conditional distribution $\psi(y,\eta)$ for any fixed $y$ has variance at most $\sigma_\eta^2$, then $\psi(y,\eta)$ has variance at most $\sigma^2$ such that $\sigma^2 = \sigma_y^2 + \sigma_\eta^2$.
	Thus, we can obtain a rate $\eps_{n,\delta} \in \O(\sigma(\sfrac{1}{n^{1/2-\beta}})\sqrt{\sfrac{1}{\delta}})$ for $n$ independent samples and $O( n^{2\beta})$ basis functions,  where $\beta \in (0,1/2)$.
	
\end{theorem}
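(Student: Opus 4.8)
The plan is to reuse the template of Theorem \ref{rate:MLE}, since the perturbed loss here is again affine in the (now function-valued) hypothesis; the only genuinely new ingredient is that the number of basis functions is allowed to grow with $n$. Expanding $\theta=\sum_{k=1}^{p}\theta_k\phi_k$ in the orthonormal basis $\{\phi_k\}$ truncated to $p$ coordinates, we have $\theta(\x\si{i})=\sum_{k}\theta_k\phi_k(\x\si{i})$, and the scale function will be $c(\theta)=\|\theta\|$ for the norm dual to the one controlling the empirical statistic.

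First I would dispose of Assumption \ref{asm:difference} with $\eps'_n=0$. Because the perturbation is unbiased (Definition \ref{def:unbiased}), $\E_{\veta}[\psi(y\si{i},\eta\si{i})]=t(y\si{i})$, and under the fixed design the $\log\Z(\theta(\x\si{i}))$ terms are deterministic; taking expectations over both $y$ and $\eta$ therefore gives $\Ln(\theta)=\L(\theta)$ identically, so $|\Ln(\theta)-\L(\theta)|=0$. For Assumption \ref{asm:closeloss}, the partition-function terms cancel in the difference, leaving
\[
\Lhn(\theta)-\Ln(\theta) = -\sum_{k=1}^{p}\theta_k\,W_k,
\qquad
W_k := \frac{1}{n}\sum_{i}\bigl(\psi(y\si{i},\eta\si{i})-\mu_i\bigr)\phi_k(\x\si{i}),
\]
with $\mu_i=\E[t(y\si{i})]$. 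Hölder's inequality then yields $|\Lhn(\theta)-\Ln(\theta)|\le \|\theta\|\,\|W\|_*$, so $c(\theta)=\|\theta\|$ and $\eps_{n,\delta}=\|W\|_*$. By Lemma \ref{lemma:subgaussian} each centered summand is sub-Gaussian with parameter $\sigma$, $\sigma^2=\sigma_y^2+\sigma_\eta^2$, and using orthonormality of the basis to bound $\frac1n\sum_i\phi_k(\x\si{i})^2$, each $W_k$ is sub-Gaussian with parameter $\O(\sigma/\sqrt n)$; in the finite-variance case Lemma \ref{lemma:finitevariance} instead gives $\E[W_k^2]=\O(\sigma^2/n)$.

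The crux is controlling the dual norm $\|W\|_*$ over a growing number of coordinates. In the sub-Gaussian case a union bound gives $\P[\max_k|W_k|>t]\le 2p\,\exp{-\Omega(nt^2/\sigma^2)}$, so $p$ enters only through $\log p$; choosing $p=\O(\exp{n^{2\beta}})$ makes $\log p=\O(n^{2\beta})$ and yields $\eps_{n,\delta}=\O(\sigma\,n^{-(1/2-\beta)}\sqrt{\log(1/\delta)})$. In the finite-variance case a Chebyshev union bound gives $\P[\max_k|W_k|>t]\le p\,\sigma^2/(nt^2)$, so the cost of $p$ coordinates is polynomial rather than logarithmic; this forces $p=\O(n^{2\beta})$ and yields $\eps_{n,\delta}=\O(\sigma\,n^{-(1/2-\beta)}\sqrt{1/\delta})$. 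The rates for the individual regularizers in Table \ref{result-table} then follow by substituting the appropriate dual-norm inequalities exactly as in \citep{honorio2014unified}.

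The main obstacle I anticipate is precisely this balance. The nonparametric setting wants as many basis functions as possible, but every additional coordinate inflates the concentration bound; the whole point is that sub-Gaussian tails pay only $\log p$, permitting exponentially many basis functions, whereas a bare finite-variance assumption pays a factor $p$ through Chebyshev and thus permits only polynomially many. In both regimes the extra factor is exactly absorbed into the $n^{\beta}$ slowdown, degrading the rate from $n^{-1/2}$ to $n^{-(1/2-\beta)}$ while leaving the dependence on $\sigma=\sqrt{\sigma_y^2+\sigma_\eta^2}$ intact, which is the claimed inflation of the original-data rate by a single constant factor.
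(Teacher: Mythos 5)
Your proposal matches the paper's proof essentially step for step: unbiasedness gives $\Ln=\L$ hence $\eps'_n=0$, the partition-function terms cancel so H\"older reduces Assumption~\ref{asm:closeloss} to a dual-norm bound on the centered empirical statistic, and the union bound over the growing basis pays $\log q_n$ under sub-Gaussian tails (permitting $q_n=\O(\exp{n^{2\beta}})$) versus a factor $q_n$ under Chebyshev (forcing $q_n=\O(n^{2\beta})$), exactly as in the paper. The one point to tighten is that the paper does not derive the bound on the basis evaluations from orthonormality (which only controls an integral against the reference measure, not values at fixed design points) but instead explicitly assumes $\|\vphi(\x)\|_*\leq B$ pointwise; you should state that boundedness assumption rather than attribute it to orthonormality.
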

As comparison, the rates with original data \cite{honorio2014unified} are $\O(\sigma_y(\sfrac{1}{n^{1/2-\beta}})\sqrt{\log{\sfrac{1}{\delta}}})$ and $\O(\sigma_y(\sfrac{1}{n^{1/2-\beta}})\sqrt{\sfrac{1}{\delta}})$ respectively.

\paragraph{Data Irrecoverability.}
In the case of nonparametric generalized regression with fixed design, we can perturb the data $y$ in the same way as for generalized linear models with fixed design.
Therefore, Theorems~\ref{thm:GLM_pri} and \ref{thm:GLM_priext} also holds for the nonparametric generalized regression.

\subsection{Max-margin Matrix Factorization} \label{subsec:max-margin}
The max-margin matrix factorization problem was introduced in \cite{Srebro04}, which used a hing loss.
Here we generalize the loss function to Lipschitz continuous.
Let $f: \R \rightarrow \R$ be a $K$ Lipschitz continuous loss function. 
Assume the entries of the random matrix $\X \in \{-1,+1\}^{n_1\times n_2}$ are independent.
Let $n=n_1 n_2$. 
We perturb each of the entries in matrix $\X$ as $\psi(x_{ij}, \eta_{ij}) = x_{ij} \eta_{ij}$, where $P[\eta_{ij}=1] = q$ and $P[\eta_{ij}=-1] = 1-q$. 
Let $\Lh(\vtheta) = \frac{1}{n}\sum_{ij}f(x_{ij}\theta_{ij})$ be the empirical risk of predicting the binary value $x_{ij}\in \{-1, +1\}$ by using ${\rm sgn}(\theta_{ij})$.
Let $ \Lhn(\vtheta) = \frac{1}{n}\sum_{ij}f(\psi(x_{ij}, \eta_{ij})\theta_{ij})$ be the empirical risk of predicting the privatized data $\psi(x_{ij},\eta_{ij})$  by using ${\rm sgn}(\theta_{ij})$.

\begin{theorem}\label{rate:max-margin}
   The model above fulfills Assumption~\ref{asm:closeloss} with probability 1(i.e., $\delta=0$), scale function $c(\vtheta) = \|\vtheta\|_1$ and rate $\eps_{n,0}=\O(1/n)$. 
   The model also fulfills Assumption~\ref{asm:difference} with $\eps'_n \in \O(\frac{K(1-q)}{n})$ and scale function $c(\vtheta) = \|\vtheta\|_1$. 
\end{theorem}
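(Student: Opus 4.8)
The plan is to verify both assumptions by elementary, \emph{deterministic} bounds that exploit two structural features of this problem: the perturbed value $\psi(x_{ij},\eta_{ij})=x_{ij}\eta_{ij}$ always lies in $\{-1,+1\}$, and $f$ is $K$-Lipschitz. The main point is that, because each entry is evaluated only at $\pm\theta_{ij}$, the Lipschitz property converts any discrepancy into a multiple of $|\theta_{ij}|$, and summing over the $n$ entries produces exactly $\|\vtheta\|_1$. No concentration inequality is needed, which is why the first guarantee can hold with probability one ($\delta=0$). Throughout I write $\psi_{ij}$ for $\psi(x_{ij},\eta_{ij})$.

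For Assumption \ref{asm:closeloss}, I would first express each summand as an affine function of the sign $\psi_{ij}$,
\[
f(\psi_{ij}\theta_{ij}) = \frac{f(\theta_{ij})+f(-\theta_{ij})}{2} + \psi_{ij}\,\frac{f(\theta_{ij})-f(-\theta_{ij})}{2},
\]
which is exact since $\psi_{ij}\in\{-1,+1\}$. Subtracting $\Ln$ from $\Lhn$ cancels the symmetric part and leaves
\[
\Lhn(\vtheta)-\Ln(\vtheta) = \frac{1}{n}\sum_{ij}\bigl(\psi_{ij}-\E[\psi_{ij}]\bigr)\,\frac{f(\theta_{ij})-f(-\theta_{ij})}{2}.
\]
I would then bound $|\psi_{ij}-\E[\psi_{ij}]|\le 2$ deterministically and $|f(\theta_{ij})-f(-\theta_{ij})|\le 2K|\theta_{ij}|$ by Lipschitzness, giving $|\Lhn(\vtheta)-\Ln(\vtheta)|\le \frac{2K}{n}\|\vtheta\|_1$ for \emph{every} realization. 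This establishes the assumption with $c(\vtheta)=\|\vtheta\|_1$ and $\eps_{n,0}=2K/n\in\O(1/n)$.

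For Assumption \ref{asm:difference}, I would condition on $x_{ij}$ and integrate out the sign noise. Since $\eta_{ij}=1$ with probability $q$ and $\eta_{ij}=-1$ with probability $1-q$,
\[
\E_{\eta}[f(x_{ij}\eta_{ij}\theta_{ij})] = q\,f(x_{ij}\theta_{ij}) + (1-q)\,f(-x_{ij}\theta_{ij}),
\]
so the per-entry gap against $\L$ equals $(1-q)\bigl(f(-x_{ij}\theta_{ij})-f(x_{ij}\theta_{ij})\bigr)$. Using $|x_{ij}|=1$ and Lipschitzness this is at most $2K(1-q)|\theta_{ij}|$ pointwise in $x_{ij}$; taking the expectation over $x$ and summing over $ij$ yields $|\Ln(\vtheta)-\L(\vtheta)|\le \frac{2K(1-q)}{n}\|\vtheta\|_1$, i.e. $\eps'_n\in\O(K(1-q)/n)$ with the same scale function.

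I do not anticipate a substantial obstacle; the only care required is bookkeeping of the two expectations — the inner expectation over $\eta$ alone (which isolates the factor $1-q$ for Assumption \ref{asm:difference}) versus the joint expectation over both $x$ and $\eta$ (which produces the centered term $\psi_{ij}-\E[\psi_{ij}]$ for Assumption \ref{asm:closeloss}). I would close by noting the consistency check $\eps'_n = 2K(1-q)/n \le 2K/n = \eps_{n,0}$, valid since $q\in[0,1]$, so the hypothesis $\eps'_n\le\eps_{n,\delta}$ of Theorem \ref{thm:pri_lossconsistency} holds automatically and the perturbed loss-consistency bound applies.
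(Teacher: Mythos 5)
Your proof is correct and takes essentially the same route as the paper's: both arguments are deterministic, exploit the fact that $\psi(x_{ij},\eta_{ij})\theta_{ij}\in\{\theta_{ij},-\theta_{ij}\}$, and convert every discrepancy into $\|\vtheta\|_1$ via the Lipschitz property; your treatment of Assumption \ref{asm:difference} (integrate out $\eta$ conditionally on $x$ to isolate the factor $1-q$, bound the per-entry gap by $2K(1-q)|\theta_{ij}|$, then take the expectation over $x$) is the paper's argument almost verbatim, and the constants match ($\eps_{n,0}=2K/n$, $\eps'_n=2K(1-q)/n$). The one place where you genuinely improve on the paper is the final step of the Assumption \ref{asm:closeloss} bound. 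The paper expands $f(x_{ij}\eta_{ij}\theta_{ij})$ with indicator functions and then bounds $|f(\theta_{ij})|$ and $|f(-\theta_{ij})|$ \emph{separately} by $K|\theta_{ij}|$; that step silently requires $f(0)=0$ and fails as written for the hinge loss $f(z)=\max(0,1-z)$ that motivates the model. Your symmetric/antisymmetric split cancels the $\psi$-independent part of $f(\psi_{ij}\theta_{ij})$ exactly, so you only ever need $|f(\theta_{ij})-f(-\theta_{ij})|\le 2K|\theta_{ij}|$, which holds for every $K$-Lipschitz $f$ with no normalization at the origin. Your closing observation that $\eps'_n\le\eps_{n,0}$ is not part of the statement being proved, but it is the hypothesis needed to invoke Theorem \ref{thm:pri_lossconsistency}, so it is a worthwhile addition.
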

As comparison, the rate with original data \cite{honorio2014unified} is $\O(1/n)$.

\paragraph{Data Irrecoverability.}
We show that data irrecoverability can be achieved in this model.
Let $\X$ denote the original data, $\X_\eta$ denote the perturbed data. 
That is, the $(i, j)$-th entry of $\X_\eta$ is $\psi(x_{ij}, \eta_{ij}) = x_{ij} \eta_{ij}$, where $P[\eta_{ij}=1] = q$ and $P[\eta_{ij}=-1] = 1-q$.
\begin{theorem} \label{thm:max-margin_pri}
    If we perturb $\X$ as mentioned above, $\gamma \leq 1- \frac{2}{n}$ and $q \in (1/2, 1/2 + \frac{(1-\gamma) \log 2}{8} )$, then any adversary will fail to recover the original data with probability greater than $\gamma$. 
	That is,
	$
	    \inf_\A \P_{\X,\veta} [\A(\X_\eta) \neq \X] \geq \gamma.
	$
\end{theorem}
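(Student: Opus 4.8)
The plan is to follow the same information-theoretic template used in Theorems \ref{thm:MLE_pri}--\ref{thm:PCA_pri}: reduce the recovery problem to a mutual-information bound via Fano's inequality, and then control that mutual information through the per-entry sign-flip channel. First I would invoke Fano's inequality \citep{cover2012elements} exactly as in the proof of Theorem \ref{thm:irrecov}, writing
\[
\inf_\mathcal{A} \P_{\X,\veta}[\mathcal{A}(\X_\veta) \neq \X] \geq 1 - \frac{\MI(\X;\X_\veta) + \log 2}{H(\X)},
\]
so that the desired lower bound $\gamma$ reduces to establishing $\MI(\X;\X_\veta) + \log 2 \leq (1-\gamma)H(\X)$.

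Next, following the restricted-ensemble strategy described in Section \ref{sec:examples}, I would lower bound the worst-case recovery error over the subclass in which the entries of $\X$ are i.i.d.\ uniform on $\{-1,+1\}$; since a hard subclass makes the full class at least as hard, this yields a valid lower bound. Under this ensemble $H(\X) = n\log 2$ with $n = n_1 n_2$. The perturbation $\psi(x_{ij},\eta_{ij}) = x_{ij}\eta_{ij}$ acts on each coordinate as an independent binary symmetric channel that flips the sign with probability $1-q$, and because both $\X$ and the noise factorize over entries, the mutual information decomposes as $\MI(\X;\X_\veta) = \sum_{ij}\MI(x_{ij};\psi(x_{ij},\eta_{ij}))$. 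Each term equals $\log 2 - H_b(q)$, where $H_b(q) = -q\log q - (1-q)\log(1-q)$ is the binary entropy, so $\MI(\X;\X_\veta) = n(\log 2 - H_b(q))$.

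The final step is to convert the hypothesis $q \in (1/2,\, 1/2 + \tfrac{(1-\gamma)\log 2}{8})$ into the required bound. I would establish the linear estimate $\log 2 - H_b(q) \leq 4(q - 1/2)$ for $q \in [1/2,1)$ (both sides vanish at $q=1/2$, and a derivative comparison, using $H_b'(q)=\log\frac{1-q}{q}$, shows the gap stays nonnegative throughout). Combined with the stated range of $q$, this gives $\MI(\X;\X_\veta) < \tfrac12(1-\gamma)n\log 2$. The constraint $\gamma \leq 1 - 2/n$ is equivalent to $\log 2 \leq \tfrac12(1-\gamma)n\log 2$, and adding the two inequalities yields $\MI(\X;\X_\veta) + \log 2 < (1-\gamma)n\log 2 = (1-\gamma)H(\X)$, which is exactly what the Fano bound requires.

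The main obstacle is the tight bookkeeping of constants: the factor $4$ in the binary-entropy inequality, the even split of the budget $(1-\gamma)H(\X)$ between the mutual-information and the $\log 2$ terms, and the threshold $8 = 2\cdot 4$ in the admissible range of $q$ all have to line up. Proving the linear bound $\log 2 - H_b(q)\leq 4(q-1/2)$ cleanly, rather than the looser quadratic $\chi^2$ estimate $\log 2 - H_b(q)\leq(2q-1)^2$ (which would produce a square-root condition on $q$ instead of the stated linear one), is the single genuinely nontrivial analytic step; everything else is the standard Fano reduction already carried out for the earlier irrecoverability theorems.
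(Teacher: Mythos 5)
Your argument is correct, and it sits inside the same Fano template the paper uses (Markov chain $\X \to \X_\veta \to \Xh$, data processing, uniform restricted ensemble with $H(\X)=n\log 2$, per-entry factorization of the mutual information). Where you genuinely diverge is in the central estimate: the paper does not compute $\MI(x_{ij}; x_{ij}\eta_{ij})$ exactly but instead upper-bounds it by the average pairwise KL divergence \citep{yu1997assouad}, obtaining $\tfrac14(2q-1)\log\tfrac{q}{1-q}$ per entry, then applies $\log t \le t-1$ to reduce the admissibility condition to a quadratic inequality in $q$, whose solution it then further relaxes. You instead evaluate the per-entry mutual information in closed form as the binary-symmetric-channel value $\log 2 - H_b(q)$ and linearize it via $\log 2 - H_b(q) \le 4(q-\tfrac12)$. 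Your route is cleaner and lands exactly on the constant $\tfrac{(1-\gamma)\log 2}{8}$ in the theorem statement, whereas the paper's chain of relaxations ends with the looser range $q\in(\tfrac12,\tfrac12+\tfrac{1-\gamma}{8})$, from which the stated range follows only a fortiori (and the paper's intermediate quadratic step contains some sloppiness in the constants). One small caution on your "single nontrivial analytic step": the derivative comparison $\log\tfrac{q}{1-q} \le 4$ proving the linear bound is valid only for $q \le (1+e^{-4})^{-1}\approx 0.982$, not on all of $[\tfrac12,1)$ by that argument alone; this is harmless here because the hypothesis forces $q \le \tfrac12 + \tfrac{\log 2}{8} < 0.6$, but you should state the bound on the restricted interval (or check the endpoint $q\to 1$ separately) rather than claim it "throughout" by the derivative comparison.
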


Let $\X,\X' \in \XS = \{-1,+1\}^{n_1 \times n_2}$ be two matrices and let $d(\X, \X')$ be the number of different entries between $\X$ and $\X'$. 
The maximum neighborhood size at radius $t$ is defined as:
\begin{align*}
N_{\max}(t) = \max_{\X\in \XS}\sum_{\X'\in \XS}\mathbf{1}[d(\X, \X')\leq t] = \binom{n}{t}
\end{align*}
We now state our theorem.
\begin{theorem}\label{thm:max-margin_priext}
    Under the same conditions as in Theorem~\ref{thm:max-margin_pri}, if $\gamma \leq 1- \frac{2}{n}$, $G_{\gamma, n, t}=(1-\gamma) \left(\log 2+\frac{t}{n}(\log \frac{t}{n} - 1 ) \right) -\frac{\log 2}{n}$ and
    \begin{align*}
    q\in (\frac{1}{2}, \frac{1}{2}+\frac{-G_{\gamma, n, t}+\sqrt{G_{\gamma, n, t}(G_{\gamma, n, t} + 8)}}{8})
    \end{align*}
    then any adversary will fail to recover the original data with probability greater than $\gamma$. 
	That is,
	$
	    \inf_\A \P_{\X,\veta} [d(\A(\X_\eta), \X) > t] \geq \gamma.
	$
\end{theorem}

\section{Concluding Remarks} \label{sec:future_work}
As a corollary of our result on perturbed loss consistency, we believe that norm consistency, sparsistency and sign consistency as in \cite{honorio2014unified} can also be proved under our framework of data irrecoverability. 
In addition, there are several problems that our current framework cannot accommodate, such as nonparametric clustering with exponential families, for instance.
We need to explore new mathematical characterizations in the context of these problems.

\bibliographystyle{abbrv}
\bibliography{references}

\clearpage
\appendix
\onecolumn

\section{Detailed Proofs} \label{app:proofs}

\subsection{Proof of Theorem~\ref{thm:irrecov}}
\begin{proof}
We invoke Definition~\ref{def:dp} for sets $\mathcal{S}$ of size 1.
In this case we have $\mathcal{S}=\{z\}$ for $z \in \mathcal{Z}$, and therefore $\mathcal{M}(x) \in \mathcal{S}$ is equivalent to $\mathcal{M}(x) = z$.
Furthermore, we use arbitrary datasets $x$ and $x'$.

We can describe the data process with the Markov chain $X \rightarrow \M(X) \rightarrow \A(\M(X))$.
Next, for a fixed and arbitrary $x' \in \XS$, we define the distribution $\mathbb{Q}$ as follows: 
\begin{align*}
    \mathbb{Q}(z) = \frac{e^\epsilon \P_{\M}(\M(x')=z)+\delta}{\int_{z'\in \Z} (e^\epsilon \P_{\M}(\M(x')=z')+\delta) dz'}
\end{align*}
The denominator is a partition function.
It is easy to see that $\mathbb{Q}$ is a valid distribution since $\int_{z \in \mathcal{Z}} \mathbb{Q}(z) dz = 1$.
Then we can bound the mutual information between $X$ and $\M(X)$ in the following way:
\begin{align*}
    \quad \MI(X;\M(X))
    &\leq \frac{1}{|\XS|} \sum_{x \in \XS}  \KL(\P_{\M}(\M(x)) | \mathbb{Q}) \\
    & = \frac{1}{|\XS|} \sum_{x \in \XS} \int_{z \in \Z} \P_{\M}(\M(x)=z) \log (\frac{\P_{\M}(\M(x)=z)}{\mathbb{Q}(z)})dz \\
    & \leq  \frac{1}{|\XS|} \sum_{x \in \XS} \int_{z \in \Z} \P_{\M}(\M(x)=z)
    \log (\frac{e^\epsilon \P_\M(\M(x') = z) + \delta}{\mathbb{Q}(z)})dz \\
    & =  \frac{1}{|\XS|} \sum_{x \in \XS} \int_{z \in \Z} \P_{\M}(\M(x)=z)dz
    \log (\int_{z'\in \Z} (e^\epsilon \P_{\M}(\M(x')=z')+\delta )dz')\\
    & =  \log \int_{z'\in \Z} (e^\epsilon \P_{\M}(\M(x')=z')+\delta )dz'
\end{align*}
The first inequality comes from equation 5.1.4 in \cite{duchifano}.
The second inequality comes from the Definition~\ref{def:dp}.
Since $x'$ is an arbitrary choice in our argument, we can take the infimum with respect to $x'$ and get a tight bound on the mutual information:
\begin{align} \label{eq:irrecov_MI}
    \MI(X;\M(X)) &\leq \inf_{x' \in \XS}  \log \int_{z\in \Z} (e^\epsilon \P_{\M}(\M(x')=z)+\delta )dz \nonumber\\
    & = b(\epsilon,\delta)
\end{align}
Then, by  Fano's inequality \cite{cover2012elements}, we have:
\begin{align*} 
    \inf_\A \P_{X,\M}[\A(\M(X)) \neq X]
    &\geq 1 - \frac{\MI(X;\M(X)) + \log 2}{H(X)}\\
    &\geq 1 - \frac{b(\epsilon,\delta) + \log 2}{H(X)},
\end{align*} 
and we prove our claim.
\end{proof}

\subsection{Proof of Theorem~\ref{thm:irrecovext}}
\begin{proof}
We proceed as in Theorem~\ref{thm:irrecov}, except for the Fano's inequality step.
We now use the Fano's inequality from \cite{duchi2013distance} together with eq.\eqref{eq:irrecov_MI} and obtain:
\begin{align*} 
    \inf_\A \P_{X,\M}[d(\A(\M(X)), X) > t]
    &\geq 1 - \frac{\MI(X;\M(X)) + \log 2}{\log\frac{|\XS|}{N_{\max}(t)}}\\
    &\geq 1 - \frac{b(\epsilon,\delta) + \log 2}{\log\frac{|\XS|}{N_{\max}(t)}},
\end{align*} 
and we prove our claim.
\end{proof}

\subsection{Proof of Corollary~\ref{cor:zero}}
\begin{proof}
    When $\delta = 0$, since $\int_{z \in \Z} \P_\M(\M(x') = z)dz = 1$ for all $x' \in \XS$, we have:
    \begin{align*}
         b(\epsilon, \delta) &= \log \inf_{x' \in \XS} \int_{z\in \Z} (e^\epsilon \P_{\M}(\M(x')=z))dz \\
                     & = \log e^\epsilon \int_{z\in \Z} \P_{\M}(\M(x')=z)dz\\
                     & = \epsilon
    \end{align*}
    By Theorem~\ref{thm:irrecov}, we prove our claim.
\end{proof}

\subsection{Proof of Theorem~\ref{thm:pri_lossconsistency}}
\begin{proof}

By definition, we have 
\begin{eqnarray} \label{proof_thm2_0}
	\Ln(\vthetan^*) - \Ln(\vtheta^* )	\leq 0,
\end{eqnarray}
because $\vthetan^* = \argmin_{\vtheta \in \BTheta}{\Ln(\vtheta)}$.
By Assumptions~\ref{asm:closeloss} and \ref{asm:superreg}, and by setting $\lambda_n = \alpha \eps_{n,\delta}$ for some $\alpha \geq 2$, then we have
\begin{equation} \label{proof_thm2_1}
    \begin{split}
        \Ln(\vthetahn) - \Ln(\vthetan^*) \leq & \eps_{n,\delta} (-\alpha r(c(\vthetahn)) +  c(\vthetahn)) + \\
        & \eps_{n,\delta}(\alpha \Reg(\vthetan^*) + c(\vthetan^*)) + \xi
    \end{split}
\end{equation}
By Assumption~\ref{asm:difference}, and since $\eps_n' \leq \eps_{n,\delta}$, we have
	\begin{eqnarray*}
	\L(\vthetahn) - \L(\vtheta^*)
	&= &	 (\L(\vthetahn) - \Ln(\vthetahn))+ (\Ln(\vthetahn) - \Ln(\vthetan^*)) +
	(\Ln(\vthetan^*) - \Ln(\vtheta^*)) +(\Ln(\vtheta^*)-\L(\vtheta^*))\\
	&\leq&  \eps'_n c(\vthetahn) + \eps_{n,\delta} (-\alpha r(c(\vthetahn)) +  c(\vthetahn)) +
	{n,\delta}(\alpha \Reg(\vthetan^*) + c(\vthetan^*)) + \xi +0 + \eps'_n c(\vtheta^*)\\
	& \leq & \eps_{n,\delta} (-\alpha r(c(\vthetahn)) + 2 c(\vthetahn)) +
	\eps_{n,\delta}(\alpha \Reg(\vthetan^*) + c(\vthetan^*)) + \xi + \eps'_n c(\vtheta^*)\\
	& \leq & \eps_{n,\delta}(\alpha\Reg(\vthetan^*)+c(\vthetan^*)) + \eps'_n c(\vtheta^*) +\xi\\.
	\end{eqnarray*}
The first inequality is based on Assumption~\ref{asm:difference} and the two inequalities (\ref{proof_thm2_0}) and (\ref{proof_thm2_1}) mentioned above. 
The second inequality comes from $\eps_n' \leq \eps_{n,\delta}$.
The third inequality comes from $\alpha \geq 2$, Assumption~\ref{asm:superreg} and the elimination of the negative terms.
\end{proof}

\subsection{Proof of Lemma~\ref{lemma:subgaussian}}
Since $\tf_j(\x)$ follows a sub-Gaussian distribution, then we have $\E_\x[\exp{\lambda(\tf_j(\x) - \E_\x[\tf_j(\x)])}] \leq \exp{\frac{\sigma_x^2 \lambda^2}{2}}$.
Since the conditional random variable $\vpsi_j(\x,\veta)$ for any fixed $\x$ follows sub-Gaussian distribution, then we have $\E_\veta [\exp{\lambda(\vpsi_j(\x,\veta) - \E_\veta[\vpsi_j(\x,\veta)])}|\x] \leq \exp{\frac{\sigma_\eta^2 \lambda^2}{2}}$. 
Thus, for random variable $\vpsi_j(\x,\veta)$ for any $\x$ and $\veta$, we can get:
\begin{eqnarray*}
	\E_{\x,\veta}[\exp{\lambda(\vpsi_j(\x,\veta) - \E_{\x,\veta}[\vpsi_j(\x,\veta)])}]
    &=& \E_{\x,\veta}[\exp{\lambda(\vpsi_j(\x,\veta) -\tf_j(\x) + \tf_j(\x) - \E_{\x,\veta}[\vpsi_j(\x,\veta)])}] \\
    &=& \E_{\x,\veta}[\exp{\lambda(\vpsi_j(\x,\veta) -\E_\veta[\vpsi_j(x,\eta)] + \tf_j(\x) - \E_{\x}[\tf_j(\x)])}]\\
    &=& \E_{\x}[\exp{\lambda(\tf_j(\x) - \E_{\x}[\tf_j(\x)])} \E_\veta[\exp{\lambda(\vpsi_j(\x,\veta) - \E_\veta[\vpsi_j(\x,\veta)])}| \x]]\\
    &\leq& \E_{\x}[\exp{\lambda(\tf_j(\x) - \E_{\x}[\tf_j(\x)])} \exp{\frac{\sigma_\eta^2 \lambda^2}{2}}]\\
    &=& \exp{\frac{(\sigma_x^2+\sigma_\eta^2) \lambda^2}{2}}
\end{eqnarray*}
Thus, $\vpsi_j(\x,\veta)$ will also be sub-Gaussian with parameter $\sigma$ such that $\sigma^2 = \sigma_x^2+\sigma_\eta^2$.

\subsection{Proof of Lemma~\ref{lemma:finitevariance}}
Since $\tf_j(\x)$ has variance at most $\sigma_x^2$ and $\vpsi_j(\x,\veta)$ for any fixed $\x$ has variance at most $\sigma_\eta^2$.
Then for random variable $\vpsi_j(\x,\veta)$ for $\x$ and $\veta$, we have:
\begin{eqnarray*}
	\E_{\x,\veta}[(\vpsi_j(\x,\veta) - \E_{\x,\veta}[\vpsi_j(\x,\veta)])^2]
    &=& \E_{\x,\veta}[(\vpsi_j(\x,\veta) - \tf_j(\x) + \tf_j(\x) -\E_{\x,\veta}[\vpsi_j(\x,\veta)])^2]\\
    &=& \E_{\x,\veta}[(\vpsi_j(\x,\veta) - \tf_j(\x))^2 + \\
    & & 2(\vpsi_j(\x,\veta) - \tf_j(\x))(\tf_j(\x) -\E_{\x,\veta}[\vpsi_j(\x,\veta)]) \\
    & & + (\tf_j(\x) -\E_{\x,\veta}[\vpsi_j(\x,\veta)])^2] \\
    &=& \E_{\x}[\E_\veta[((\vpsi_j(\x,\veta) - \E_{\veta}[\vpsi_j(\x,\veta)])^2) | \x]] +\\
    &  &  2\E_{\x}[(\tf_j(\x) -\E_{\x}[\tf_j(\x)])E_{\veta}[\vpsi_j(\x,\veta) - \E_{\veta}[\vpsi_j(\x,\veta)]]]\\
    & & + \E_{\x}[(\tf_j(\x) -\E_{\x}[\tf_j(\x)])^2]\\
    & \leq & \sigma_\eta^2 + \sigma_x^2
\end{eqnarray*}
We can have last inequality because $E_{\veta}[\vpsi_j(\x,\veta) - \E_{\veta}[\vpsi_j(\x,\veta)]] = 0$.
Thus,  $\vpsi_j(\x,\veta)$ has variance at most $\sigma_\eta^2 + \sigma_x^2$.

\subsection{Proof of Theorem~\ref{rate:MLE}}
\begin{claim}[]\label{clm:mle0}
    The maximum likelihood estimation for exponential family distribution fulfills Assumption~\ref{asm:closeloss} with probability at least $1-\delta$, scale function $c(\vtheta) = \|\vtheta\|$ and rate $\eps_{n,\delta}$, provided that the dual norm fulfills $\|\Thn-\Tn\|_* \leq \eps_{n,\delta}$.
    
    The problem also fulfills Assumption~\ref{asm:difference} with $\eps'_n =0$.
\end{claim}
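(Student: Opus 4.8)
The plan is to reduce both assumptions to the linear structure of the exponential-family negative log-likelihood, exploiting that the log-partition term $\log\Z(\vtheta)$ depends on $\vtheta$ alone and is therefore common to all four loss functions. Once this is noticed, each assumption collapses to an elementary inequality, so the work is mostly bookkeeping.

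First I would verify Assumption \ref{asm:closeloss}. Subtracting the definitions of $\Lhn$ and $\Ln$, the two copies of $\log\Z(\vtheta)$ cancel and only a linear functional of the statistics survives:
\begin{align*}
\Lhn(\vtheta) - \Ln(\vtheta) = \dotprod{\Tn - \Thn}{\vtheta}.
\end{align*}
Applying the generalized Cauchy--Schwarz (Hölder) inequality, which pairs the chosen norm with its dual, yields $|\Lhn(\vtheta) - \Ln(\vtheta)| \le \|\Thn - \Tn\|_* \, \|\vtheta\|$. Invoking the stated hypothesis $\|\Thn - \Tn\|_* \le \eps_{n,\delta}$, which holds with probability at least $1-\delta$, I identify the scale function $c(\vtheta) = \|\vtheta\|$ and conclude $|\Lhn(\vtheta) - \Ln(\vtheta)| \le \eps_{n,\delta}\, c(\vtheta)$ uniformly over $\vtheta \in \BTheta$, which is precisely the scaled uniform convergence required.

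Next I would verify Assumption \ref{asm:difference}. The same cancellation gives $\Ln(\vtheta) - \L(\vtheta) = \dotprod{\T - \Tn}{\vtheta}$, so it suffices to show $\Tn = \T$. The MLE examples use unbiased data perturbation (Definition \ref{def:unbiased}), i.e.\ $\E_{\ND}[\vpsi(\x,\veta)] = \tf(\x)$ for every $\x$. The law of iterated expectation then gives
\begin{align*}
\Tn &= \E_{\DD,\ND}[\vpsi(\x,\veta)] \\
&= \E_{\DD}\bigl[\E_{\ND}[\vpsi(\x,\veta)\mid\x]\bigr] \\
&= \E_{\DD}[\tf(\x)] = \T.
\end{align*}
Hence $\Ln(\vtheta) - \L(\vtheta) \equiv 0$, so Assumption \ref{asm:difference} holds with $\eps'_n = 0$.

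I do not expect a serious obstacle: both parts follow from the single observation that the four losses differ only in their linear term, combined with the dual-norm inequality and the unbiasedness of $\vpsi$. The only genuine analytic work --- producing a concrete value for $\eps_{n,\delta}$ from a concentration bound on $\|\Thn - \Tn\|_*$ --- is deliberately deferred to Theorem \ref{rate:MLE}, where it is supplied by the sub-Gaussian Lemma \ref{lemma:subgaussian} or the finite-variance Lemma \ref{lemma:finitevariance}.
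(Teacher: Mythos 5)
Your proposal is correct and follows essentially the same route as the paper's proof: the log-partition terms cancel, the dual-norm (H\"older) inequality together with the hypothesis $\|\Thn-\Tn\|_*\leq\eps_{n,\delta}$ gives Assumption \ref{asm:closeloss} with $c(\vtheta)=\|\vtheta\|$, and unbiasedness of $\vpsi$ plus the tower property gives $\Tn=\T$, hence $\Ln\equiv\L$ and $\eps'_n=0$. The only cosmetic difference is that you spell out the iterated-expectation step that the paper leaves implicit and you derive the dual-norm bound directly rather than citing the corresponding claim in the prior work.
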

\begin{proof}
	First we show that $\Ln(\vtheta) = \L(\vtheta)$ for any $\vtheta$.
	Recall that $\mathbb{E}_{\veta}[\psi(\x,\veta)] = t(\x)$.
	We have
	\begin{eqnarray*}
	\Ln(\vtheta) &=& -\dotprod{\Tn}{\vtheta} + \log\Z(\vtheta) \\
	&=& -\dotprod{\T}{\vtheta} + \log\Z(\vtheta) \\
	&=&	\L(\vtheta)\\
	\end{eqnarray*}
	For proving that Assumption~\ref{asm:difference} holds, note that $\Ln(\vtheta) = \L(\vtheta)$ for any $\vtheta$, and thus $\eps'_n = 0$.
	
	For proving that Assumption~\ref{asm:closeloss} holds, we invoke Claim i in \cite{honorio2014unified}, that is for all $\vtheta$
	\begin{align*}
	     |\Lhn(\vtheta) - \Ln(\vtheta)| &= |\dotprod{\Thn - \Tn}{\vtheta}| \\
	    &\leq \|\Thn - \Tn\|_*\|\vtheta\| \\
	    &\leq \eps_{n,\delta}\|\vtheta\| 
	\end{align*}
\end{proof}

Let $\vtheta \in \BTheta = \R^p$. Let $\|\cdot\|_* = \|\cdot\|_\infty$,  $\|\cdot\|=\|\cdot\|_1$.
According to Lemma~\ref{lemma:subgaussian} and Lemma~\ref{lemma:finitevariance}, the variance of  $\vpsi_j(\x, \veta)$ is $\sigma^2 = \sigma^2_x + \sigma^2_\eta$.
We now focus on proving that $\|\Thn - \Tn\|_* \leq \eps_{n,\delta}$ which is the precondition of Claim~\ref{clm:mle0}.

	\paragraph{Sub-Gaussian case and $\ell_1$-norm.}
	 For sub-Gaussian $\psi_j(\x,\veta), 1\leq j \leq p$ with parameter $\sigma$ and $\mathit{l}_1$-norm, by the union bound and independence:
	\begin{eqnarray*}
    \P[\|\Thn - \Tn\|_* > \eps]
	&=& \P[(\exists j) |\frac{1}{n}\sum_{i}^{}(\psi_j(\x\si{i},\veta\si{i})) - \E_{\x\sim \DD}[t_j(\x)]| > \varepsilon] \\
	& = & \P[(\exists j) |\frac{1}{n}\sum_{i}^{}(\psi_j(\x\si{i},\veta\si{i})) - 
	\E_{\x\sim \DD}[\E_{\veta \sim \ND}[\psi_j(\x,\veta)]]| > \varepsilon] \\
	&\leq& 2p\P[\frac{1}{n}\sum_{i}^{}(\psi_j(\x\si{i},\veta\si{i})) - 
	\E_{\x\sim \DD}[\E_{\veta \sim \ND}[\psi_j(\x,\veta)]] > \varepsilon]\\
	& = & 2p\P[\lexp{t(\sum_{i}^{}(\psi_j(\x\si{i},\veta\si{i})) - 
	n\E_{\x \sim \DD}[\E_{\veta \sim \ND}[\psi_j(\x,\veta)]])} > \lexp{tn\varepsilon}] \\
	&\leq& 2p\E[\lexp{(t(\sum_{i}^{}(\psi_j(\x\si{i},\veta\si{i})) - 
	n\E_{\x \sim \DD}[\E_{\veta \sim \ND}[\psi_j(\x,\veta)]]}]/\lexp{tn\varepsilon}\\
	& = & 2p\prod_{i=1}^{n}\mathbb{E}[\lexp{(t(\psi_j(\x\si{i},\veta\si{i})) - 
	\E_{\x \sim \DD}[\E_{\veta \sim \ND}[\psi_j(\x,\veta)]]}]/\lexp{tn\varepsilon} \\
	& \leq &  2p\ \lexp{\frac{\sigma^2t^2n}{2}-tn\varepsilon} \\ 
	&\leq& 2p\ \lexp{-\frac{n\varepsilon^2}{2\sigma^2}} = \delta
	\end{eqnarray*}
	By solving for $\eps$, we have $\eps_{n,\delta} = \sigma \sqrt{\sfrac{2}{n}(\log{p} + \log\sfrac{2}{\delta})}$.

	\paragraph{Finite variance case and $\ell_1$-norm.}
	 For $\psi_j(\x,\veta), 1\leq j \leq p$ with finite variance $\sigma^2$ and $\mathit{l}_1$-norm, by union bound and Chebyshev's inequality:
	\begin{eqnarray*}
	\P[\|\Thn - \Tn\|_* > \eps]
	&=& \P[(\exists j) |\frac{1}{n}\sum_{i}^{}(\psi_j(\x\si{i},\veta\si{i})) - \E_{\x\sim \DD}[t_j(\x)]| > \varepsilon] \\
	& = & \P[(\exists j) |\frac{1}{n}\sum_{i}^{}(\psi_j(\x\si{i},\veta\si{i})) -
	\E_{\x\sim \DD}[\E_{\veta \sim \ND}[\psi_j(\x,\veta)]]| > \varepsilon] \\
	&\leq& p\P[|\frac{1}{n}\sum_{i}^{}(\psi_j(\x\si{i},\veta\si{i})) - 
	\E_{\x\sim \DD}[\E_{\veta \sim \ND}[\psi_j(\x,\veta)]] |> \varepsilon]\\
	&\leq& p\frac{\sigma^2}{n\varepsilon^2}
	\end{eqnarray*}
	By solving for $\varepsilon$, we have $\varepsilon_{n,\delta} = \sigma\sqrt{\frac{p}{n\delta}}$.

\subsection{Proof of Theorem~\ref{thm:MLE_pri}}
\begin{proof}
	Using Fano's inequality, we show that it will be impossible to recover the original data $\X$ up to permutation with probability greater than $1/2$. 
	We can describe the data process with the Markov chain $\X \rightarrow \X_\eta \rightarrow \Thn' \rightarrow \Thn \rightarrow \Xh$, where $\Xh = \A(\Thn)$.
	The mutual information of $\X, \X_\eta$ can be bounded by using the pairwise KL divergence bound \cite{yu1997assouad}.
	\begin{align} \label{eq:MLE_pri_MI}
	\MI[\X;\Xh] & \leq\MI[\X;\Thn] \nonumber\\
	& \leq \MI[\X;\X_\eta]\nonumber\\
	& = n\mathbb{I}[\x\si{i},\x_\veta\si{i}]\nonumber\\
	& \leq \frac{n \sqrt{p}}{|\{-1,+1\}|^2}\sum_{x\si{i}_j\in \{-1,+1\}}\sum_{{x'}\si{i}_j\in \{-1,+1\}}\KL(P_{x\si{i}_{\eta j}|x_j\si{i}} | P_{x\si{i}_{\eta j}|{x'}\si{i}_j})\nonumber\\
	& = \frac{n \sqrt{p}}{|\{-1,+1\}|^2}\sum_{x\si{i}_j\in \{-1,+1\}}\sum_{{x'}\si{i}_j\in \{-1,+1\}}\KL(\N(x\si{i}_j,\sigma_\eta^2) | \N({x'}\si{i}_j,\sigma_\eta^2))\nonumber\\
	& = \frac{n \sqrt{p}}{|\{-1,+1\}|^2}\sum_{x\si{i}_j\in \{-1,+1\}}\sum_{{x'}\si{i}_j\in \{-1,+1\}}\frac{(x\si{i}_j-{x'}\si{i}_j)^2}{2\sigma_\eta^2}\nonumber\\
	& \leq \frac{n \sqrt{p}}{|\{-1,+1\}|^2}(|\{-1,+1\}|^2-|\{-1,+1\}|)\frac{2}{\sigma_\eta^2}\nonumber\\
	& \leq \frac{2 n \sqrt{p}}{\sigma_\eta^2}
	\end{align}
	Because we require the recovery of $\X \in \XS$ up to permutation, we have $|\XS| = \binom{2^{\sqrt{p}}}{n} \geq \frac{2^{\sqrt{p}n}}{n^n}$. 
	By Fano's inequality \cite{cover2012elements} and since $H(\X) \leq \log |\XS|$,
	\begin{align*}
	\mathbb{P}[\Xh \neq \X] &\geq 1-\frac{\mathbb{I}[\X;\Thn]+\log 2 }{\log |\XS|} \\
	&\geq 1-\frac{\frac{2n \sqrt{p}}{\sigma_\eta^2} + \log 2}{n \sqrt{p}\log 2-n\log n}
	\end{align*}
	In order to have $\mathbb{P}[\Xh \neq \X] \geq \gamma$, we require
	\begin{align*}
	\frac{\frac{2n \sqrt{p}}{\sigma_\eta^2} + \log 2}{n \sqrt{p}\log 2-n\log n} &\leq 1-\gamma \\
	\frac{2 + \frac{\sigma_\eta^2 \log 2}{n \sqrt{p}}}{\sigma_\eta^2(\log 2 - \frac{\log n}{\sqrt{p}})} &\leq 1- \gamma\\
	\sigma_\eta^2 &\geq \frac{2}{(1-\gamma)(\log 2 - \frac{\log n}{\sqrt{p}}) - \frac{\log 2}{n \sqrt{p}}} 
	\end{align*}
    Thus, if $n \geq \frac{4}{(1-\gamma)\sqrt{p}}$ and $n \leq 2^{\sqrt{p}/4}$,
    \begin{align*}
        \sigma_\eta^2 \geq \frac{4}{(1-\gamma)\log 2}
    \end{align*}
\end{proof}

\subsection{Proof of Theorem~\ref{thm:MLE_priext}}
\begin{proof}
First, recall that $\Xh = \A(\X_\eta)$.
We proceed as in Theorem~\ref{thm:MLE_pri}, except for the Fano's inequality step.
We now use the Fano's inequality from \cite{duchi2013distance} together with eq.\eqref{eq:MLE_pri_MI} and the fact that $N_{\max}(t) = \binom{n}{t} \binom{2^{\sqrt{p}}-n}{t} \leq \left(\frac{ne}{t}\right)^t \left( \frac{(2^{\sqrt{p}}-n)e}{t}\right)^t$.
Thus, 
\begin{align*}
	\P[d(\Xh, \X) > t] &\geq 1- \frac{\MI(\X;\Xh)+\log 2}{\log \left(\frac{|\XS|}{N_{\max}(t)}\right)}\\
	&\geq 1-\frac{\frac{2n \sqrt{p}}{\sigma_\eta^2} + \log 2}{\log \left(\frac{ t^{2t} }{(e^2n(2^{\sqrt{p}}-n))^t}\frac{2^{\sqrt{p}n}}{n^n}\right)}\\
	&= 1-\frac{\frac{2n \sqrt{p}}{\sigma_\eta^2} + \log 2}{n \sqrt{p}\log 2-n\log n+t(\log \frac{t^2}{n(2^{\sqrt{p}}-n)} - 2 )}\\
\end{align*}
Note that $t\leq n$ in our analysis. 
In order to have $\P[d(\Xh, \X) > t] \geq \gamma$, we require 
\begin{align*}
    &\frac{\frac{2n \sqrt{p}}{\sigma_\eta^2} + \log 2}{n \sqrt{p}\log 2-n\log n+t(\log \frac{t^2}{n(2^{\sqrt{p}}-n)} - 2 )} \leq 1 - \gamma \\
    &\sigma_\eta^2 \geq \frac{2n\sqrt{p}}{(1-\gamma) \left(n \sqrt{p}\log 2-n\log n+t(\log \frac{t^2}{n(2^{\sqrt{p}}-n)} - 2 ) \right) -\log 2} \\
    &\sigma_\eta^2 \geq \frac{2}{(1-\gamma) \left(\log 2 - \frac{\log n}{\sqrt{p}}+\frac{t}{n\sqrt{p}}(\log \frac{t^2}{n(2^{\sqrt{p}}-n)} - 2 ) \right) -\frac{\log 2}{n\sqrt{p}}}
\end{align*}
Thus, if $n \geq \max\{\frac{4}{(1-\gamma)\sqrt{p}}, \frac{5}{4}t+\frac{\log2 + 2t(1-\gamma)( 1- \log t)}{(1-\gamma)\sqrt{p}\log2}\}$ and $n \leq 2^{\sqrt{p}/4}$,
\begin{align*}
    \sigma_\eta^2 \geq \frac{4}{(1-\gamma)\log 2 + (1-\gamma)^2\frac{t}{4}(\log\frac{t^2}{2^{3\sqrt{p}/4}-1}-\frac{\sqrt{p}}{2}\log 2 -2)}
\end{align*}
\end{proof}

\subsection{Proof of Theorem~\ref{rate:GLM}}
\begin{claim}[]\label{clm:GLM0}
    The generalized linear models with fixed design fulfills Assumption~\ref{asm:closeloss} with probability at least $1-\delta$, scale function $c(\vtheta) = \|\vtheta\|$ and rate $\eps_{n,\delta}$, provided that the dual norm fulfills $\|\frac{1}{n}\sum_i{(\psi(y\si{i}, \eta\si{i})-\E_{y \sim \DD_i, \eta \sim \ND_i}[\psi(y\si{i}, \eta\si{i})])\x\si{i}}\|_* \leq \eps_{n,\delta}$.
    
	The problem also fulfills Assumption~\ref{asm:difference} with $\eps'_n =0$ .

\end{claim}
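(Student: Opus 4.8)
The plan is to mirror the proof of Claim \ref{clm:mle0} for the MLE case. Two facts must be established: (a) Assumption \ref{asm:difference} holds with $\eps'_n = 0$, via a pointwise identity $\Ln = \L$; and (b) Assumption \ref{asm:closeloss} holds with $c(\vtheta)=\|\vtheta\|$, via a generalized Cauchy--Schwarz step that reduces everything to the stated dual-norm precondition. The crucial structural feature I would exploit throughout is that in the \emph{fixed design} each $\x\si{i}$ is a constant, so every term $\log\Z(\dotprod{\x\si{i}}{\vtheta})$ is deterministic and is unaffected by any expectation over $y\si{i}$ or $\eta\si{i}$.

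For (a), the only random contribution to $\Lhn$ comes through $\psi(y\si{i},\eta\si{i})$. Using the unbiased data perturbation property (Definition \ref{def:unbiased}), $\E_{\ND}[\psi(y\si{i},\eta\si{i})] = t(y\si{i})$ for each fixed $y\si{i}$, so taking the noise expectation turns $\Lhn$ into $\Lh$ termwise:
\begin{align*}
\Ln(\vtheta) &= \E_{y,\eta}\Big[\frac{1}{n}\sum_i -\psi(y\si{i},\eta\si{i})\dotprod{\x\si{i}}{\vtheta} + \log\Z(\dotprod{\x\si{i}}{\vtheta})\Big] \\
&= \E_{y}\Big[\frac{1}{n}\sum_i -t(y\si{i})\dotprod{\x\si{i}}{\vtheta} + \log\Z(\dotprod{\x\si{i}}{\vtheta})\Big] \\
&= \L(\vtheta).
\end{align*}
Hence $|\Ln(\vtheta) - \L(\vtheta)| = 0$ for all $\vtheta$, giving $\eps'_n = 0$.

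For (b), I would subtract $\Ln$ from $\Lhn$; the deterministic $\log\Z$ terms cancel, leaving only the centered linear part, which I collect inside a single inner product:
\begin{align*}
\Lhn(\vtheta) - \Ln(\vtheta) = -\dotprod{\frac{1}{n}\sum_i \big(\psi(y\si{i},\eta\si{i}) - \E_{\DD_i,\ND}[\psi(y\si{i},\eta\si{i})]\big)\x\si{i}}{\vtheta}.
\end{align*}
Applying the generalized Cauchy--Schwarz (H\"older) inequality with the norm $\|\cdot\|$ and its dual $\|\cdot\|_*$, and then invoking the stated precondition, yields
\begin{align*}
|\Lhn(\vtheta) - \Ln(\vtheta)| &\leq \Big\|\frac{1}{n}\sum_i \big(\psi(y\si{i},\eta\si{i}) - \E[\psi(y\si{i},\eta\si{i})]\big)\x\si{i}\Big\|_* \|\vtheta\| \\
&\leq \eps_{n,\delta}\|\vtheta\|,
\end{align*}
which is exactly Assumption \ref{asm:closeloss} with $c(\vtheta)=\|\vtheta\|$.

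There is no genuine obstacle at the level of this claim: the argument is purely algebraic, and both the unbiasedness identity and the H\"older bound are routine. The substantive content --- verifying that the random vector $\frac{1}{n}\sum_i(\psi(y\si{i},\eta\si{i}) - \E[\psi(y\si{i},\eta\si{i})])\x\si{i}$ concentrates so that its dual norm is at most $\eps_{n,\delta}$ with probability $1-\delta$ --- is deferred to the concentration arguments that follow the claim, paralleling the sub-Gaussian and finite-variance computations carried out for MLE in the proof of Theorem \ref{rate:MLE}. That deferred step, which must account for the additional variance $\sigma_\eta^2$ contributed by the perturbation (via Lemmas \ref{lemma:subgaussian} and \ref{lemma:finitevariance}), is where the real effort lies.
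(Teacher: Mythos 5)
Your proposal is correct and matches the paper's own proof essentially step for step: the unbiasedness identity $\E_{\ND}[\psi(y,\eta)]=t(y)$ gives $\Ln=\L$ pointwise (hence $\eps'_n=0$), and the cancellation of the deterministic $\log\Z$ terms followed by the H\"older/dual-norm bound gives Assumption \ref{asm:closeloss}. Your remark that the concentration of the dual norm is the deferred, substantive step also agrees with how the paper organizes the argument.
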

\begin{proof}
   We first show that $\Ln(\vtheta) = \L(\vtheta)$ for any $\vtheta$.
	Recall that  $\mathbb{E}_{\veta}[\psi(y,\eta)] = t(y)$.
	We have
	\begin{align*}
	\Ln (\vtheta) & = \E_{(\forall i) y\si{i} \sim \DD_i,\eta\si{i} \sim \ND}[\Lhn(\vtheta)] \\
	& = \E_{(\forall i) y\si{i} \sim \DD_i,\eta\si{i} \sim \ND}[\frac{1}{n}\sum_{i}-\psi(y\si{i},\eta\si{i}) \dotprod{\x\si{i}}{\vtheta} 
	+ \log\Z(\dotprod{\x\si{i}}{\vtheta})] \\
	& = \E_{(\forall i) y\si{i} \sim \DD_i}[\frac{1}{n}\sum_{i}-\E_{\eta\si{i} \sim \ND}\psi(y\si{i},\eta\si{i}) \dotprod{\x\si{i}}{\vtheta} 
	+ \log\Z(\dotprod{\x\si{i}}{\vtheta})] \\
	& = \E_{(\forall i) y\si{i} \sim \DD_i}[\frac{1}{n}\sum_{i}-t(y\si{i}) \dotprod{\x\si{i}}{\vtheta} + 
	+ \log\Z(\dotprod{\x\si{i}}{\vtheta})] \\
	& = \L(\vtheta)
	\end{align*}
	For proving that Assumption~\ref{asm:difference} holds, note that $\Ln(\vtheta) = \L(\vtheta)$ for any $\vtheta$, and thus $\eps'_n = 0$.
	
	For proving that Assumption~\ref{asm:closeloss} holds, we invoke Claim ii in \cite{honorio2014unified}, that is for all $\vtheta$
	\begin{align*}
	    |\Lhn(\vtheta) - \Ln(\vtheta)| 
	    &= |\frac{1}{n}\sum_{i} \psi(y\si{i}, \eta\si{i})\dotprod{\x ^{(i)}}{\vtheta} - 
	    \frac{1}{n}\sum_{i} \E_{\DD, \ND}[\psi(y\si{i}, \eta\si{i})]\dotprod{\x ^{(i)}}{\vtheta}| \\
	    &= |\dotprod{\frac{1}{n}\sum_{i}(\psi(y\si{i}, \eta\si{i})-\E_{\DD, \ND}[\psi(y\si{i}, \eta\si{i})])\x\si{i}}{\vtheta} | \\ 
	    &\leq \|\frac{1}{n}\sum_{i}(\psi(y\si{i}, \eta\si{i})-\E_{\DD, \ND}[\psi(y\si{i}, \eta\si{i})])\x\si{i}\|_*\|\vtheta\| \\
	    &\leq \eps_{n,\delta}\|\vtheta\|
	\end{align*}
\end{proof}

	Let $\vtheta \in \BTheta =\R^p$. 
	Let $\|\cdot\|_* = \|\cdot\|_\infty$ and $\|\cdot\| = \|\cdot\|_1$.
	Let $\forall x, \|x\|_* \leq B$ and thus $\forall i,j, |x\si{i}_j| < B$.
	According to Lemma~\ref{lemma:subgaussian} and Lemma~\ref{lemma:finitevariance}, the variance of  $\psi(y, \eta)$ is $\sigma^2 = \sigma^2_y + \sigma^2_\eta$.
    We now focus on proving that $\|\frac{1}{n}\sum_{i}(\psi(y\si{i}, \eta\si{i})-\E_{\DD, \ND}[\psi(y\si{i}, \eta\si{i})])\x\si{i}\|_* \leq \eps_{n,\delta}$ which is the precondition of Claim~\ref{clm:GLM0}.
    
	\paragraph{Sub-Gaussian case and $\ell_1$-norm.}
	By Claim~\ref{clm:GLM0}, and by the union bound and independence, if we have sub-Gaussian $\psi(y,\eta)$, then
	\begin{align*}
	\P[\|\frac{1}{n}\sum_{i}(\psi(y\si{i}, \eta\si{i})-\E_{\DD, \ND}[\psi(y\si{i}, \eta\si{i})])\x\si{i}\|_* > \eps] 
	& = \P[(\exists j)|\frac{1}{n}\sum_{i}(\psi(y,\eta)-\E_{y \sim \DD}[t(y)])x\si{i}_j| > \varepsilon]\\
	&\hspace{-0.5in} = \P[(\exists j)|\frac{1}{n}\sum_{i}(\psi(y,\eta)-\E_{y \sim \DD}[\E_{\eta\sim \ND}[\psi(y,\eta)]])x\si{i}_j| 
	> \varepsilon] \\
	&\hspace{-0.5in} \leq 2p\ \lexp{-\frac{n\varepsilon^2}{2(\sigma B)^2}} 
	\end{align*}
	Thus, $\varepsilon_{n,\delta} = \sigma B \sqrt{\sfrac{2}{n}(\log{p} + \log\sfrac{2}{\delta})}$
	
	\paragraph{Finite variance case and $\ell_1$-norm.}
	If $\psi(y,\eta)$ has variance at most $\sigma^2$, then by Claim~\ref{clm:GLM0}, and by the union bound and Chebyshev's  inequality, 
	\begin{align*}
	\P[\|\frac{1}{n}\sum_{i}(\psi(y\si{i}, \eta\si{i})-\E_{\DD, \ND}[\psi(y\si{i}, \eta\si{i})])\x\si{i}\|_* > \eps] 
	& = \P[(\exists j)|\frac{1}{n}\sum_{i}(\psi(y,\eta)-\E_{y \sim \DD}[t(y)])x\si{i}_j| > \varepsilon]\\
	&\hspace{-0.5in} = \P[(\exists j)|\frac{1}{n}\sum_{i}(\psi(y,\eta)-\E_{y \sim \DD}[\E_{\eta\sim \ND}[\psi(y,\eta)]])x\si{i}_j|
	> \varepsilon] \\
	&\hspace{-0.5in} \leq p \frac{(\sigma B)^2}{n\eps^2}
	\end{align*}
	By solving for $\eps$, we have $\eps_{n,\delta} = \sigma B \sqrt{\frac{p}{n\delta}}$.

\subsection{Proof of Theorem~\ref{thm:GLM_pri}}

\begin{proof}
	Using Fano's inequality, we show that it will be impossible to recover the original data $Y$ with probability greater than $1/2$.
	We can describe the data process with the Markov chain $Y \rightarrow Y_\eta \rightarrow \Yh$, where $\Yh = \A(Y_\eta)$.
	The mutual information of $Y, Y_\eta$ can be bounded by using the pairwise KL divergence bound \cite{yu1997assouad}.
	\begin{align} \label{eq:GLM_pri_MI}
	\MI[Y;\Yh] &\leq \MI[Y;Y_\eta]\nonumber\\
	& = n \MI[y\si{i};y\si{i}_\eta]\nonumber\\
	& \leq \frac{n}{|\{-1,+1\}|^2} \sum_{y\si{i}\in \{-1,+1\}} \sum_{{y'}\si{i}\in \{-1,+1\}}\KL(P_{y\si{i}_\eta|y\si{i}}|P_{y\si{i}_\eta|{y'}\si{i}})\nonumber\\
	& = \frac{n}{|\{-1,+1\}|^2} \sum_{y\si{i}\in \{-1,+1\}} \sum_{{y'}\si{i}\in \{-1,+1\}}\KL(\N(y\si{i},\sigma_\eta^2)|\N({y'}\si{i},\sigma_\eta^2))\nonumber\\
	& = \frac{n}{|\{-1,+1\}|^2} \sum_{y\si{i}\in \{-1,+1\}} \sum_{{y'}\si{i}\in \{-1,+1\}} \frac{(y\si{i} - {y'}\si{i})^2}{2\sigma_\eta^2}\nonumber\\
	& \leq \frac{n}{|\{-1,+1\}|^2} (|\{-1,+1\}|^2 - |\{-1,+1\}|) \frac{2}{\sigma_\eta^2}\nonumber\\
	& \leq \frac{n}{\sigma_\eta^2}
	\end{align}
	Since $Y \in \YS = \{-1,+1\}^n$ we have $|\YS| = 2^n$.
	By Fano's inequality\cite{cover2012elements} and since $H(Y) \leq \log |\YS|$,
	\begin{align*}
	\P[\Yh \neq Y] &\geq 1- \frac{\MI(Y;\Yh)+\log 2}{\log |\YS|}\\
	&\geq 1-\frac{\frac{n}{\sigma_\eta^2} + \log 2}{n\log 2}\\
	\end{align*}
	In order to have $P[\Yh \neq Y]\geq \gamma$, we require
	\begin{align*}
	    \frac{\frac{n}{\sigma_\eta^2}+\log 2}{n\log 2} &\leq 1-\gamma \\
	    \frac{1}{\sigma_\eta^2 \log 2} + \frac{1}{n} &\leq 1-\gamma\\
	\end{align*}
	Thus, if $n > \frac{2}{1-\gamma}$, we have
	\begin{align*}
	     \sigma_\eta^2 &\geq \frac{1}{(1-\gamma-\frac{1}{n})\log 2} \\
	     \sigma_\eta^2 &\geq \frac{2}{(1-\gamma)\log 2}
	\end{align*}
\end{proof}

\subsection{Proof of Theorem~\ref{thm:GLM_priext}}
\begin{proof}
First, recall that $\Yh = \A(Y_\eta)$.
We proceed as in Theorem~\ref{thm:GLM_pri}, except for the Fano's inequality step.
We now use the Fano's inequality from \cite{duchi2013distance} together with eq.\eqref{eq:GLM_pri_MI} and the fact that $N_{\max}(t) = \binom{n}{t} \leq \left(\frac{ne}{t}\right)^t$.
Thus, 
\begin{align*}
	\P[d(\Yh, Y) > t] &\geq 1- \frac{\MI(Y;\Yh)+\log 2}{\log \left(\frac{|\YS|}{N_{\max}(t)}\right)}\\
	&\geq 1-\frac{\frac{n}{\sigma_\eta^2} + \log 2}{\log \left(\frac{2^n t^t }{(ne)^t}\right)}\\
	&= 1-\frac{\frac{n}{\sigma_\eta^2} + \log 2}{n\log 2+t(\log \frac{t}{n} - 1 )}\\
	\end{align*}
Note that $t\leq n$ in our analysis. 
In order to have $\P[d(\Yh, Y) > t] \geq \gamma$, we require 
\begin{align*}
    \frac{\frac{n}{\sigma_\eta^2} + \log 2}{n\log 2+t(\log \frac{t}{n} - 1 )} &\leq 1 - \gamma \\
    \sigma_\eta^2 &\geq \frac{n}{(1-\gamma) \left(n\log 2+t(\log \frac{t}{n} - 1 ) \right) -\log 2} \\
    \sigma_\eta^2 &\geq \frac{1}{(1-\gamma) \left(\log 2+\frac{t}{n}(\log \frac{t}{n} - 1 ) \right) -\frac{\log 2}{n}}
\end{align*}
Thus, if $n > \frac{2}{1-\gamma}$, we have
\begin{align*}
    \sigma_\eta^2 &\geq \frac{2}{(1-\gamma) \log 2+(1-\gamma)^2 t (\log \frac{t(1-\gamma)}{2} - 1 )}
\end{align*}
\end{proof}

\subsection{Proof of Theorem~\ref{rate:PCA}}
\begin{claim}[]\label{clm:PCA0}
    The exponential family PCA fulfills Assumption~\ref{asm:closeloss} with probability at least $1-\delta$, scale function $c(\vtheta) = \|\vtheta\|$ and rate $\eps_{n,\delta}$, provided that the dual norm fulfills $\|\frac{1}{n}(\psi(x_{11}, \eta_{11})-\E_{ x\sim \DD_{11}, \eta\sim\ND_{11}}[\psi(x, \eta)], \dots, \psi(x_{n_1n_2}, \eta{x_{n_1n_2}})-\E_{x \sim \DD_{n_1n_2}, \eta \sim \ND_{n_1n_2}}[\psi(x, \eta)])\|_* \leq \eps_{n,\delta}$.
    
    The problem also fulfills Assumption~\ref{asm:difference} with $\eps'_n = 0 $.

\end{claim}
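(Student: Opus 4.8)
The plan is to mirror the structure used for the MLE and GLM claims (Claims \ref{clm:mle0} and \ref{clm:GLM0}), since the exponential-family PCA loss has the same additive form: a data-dependent term that is \emph{linear} in $\vtheta$, plus a data-independent log-partition term $\frac{1}{n}\sum_{ij}\log\Z(\theta_{ij})$. First I would dispatch Assumption \ref{asm:difference} with $\eps'_n = 0$ by showing $\Ln(\vtheta) = \L(\vtheta)$ for every $\vtheta$. Taking expectation over both the data and the noise in $\Lhn$, the log-partition term is deterministic in $\vtheta$ and passes through untouched, while the linear term $-\frac{1}{n}\sum_{ij}\psi(x_{ij},\eta_{ij})\theta_{ij}$ becomes $-\frac{1}{n}\sum_{ij}\E_{x_{ij}}[\E_{\eta_{ij}}[\psi(x_{ij},\eta_{ij})]]\theta_{ij}$. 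By the unbiased-perturbation property of Definition \ref{def:unbiased}, $\E_{\eta_{ij}}[\psi(x_{ij},\eta_{ij})] = t(x_{ij})$, so this collapses exactly to $\L(\vtheta)$. Hence $|\Ln(\vtheta)-\L(\vtheta)| = 0$, establishing Assumption \ref{asm:difference} with $\eps'_n = 0$.

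Second, for Assumption \ref{asm:closeloss}, I would compute the difference $\Lhn(\vtheta)-\Ln(\vtheta)$. The log-partition terms cancel, leaving only the centered linear part:
\begin{align*}
|\Lhn(\vtheta)-\Ln(\vtheta)| = \Big|\frac{1}{n}\sum_{ij}\big(\psi(x_{ij},\eta_{ij}) - \E_{\DD_{ij},\ND_{ij}}[\psi(x_{ij},\eta_{ij})]\big)\theta_{ij}\Big|.
\end{align*}
Viewing this sum as the (trace) inner product between $\vtheta$ and the matrix whose $(i,j)$ entry is the centered perturbed statistic, I would apply H\"older's inequality (the dual-norm inequality) to factor out $\|\vtheta\|$, obtaining a bound of the form $\|\cdot\|_*\,\|\vtheta\|$. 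The hypothesis of the claim is precisely that this dual norm is at most $\eps_{n,\delta}$, which yields $|\Lhn(\vtheta)-\Ln(\vtheta)| \leq \eps_{n,\delta}\|\vtheta\|$, i.e., scaled uniform convergence with scale function $c(\vtheta) = \|\vtheta\|$.

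I do not expect a genuine obstacle here: the argument is an almost verbatim transcription of the MLE and GLM proofs, invoking the analogous inner-product claim from \citep{honorio2014unified}. The only point requiring mild care is the bookkeeping of the matrix-valued inner product (since $\vtheta \in \R^{n_1 \times n_2}$), so that the dual-norm factorization lines up exactly with the quantity assumed bounded in the statement. As in the earlier cases, this claim is deliberately stated as conditional on the dual-norm bound; the actual concentration argument establishing $\|\cdot\|_* \leq \eps_{n,\delta}$ with probability at least $1-\delta$, together with the explicit sub-Gaussian and finite-variance rates, is deferred to the subsequent problem- and norm-specific computation rather than to this claim.
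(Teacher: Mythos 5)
Your proposal matches the paper's proof essentially verbatim: Assumption~\ref{asm:difference} with $\eps'_n=0$ via unbiasedness collapsing $\Ln$ to $\L$, and Assumption~\ref{asm:closeloss} via cancellation of the log-partition terms followed by the dual-norm (H\"older) factorization of the centered linear term against $\|\vtheta\|$. The deferral of the actual concentration bound to the subsequent norm-specific computation is also exactly how the paper structures it.
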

\begin{proof}
    We first show that $\Ln(\vtheta) = \L(\vtheta)$ for any $\vtheta$.
	We have
	\begin{align*}
	\Ln (\vtheta) & = \E_{(\forall ij)x_{ij}\sim \DD_{ij}, \eta_{ij} \sim \ND_{ij}}[\Lhn(\vtheta)] \\
	&= \E_{(\forall ij) x_{ij}\sim \DD_{ij}, \eta_{ij} \sim \ND_{ij}}[\frac{1}{n}\sum_{ij}-\psi(x_{ij}, \eta_{ij})\theta_{ij} 
	+ \log \mathcal{Z}(\theta_{ij})]\\
	&= \E_{(\forall ij) x_{ij}\sim \DD_{ij}}[\frac{1}{n}\sum_{ij}-\E_{\eta_{ij} \sim \ND_{ij}}[\psi(x_{ij}, \eta_{ij})]\theta_{ij}  
	+ \log \mathcal{Z}(\theta_{ij})] \\
	& = \E_{(\forall ij) x_{ij}\sim \DD_{ij}}[\frac{1}{n}\sum_{ij}-t(x)\theta_{ij}  + \log \mathcal{Z}(\theta_{ij})] \\
	&= \mathcal{L}(\mathbf{\theta}) 
	\end{align*}
	For proving that Assumption~\ref{asm:difference} holds, note that $\Ln(\vtheta) = \L(\vtheta)$ for any $\vtheta$, and thus $\eps'_n = 0$.
		
	For proving that Assumption~\ref{asm:closeloss} holds, we have for all $\vtheta$
	\begin{align*}
	     |\Lhn(\vtheta) - \Ln(\vtheta)|
	     & =|\frac{1}{n}\sum_{ij} \psi(x_{ij}, \eta_{ij})\theta_{ij} - \frac{1}{n}\sum_{ij}\E_{x\sim \DD_{ij}, \eta \sim \ND_{ij}}[\psi(x, \eta)]\theta_{ij} |\\
	     &= |\frac{1}{n}\sum_{ij} (\psi(x_{ij}, \eta_{ij}) -\E_{x\sim \DD_{ij}, \eta \sim \ND_{ij}}[\psi(x, \eta)]) \theta_{ij} |\\
	     &\leq \|\frac{1}{n}(\psi(x_{11}, \eta_{11})-\E_{ x\sim \DD_{11}, \eta\sim\ND_{11}}[\psi(x, \eta)], \dots,\\
	     &\quad \quad \psi(x_{n_1n_2}, \eta{x_{n_1n_2}})-\E_{x \sim \DD_{n_1n_2}, \eta \sim \ND_{n_1n_2}}[\psi(x, \eta)])\|_*\|\vtheta\| \\
	     &\leq \eps_{n, \delta}\|\vtheta\|
	\end{align*}
\end{proof}

	Recall that $\vtheta \in \BTheta = \R^{n_1\times n_2}$ and $n = n_1 \times n_2$. 
	Let $\|\cdot\|_* = \|\cdot \|_\infty$, $\|\cdot\| = \|\cdot\|_1$. 
	According to Lemma~\ref{lemma:subgaussian} and Lemma~\ref{lemma:finitevariance}, the variance of  $\psi(x_{ij}, \eta_{ij})$ is $\sigma^2 = \sigma^2_{xij} + \sigma^2_{\eta ij}$.
    We now focus on proving that $\|\frac{1}{n}(\psi(x_{11}, \eta_{11})-\E_{ x\sim \DD_{11}, \eta\sim\ND_{11}}[\psi(x, \eta)],\ \dots,\ \psi(x_{n_1n_2}, \eta{x_{n_1n_2}})-\E_{x \sim \DD_{n_1n_2}, \eta \sim \ND_{n_1n_2}}[\psi(x, \eta)])\|_* \leq \varepsilon_{n,\delta}$ which is the precondition of Claim iii.
	
	\paragraph{Claim~\ref{clm:PCA0}  Sub-Gaussian case and $\ell_1$-norm.}
	If we have sub-Gaussian $\psi(x_{ij}, \eta_{ij})$, by Claim~\ref{clm:PCA0}, and by the union bound and independence, we have
	\begin{align*}
	& \P[\|\frac{1}{n}(\psi(x_{11}, \eta_{11})-\E_{ x\sim \DD_{11}, \eta\sim\ND_{11}}[\psi(x, \eta)],\ \dots,\\
	& \quad \quad \psi(x_{n_1n_2}, \eta{x_{n_1n_2}})-\E_{x \sim \DD_{n_1n_2}, \eta \sim \ND_{n_1n_2}}[\psi(x, \eta)])\|_* >\eps] \\
	&= \P[(\exists ij)|\psi(x_{ij},\eta_{ij}) - \E_{x\sim \DD_{ij}}[t(x_{ij})]|> n\varepsilon] \\
	&\leq 2n\lexp{-\frac{(n\varepsilon)^2}{2\sigma^2}}
	\end{align*}
	Let $\delta = 2n\lexp{-\frac{(n\varepsilon)^2}{2\sigma^2}}$, we still have $\varepsilon_{n,\delta} = \frac{n}{\sigma}\sqrt{2(\log n + \log\frac{2}{\sigma})}$
	
	\paragraph{Claim~\ref{clm:PCA0}  Finite variance case and $\ell_1$-norm.}
	If $\psi(x_{ij}, \eta_{ij})$ has variance at most $\sigma$, by Claim~\ref{clm:PCA0}, and by the union bound and Chebyshev's inequality:
	\begin{align*}
	& \P[\|\frac{1}{n}(\psi(x_{11}, \eta_{11})-\E_{ x\sim \DD_{11}, \eta\sim\ND_{11}}[\psi(x, \eta)],\ \dots,\\
	& \quad \quad \psi(x_{n_1n_2}, \eta{x_{n_1n_2}})-\E_{x \sim \DD_{n_1n_2}, \eta \sim \ND_{n_1n_2}}[\psi(x, \eta)])\|_* >\eps] \\
	&= \P[(\exists ij)|\psi(x_{ij},\eta_{ij}) - \E_{x\sim \DD_{ij}}[t(x)]| > n\varepsilon] \\
	&\leq n\frac{\sigma^2}{(n\varepsilon)^2}
	\end{align*}
	Let $\delta = n\frac{\sigma^2}{(n\varepsilon)^2}$, then we have $\varepsilon_{n,\delta} = \frac{\sigma}{\sqrt{n\sigma}}$

\subsection{Proof of Theorem~\ref{thm:PCA_pri}}
\begin{proof}
Using Fano's inequality, we show that it will be impossible to recover the original data $\X$ with probability greater than $1/2$.
We can describe the data process with the Markov chain $\X \rightarrow \X_\eta \rightarrow \Xh$, where $\Xh = \A(\X_\eta)$.
The mutual information of $\X, \X_\eta$ can be bounded by using the pairwise KL divergence bound \cite{yu1997assouad}.
\begin{align} \label{eq:PCA_pri_MI}
\MI[\X;\Xh] &\leq  \MI[\X;\X_\eta]\nonumber\\
& = n \MI[x_{ij};x_{\eta ij}]\nonumber\\
& \leq \frac{n}{|\{-1,+1\}|^2} \sum_{x_{ij}\in \{-1,+1\}} \sum_{x'_{ij}\in \{-1,+1\}}\KL(P_{x_{\eta ij}|x_{ij}}|P_{x_{\eta ij}|x'_{ij}})\nonumber\\
& = \frac{n}{|\{-1,+1\}|^2} \sum_{x_{ij}\in \{-1,+1\}} \sum_{x'_{ij}\in \{-1,+1\}}\KL(\N(x_{ij},\sigma_\eta^2)|\N(x'_{ij},\sigma_\eta^2))\nonumber\\
& = \frac{n}{|\{-1,+1\}|^2} \sum_{x_{ij}\in \{-1,+1\}} \sum_{x'_{ij}\in \{-1,+1\}} \frac{(x_{ij}-x'_{ij})^2}{2\sigma_\eta^2}\nonumber\\
& \leq \frac{n}{|\{-1,+1\}|^2} (|\{-1,+1\}|^2 - |\{-1,+1\}|) \frac{2}{\sigma_\eta^2}\nonumber\\
& \leq \frac{n}{\sigma_\eta^2}
\end{align}
Since $\X \in \XS = \{-1,+1\}^{n_1 \times n_2}$ where $n = n_1 n_2$ we have $|\XS| = 2^n$.
By Fano's inequality\cite{cover2012elements} and since $H(\X) \leq \log |\XS|$,
\begin{align*}
\P[\Xh \neq \X] &\geq 1- \frac{\MI(\X; \X_\eta) + \log 2}{\log |\XS|}\\
&\geq 1-\frac{\frac{n}{\sigma_\eta^2} + \log 2}{n\log 2}\\
\end{align*}
In order to have $P[\Xh \neq \X] \geq \gamma$, we require
    \begin{align*}
	    \frac{\frac{n}{\sigma_\eta^2}+\log 2}{n\log 2} &\leq 1-\gamma \\
    \end{align*}
    Thus, if $n > \frac{2}{1-\gamma}$, we have
	\begin{align*}
	     \sigma_\eta^2 &\geq \frac{1}{(1-\gamma-\frac{1}{n})\log 2} \\
	     \sigma_\eta^2 &\geq \frac{2}{(1-\gamma)\log 2}
	\end{align*}
\end{proof}

\subsection{Proof of Theorem~\ref{thm:PCA_priext}}
\begin{proof}
First, recall that $\Xh = \A(\X_\eta)$.
We proceed as in Theorem~\ref{thm:PCA_pri}, except for the Fano's inequality step.
We now use the Fano's inequality from \cite{duchi2013distance} together with eq.\eqref{eq:PCA_pri_MI} and the fact that $N_{\max}(t) = \binom{n}{t} \leq \left(\frac{ne}{t}\right)^t$.
Thus, 
\begin{align*}
	\P[d(\Xh, \X) > t] &\geq 1- \frac{\MI(\X;\Xh)+\log 2}{\log \left(\frac{|\XS|}{N_{\max}(t)}\right)}\\
	&\geq 1-\frac{\frac{n}{\sigma_\eta^2} + \log 2}{\log \left(\frac{2^n t^t }{(ne)^t}\right)}\\
	&= 1-\frac{\frac{n}{\sigma_\eta^2} + \log 2}{n\log 2+t(\log \frac{t}{n} - 1 )}\\
	\end{align*}
Note that $t\leq n$ in our analysis. 
In order to have $\P[d(\Xh, \X) > t] \geq \gamma$, we require 
\begin{align*}
    \frac{\frac{n}{\sigma_\eta^2} + \log 2}{n\log 2+t(\log \frac{t}{n} - 1 )} &\leq 1 - \gamma \\
    \sigma_\eta^2 &\geq \frac{n}{(1-\gamma) \left(n\log 2+t(\log \frac{t}{n} - 1 ) \right) -\log 2} \\
    \sigma_\eta^2 &\geq \frac{1}{(1-\gamma) \left(\log 2+\frac{t}{n}(\log \frac{t}{n} - 1 ) \right) -\frac{\log 2}{n}}
\end{align*}
Thus, if $n > \frac{2}{1-\gamma}$, we have
\begin{align*}
    \sigma_\eta^2 &\geq \frac{2}{(1-\gamma) \log 2+(1-\gamma)^2 t (\log \frac{t(1-\gamma)}{2} - 1 )}
\end{align*}
\end{proof}

\subsection{Proof of Theorem~\ref{rate:np}}
\begin{claim}[]\label{clm:nonparametric0}
    Let $\phi_1, \ldots, \phi_\infty$ be an infinitely dimensional orthonormal basis, and let $\vphi(\x) = ( \phi_1(\x),\ldots, \phi_\infty(\x))$.
    we represent the function $\ftheta : \XS \to \R$ by using the infinitely dimensional orthonormal basis.
    That is, $\ftheta(\x) = \sum_{j=1}^\infty{\nu\si{\ftheta}_j\phi_j(\x)} = \dotprod{\vnu\si{\ftheta}}{\vphi(\x)}$, where $\vnu\si{\ftheta} = (\nu\si{\ftheta}_1,\dots,\nu\si{\ftheta}_\infty)$.
    In the latter, the superindex $(\ftheta)$ allows for associating the infinitely dimensional coefficient vector $\vnu$ with the original function $\ftheta$.
    Then, we define the norm of the function $\ftheta$ with respect to the infinitely dimensional orthonormal basis.
    That is, $\|\ftheta\| = \|\vnu\si{\ftheta}\|$.
    
    Non-parametric generalized regression with fixed design fulfills Assumption~\ref{asm:closeloss} with probability at least $1-\delta$, scale function $c(\ftheta) = \|\ftheta\|$ and rate $\eps_{n,\delta}$, provided that the dual norm fulfills $\|\frac{1}{n}\sum_i{(\psi(y\si{i}, \eta\si{i})-\E_{y \sim \DD_i, \eta \sim \ND_i}[\psi(y\si{i}, \eta\si{i})])\vphi(\x\si{i}})\|_*  \\ \leq \eps_{n,\delta}$
	
	This problem also fulfills Assumption~\ref{asm:difference} with $\eps'_n =0$.
\end{claim}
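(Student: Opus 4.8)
The plan is to mirror the template already used for Claims~\ref{clm:mle0}, \ref{clm:GLM0}, and \ref{clm:PCA0}, adapting it to the infinite-dimensional orthonormal representation. The argument splits into two independent parts: verifying Assumption~\ref{asm:difference} with $\eps'_n = 0$, and verifying Assumption~\ref{asm:closeloss} conditional on the stated dual-norm bound. The key structural observation that makes the reduction possible is that $\ftheta(\x\si{i}) = \dotprod{\vnu\si{\ftheta}}{\vphi(\x\si{i})}$ is \emph{linear} in the coefficient vector $\vnu\si{\ftheta}$, so the whole nonparametric problem behaves like the fixed-design GLM case (Claim~\ref{clm:GLM0}) with the finite design vector $\x\si{i}$ replaced by the infinite feature vector $\vphi(\x\si{i})$.

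First I would establish $\Ln(\ftheta) = \L(\ftheta)$ for every $\ftheta$, which immediately yields $\eps'_n = 0$. Taking the expectation of $\Lhn$ over the noise $\eta\si{i} \sim \ND$ and invoking the unbiasedness property $\E_\eta[\psi(y,\eta)] = t(y)$ from Definition~\ref{def:unbiased}, the perturbed term $-\psi(y\si{i},\eta\si{i})\ftheta(\x\si{i})$ collapses to $-t(y\si{i})\ftheta(\x\si{i})$, while the $\log\Z(\ftheta(\x\si{i}))$ term carries no noise. Since the design $\x\si{i}$ is fixed, this reproduces $\L(\ftheta)$ exactly, so the two expected losses coincide pointwise and Assumption~\ref{asm:difference} holds with rate $0$.

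Next I would verify scaled uniform convergence. Writing out $|\Lhn(\ftheta) - \Ln(\ftheta)|$, the log-partition terms cancel, leaving $\bigl|\frac{1}{n}\sum_i (\psi(y\si{i},\eta\si{i}) - \E_{\DD,\ND}[\psi(y\si{i},\eta\si{i})])\,\ftheta(\x\si{i})\bigr|$. The crucial step is to substitute $\ftheta(\x\si{i}) = \dotprod{\vnu\si{\ftheta}}{\vphi(\x\si{i})}$, pull the empirical average inside the inner product by linearity, and apply the generalized H\"older inequality in the sequence space to factor out $\|\vnu\si{\ftheta}\|$. Using the precondition on the dual norm together with the identity $\|\ftheta\| = \|\vnu\si{\ftheta}\|$ then gives $|\Lhn(\ftheta) - \Ln(\ftheta)| \leq \eps_{n,\delta}\,\|\ftheta\| = \eps_{n,\delta}\,c(\ftheta)$, which is precisely Assumption~\ref{asm:closeloss}.

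The main obstacle, and the only genuine departure from Claim~\ref{clm:GLM0}, is the infinite-dimensional representation: I must ensure that the pairing $\dotprod{\vnu\si{\ftheta}}{\vphi(\x)}$ and the H\"older factorization $|\dotprod{a}{b}| \leq \|a\|_* \|b\|$ are well-defined across infinitely many coordinates. This is exactly what the orthonormal-basis setup supplies, since representing $\ftheta$ in the basis produces the coefficient sequence $\vnu\si{\ftheta}$ and the definition $\|\ftheta\| = \|\vnu\si{\ftheta}\|$ transfers the scale function $c$ from the function space to the coefficient space so that H\"older applies verbatim. The concentration argument that actually certifies the precondition $\|\cdot\|_* \leq \eps_{n,\delta}$ (and hence the explicit $n^{1/2-\beta}$ rate with a growing number of active basis functions) is deferred to the body of the proof, exactly as in the preceding claims.
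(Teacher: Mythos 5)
Your proposal is correct and follows essentially the same route as the paper: it establishes $\Ln(\ftheta) = \L(\ftheta)$ via the unbiasedness of $\vpsi$ (giving $\eps'_n = 0$), and then cancels the log-partition terms, substitutes $\ftheta(\x\si{i}) = \dotprod{\vnu\si{\ftheta}}{\vphi(\x\si{i})}$, and applies H\"older's inequality with the dual-norm precondition and the identity $\|\ftheta\| = \|\vnu\si{\ftheta}\|$, exactly as in the paper's reduction of the nonparametric case to the GLM template with $\x\si{i}$ replaced by $\vphi(\x\si{i})$. No gaps.
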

\begin{proof}
    We first show that $\Ln(\vtheta) = \L(\vtheta)$.
	We have
	\begin{align*}
	\Ln (\vtheta)
	&= \E_{(\forall i)y\si{i}\sim \DD\si{i}, \eta\si{i} \sim \ND}[\frac{1}{n}\sum_{i}-\psi(y\si{i},\eta\si{i})\theta(\x\si{i}) 
	+ \log\mathcal{Z}(\theta(\x\si{i}))] \\
	& = \E_{(\forall i)y\si{i}\sim \DD\si{i}}[\frac{1}{n}\sum_{i}-\E_{\eta\si{i} \sim \ND}[\psi(y\si{i},\eta\si{i})]\theta(\x\si{i}) 
	+ \log\mathcal{Z}(\theta(\x\si{i}))] \\
	& = \E_{(\forall i)y\si{i}\sim \DD\si{i}}[\frac{1}{n}\sum_{i}-t(y\si{i})\theta(\x\si{i}) 
	+ \log\mathcal{Z}(\theta(\x\si{i}))]\\
	& = \L(\vtheta)
	\end{align*}
	For proving that Assumption~\ref{asm:difference} holds, note that $\Ln(\vtheta) = \L(\vtheta)$ for any $\vtheta$, and thus $\eps'_n = 0$.
	
	For proving that Assumption~\ref{asm:closeloss} holds, we have for all $\vtheta$
    \begin{align*}
	    |\Lhn(\vtheta) - \Ln(\vtheta)| 
	    &= |\frac{1}{n}\sum_{i} \psi(y\si{i}, \eta\si{i})\theta(\x\si{i}) - 
	    \frac{1}{n}\sum_{i} \E_{\DD, \ND}[\psi(y\si{i}, \eta\si{i})]\theta(\x\si{i})| \\
	    &= |\dotprod{\frac{1}{n}\sum_{i}(\psi(y\si{i}, \eta\si{i})-\E_{\DD, \ND}[\psi(y\si{i}, \eta\si{i})])\vphi(\x\si{i})}{\vnu\si{\theta}} | \\ 
	    &\leq \|\frac{1}{n}\sum_{i}(\psi(y\si{i}, \eta\si{i})-\E_{\DD, \ND}[\psi(y\si{i}, \eta\si{i})])\vphi(\x\si{i})\|_*\|\vnu\si{\theta}\| \\
	    &\leq \eps_{n,\delta}\|\vtheta\|
	\end{align*}
\end{proof}
Let $\x \in \XS = \R^p$.
Let $\|\cdot\|_* = \|\cdot\|_\infty$ and $\|\cdot\| = \|\cdot\|_1$.
Let $(\forall \x){\rm\ }\|\vphi(\x)\|_* \leq B$ and thus $(\forall ij){\rm\ }|\phi_j(\x\si{i})| \leq B$.
The complexity of our nonparametric model grows with more samples.
Assume that we have $q_n$ orthonormal basis functions $\varphi_1,\dots,\varphi_{q_n} : \R \to \R$.
Let $q_n$ be increasing with respect to the number of samples $n$.
With these bases, we define $q_n p$ orthonormal basis functions of the form $\phi_j(\x) = \varphi_k(x_l)$ for $j=1,\dots,q_n p$, $k=1,\dots,q_n$, $l=1,\dots,p$.
According to Lemma~\ref{lemma:subgaussian} and Lemma~\ref{lemma:finitevariance}, the variance of  $\psi(y, \eta)$ is $\sigma^2 = \sigma^2_y + \sigma^2_\eta$.
We now focus on proving that $\|\frac{1}{n}\sum_{i}(\psi(y\si{i}, \eta\si{i})-\E_{\DD, \ND}[\psi(y\si{i}, \eta\si{i})])\vphi(\x\si{i})\|_* \leq \varepsilon_{n,\delta}$ which is the precondition of Claim iv.

\paragraph{Claim~\ref{clm:nonparametric0}  Sub-Gaussian case with $\ell_1$-norm.}
Let $\forall i, \psi(y\si{i},\eta\si{i})$ be sub-Gaussian with parameter $\sigma$.
Therefore $\forall i, \psi(y\si{i},\eta\si{i})\phi_j(\x\si{i})$ is sub-Gaussian with parameter $\sigma B$. By Claim~\ref{clm:nonparametric0} , and by the union bound, sub-Gaussianity and independence, 
\begin{align*}
    & \P[\|\frac{1}{n}\sum_{i}(\psi(y\si{i}, \eta\si{i})-\E_{\DD, \ND}[\psi(y\si{i}, \eta\si{i})])\vphi(\x\si{i})\|_*>\eps]\\
	&= \P[(\exists j){\rm\ }|\frac{1}{n}\sum_i(\psi(y\si{i},\eta\si{i})-
	\E_{\DD, \ND}[\psi(y\si{i}, \eta\si{i})])\phi_j(\x\si{i})| > \eps] \\
	 &\leq 2q_n p{\rm\ }\lexp{-\frac{n\eps^2}{2(\sigma B)^2}} = \delta
\end{align*}
 By solving for $\eps$, we have $\eps_{n,\delta} = \sigma B \sqrt{\sfrac{2}{n}(\log{p} + \log{q_n} + \log\sfrac{2}{\delta})}$.
 
 \paragraph{Claim~\ref{clm:nonparametric0}  Finite variance case with $\ell_1$-norm.}
 Let $\forall i, \psi(y\si{i},\eta\si{i})$ have variance at most $\sigma^2$.
 Therefore $\forall i, \psi(y\si{i},\eta\si{i})\phi_j(\x\si{i})$ has variance at most $(\sigma B)^2$. By Claim~\ref{clm:nonparametric0}, and by the union bound and Chebyshev's inequality,
\begin{align*}
	& \P[\|\frac{1}{n}\sum_{i}(\psi(y\si{i}, \eta\si{i})-\E_{\DD, \ND}[\psi(y\si{i}, \eta\si{i})])\vphi(\x\si{i})\|_*>\eps]\\
	&= \P[(\exists j){\rm\ }|\frac{1}{n}\sum_i(\psi(y\si{i},\eta\si{i})-
	\E_{\DD, \ND}[\psi(y\si{i}, \eta\si{i})])\phi_j(\x\si{i})| > \eps]\\ &\leq q_n p{\rm\ }\frac{(\sigma B)^2}{n\eps^2} = \delta
\end{align*}
By solving for $\eps$, we have $\eps_{n,\delta} = \sigma B \sqrt{\frac{q_n p}{n\delta}}$.

\subsection{Proof of Theorem~\ref{rate:max-margin}}
\begin{claim}[]\label{clm:max-margin}
	Max-margin matrix factorization fulfills Assumption~\ref{asm:closeloss} with probability 1, scale function $c(\vtheta) = \|\vtheta\|_1$ and rate $\eps_{n,\delta} = \O(\frac{1}{n})$.
	Furthermore, max-margin matrix factorization fulfills Assumption~\ref{asm:difference}  with $\eps'_n = \frac{2K(1-q)}{n}$ and $c(\vtheta) = \|\vtheta\|_1$.
\end{claim}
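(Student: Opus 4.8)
The plan is to prove both assumptions by exploiting two structural facts. First, the sign-flipping perturbation $\psi(x_{ij},\eta_{ij}) = x_{ij}\eta_{ij}$ keeps the product in $\{-1,+1\}$, so every summand $f(\psi(x_{ij},\eta_{ij})\theta_{ij})$ takes only the two values $f(\theta_{ij})$ and $f(-\theta_{ij})$. Second, $f$ is $K$-Lipschitz, so the gap between these values is controlled: $|f(\theta_{ij}) - f(-\theta_{ij})| \leq K\,|2\theta_{ij}| = 2K|\theta_{ij}|$. Both parts of the claim then reduce to termwise estimates that sum, after dividing by $n$, to a multiple of $\|\vtheta\|_1$, yielding the scale function $c(\vtheta) = \|\vtheta\|_1$.

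For Assumption \ref{asm:closeloss}, I would bound $|\Lhn(\vtheta) - \Ln(\vtheta)|$ in the worst case, which is exactly why the guarantee holds with probability $1$ rather than with probability $1-\delta$. Writing the difference as $\frac{1}{n}\sum_{ij}\big[f(\psi(x_{ij},\eta_{ij})\theta_{ij}) - \E[f(\psi(x_{ij},\eta_{ij})\theta_{ij})]\big]$ and applying the triangle inequality, each summand is the deviation of a two-point random variable supported on $\{f(\theta_{ij}), f(-\theta_{ij})\}$ from its mean, hence is at most the range $2K|\theta_{ij}|$ for every realization of $(x_{ij},\eta_{ij})$. Summing and dividing by $n$ gives $|\Lhn(\vtheta) - \Ln(\vtheta)| \leq \frac{2K}{n}\|\vtheta\|_1$ surely, so $\eps_{n,\delta} = \frac{2K}{n} = \O(1/n)$.

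For Assumption \ref{asm:difference}, I would compute $\Ln(\vtheta) - \L(\vtheta) = \frac{1}{n}\sum_{ij}\E_{x,\eta}[f(x_{ij}\eta_{ij}\theta_{ij}) - f(x_{ij}\theta_{ij})]$ and condition on $\eta_{ij}$. With probability $q$ we have $\eta_{ij}=1$ and the two loss terms coincide, contributing nothing; with probability $1-q$ we have $\eta_{ij}=-1$, and the term becomes $f(-x_{ij}\theta_{ij}) - f(x_{ij}\theta_{ij})$, whose absolute value is at most $2K|x_{ij}\theta_{ij}| = 2K|\theta_{ij}|$ since $|x_{ij}|=1$ and $f$ is $K$-Lipschitz. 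Hence each term is bounded by $2K(1-q)|\theta_{ij}|$ uniformly in $x_{ij}$, and summing yields $|\Ln(\vtheta) - \L(\vtheta)| \leq \frac{2K(1-q)}{n}\|\vtheta\|_1$, i.e. $\eps'_n = \frac{2K(1-q)}{n}$ with $c(\vtheta) = \|\vtheta\|_1$.

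The analysis is essentially deterministic, so I expect no genuine obstacle; the one point worth stating carefully is why Assumption \ref{asm:closeloss} holds with probability $1$. Because the perturbed data is binary, each summand's deviation from its (conditional) mean is bounded \emph{pointwise} by the Lipschitz gap, so a sure bound is available and no concentration inequality is invoked — in contrast to the sub-Gaussian and finite-variance examples, where the unbounded range of the perturbed statistic forces a high-probability argument. The second mild subtlety is to note that the $\E$ in Assumption \ref{asm:difference} is taken jointly over $x$ and $\eta$, so the conditioning on $\eta_{ij}$ must be done before averaging over $x_{ij}$, which is legitimate since the Lipschitz bound $2K|\theta_{ij}|$ is uniform over $x_{ij}$.
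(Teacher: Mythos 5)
Your proof is correct and follows essentially the same route as the paper's: both decompose each summand over the two-point support $\{f(\theta_{ij}),\,f(-\theta_{ij})\}$, invoke the Lipschitz bound $|f(\theta_{ij})-f(-\theta_{ij})|\leq 2K|\theta_{ij}|$, and sum termwise to obtain the scale $c(\vtheta)=\|\vtheta\|_1$, and your Assumption~\ref{asm:difference} computation (condition on $\eta_{ij}$; the $\eta_{ij}=+1$ branch cancels and the $\eta_{ij}=-1$ branch contributes at most $2K(1-q)|\theta_{ij}|$) matches the paper's almost verbatim. One small point in your favor: for Assumption~\ref{asm:closeloss} you bound the deviation of each two-valued random variable by its range $|f(\theta_{ij})-f(-\theta_{ij})|$, which uses only the Lipschitz property, whereas the paper applies the triangle inequality to the two indicator terms separately and then bounds $|f(\pm\theta_{ij})|\leq K|\theta_{ij}|$ --- a step that implicitly requires $f(0)=0$ and so fails as written for the hinge loss that motivates the model. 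Your version is the tighter and more generally valid one, though the underlying decomposition is identical.
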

\begin{proof}
    To prove this problem fulfills Assumption~\ref{asm:closeloss}, we have:
    \begin{align*}
       & |\Lhn(\vtheta) - \Ln(\vtheta)| \\
       & = |\frac{1}{n}\sum_{ij}(f(x_{ij}\eta_{ij}\theta_{ij}) - \E_{\DD, \ND}[f(\x_{ij}\eta_{ij}\theta_{ij})])|\\
       & = |\frac{1}{n}\sum_{ij}(1[x_{ij}\eta_{ij}=+1]f(\theta_{ij}) + 1[x_{ij}\eta_{ij}=-1]f(-\theta_{ij}) - 
       \P[x_{ij}\eta_{ij}=+1]f(\theta_{ij}) - \P[x_{ij}\eta_{ij}=-1]f(-\theta_{ij}))| \\
       &= |\frac{1}{n}\sum_{ij}((1[x_{ij}\eta_{ij}=+1]-\P[x_{ij}\eta_{ij}=+1]) f(\theta_{ij}) +
       (1[x_{ij}\eta_{ij}=-1]-\P[x_{ij}\eta_{ij}=-1])f(-\theta_{ij}))| \\
       &\leq \frac{1}{n}\sum_{ij}(|1[x_{ij}\eta_{ij}=+1]-\P[x_{ij}\eta_{ij}=+1]||f(\theta_{ij})| + 
       |1[x_{ij}\eta_{ij}=-1]-\P[x_{ij}\eta_{ij}=-1]||f(-\theta_{ij})|\\
       & \leq \frac{1}{n}\sum_{ij}2K|\theta_{ij}| \\
       & = \frac{2K}{n}\|\vtheta\|_1
    \end{align*}
    To prove this problem fulfills Assumption~\ref{asm:difference}.
    Let $K$ be the Lipschitz constant of $f$.
    Note that
    \begin{align*}
        \E_{\veta}[\Lhn(\vtheta)] = \frac{1}{n}\sum_{ij} q f(x_{ij}\theta_{ij}) + (1-q) f(-x_{ij}\theta_{ij})
    \end{align*}
    Thus, we have
    \begin{align*}
        |\Lh(\vtheta) - \E_{\ND}[\Lhn(\vtheta)]| 
        &=|\frac{1}{n}\sum_{ij}f(x_{ij}\theta_{ij}) - 
        (\frac{1}{n}\sum_{ij} q f(x_{ij}\theta_{ij}) + (1-q) f(-x_{ij}\theta_{ij})) |\\
        &= \frac{1}{n}|\sum_{ij} (1-q) f(x_{ij}\theta_{ij}) - (1-q) f(-x_{ij}\theta_{ij})| \\
        &= \frac{(1-q)}{n}|\sum_{ij}f(x_{ij}\theta_{ij}) - f(-x_{ij}\theta_{ij})|\\
        & \leq \frac{(1-q)}{n}|\sum_{ij} 2K(x_{ij}\theta_{ij}) |\\
        & \leq \frac{2K(1-q)}{n}\|\vtheta\|_1
    \end{align*}
    By Jensen's inequality:
    \begin{align*}
         |\L(\theta) - \Ln(\theta)|
        & = |\E_{\DD} [ \Lh(\theta) - \Lhn(\theta) ] |\\
        &\leq \E_{\DD} | \Lh(\theta) - \Lhn(\theta) | \\
        & \leq \frac{2K(1-q)}{n}\|\vtheta\|_1
    \end{align*}
\end{proof}

\subsection{Proof of Theorem~\ref{thm:max-margin_pri}}
\begin{proof}
    Using Fano's inequality, we show that it will be impossible to recover the original data $\X$ with probability greater than $1/2$.
    We can describe the data process with the Markov chain $\X \rightarrow \X_\eta \rightarrow \Xh$, where $\Xh = \A(\X_\eta)$.
	Let $\B(q)$ denote the probability distribution that returns $+1$ with probability $q$ and $-1$ with probability $1-q$.
    Note that since $x_{ij} \in \{-1,+1\}$ and $\eta_{ij} \sim \B(q)$, then $\eta_{ij}x_{ij} \sim \B(1/2+(q-1/2)x_{ij})$.
    The mutual information of $\X, \X_\eta$ can be bounded by using the pairwise KL divergence bound \cite{yu1997assouad}.
    \begin{align} \label{eq:max-margin_pri_MI}
        \MI[\X; \Xh] &\leq \MI[\X; \X_\eta] \nonumber\\
                &= n\MI[x_{ij}; \eta_{ij}x_{ij}] \nonumber\\
                & \leq \frac{n}{|\{-1,+1\}|^2} \sum_{x_{ij}\in \{-1,+1\}} \sum_{x'_{ij}\in \{-1,+1\}}\KL(P_{\eta_{ij}x_{ij}|x_{ij}}|P_{\eta_{ij}x'_{ij}|x'_{ij}})\nonumber\\
                & = \frac{n}{|\{-1,+1\}|^2} \sum_{x_{ij}\in \{-1,+1\}} \sum_{x'_{ij}\in \{-1,+1\}}\KL(\B(1/2+(q-1/2)x_{ij})|\B(1/2+(q-1/2)x'_{ij})\nonumber\\
				& = \frac{n}{|\{-1,+1\}|^2} (|\{-1,+1\}|^2 - |\{-1,+1\}|) (q\log \frac{q}{1-q} + (1-q)\log \frac{1-q}{q})\nonumber\\
                & \leq \frac{n}{2}(q\log \frac{q}{1-q} + (1-q)\log \frac{1-q}{q})\nonumber\\
                &= \frac{n}{2}(2q-1)\log \frac{q}{1-q}
    \end{align}
    Since $\X \in \XS = \{-1,+1\}^{n_1 \times n_2}$ where $n = n_1 n_2$ we have $|\XS| = 2^n$.
    By Fano's inequality\cite{cover2012elements} and since $H(\X) \leq \log |\XS|$,
    \begin{align*}
        P[\Xh \neq \X] &\geq 1- \frac{\MI[\X; \Xh] + \log 2}{\log |\XS|}\\
        &\geq 1 - \frac{\frac{n}{2}(2q-1)\log \frac{q}{1-q} + \log 2}{n \log 2}
    \end{align*}
    In order to have $P[\Xh \neq \X] \geq \gamma$, we require
    \begin{align*}
        \frac{\frac{n}{2}(2q-1)\log \frac{q}{1-q} + \log 2}{n \log 2} &< 1- \gamma\\
        (2q-1)\log \frac{q}{1-q} &< 2(1- \gamma -\frac{1}{n})\log 2\\
    \end{align*}
    Note that 
    \begin{align*}
        (2q-1)\log \frac{q}{1-q} &< (2q-1)(\frac{q}{1-q} - 1)\\
        &= \frac{4q^2-4q+1}{1-q}
    \end{align*}
    Let $g=2(1- \gamma -\frac{1}{n})\log 2$ and $g\in(0,2\log 2)$, we can solve
    \begin{align*}
        \frac{4q^2-4q+1}{1-q} &< g\\
    \end{align*}
    Solving the inequality above, we get, $q\in(\frac{1}{2}, \frac{1}{2}+\frac{-g+\sqrt{g(g+8)}}{8})$.
    A sufficient condition for the latter is $q\in(\frac{1}{2}, \frac{1}{2}+\frac{1-\gamma-\frac{1}{n}}{4})$, as $\frac{g}{\log 2} < -g + \sqrt{g(g+8)}$ for $g\in(0,2\log 2)$.
    If we further assume that $n > \frac{2}{1-\gamma}$, we can have $q \in (\frac{1}{2}, \frac{1}{2} + \frac{(1-\gamma)}{8})$.
\end{proof}

\subsection{Proof of Theorem~\ref{thm:max-margin_priext}}
\begin{proof}
First, recall that $\Xh = \A(\X_\eta)$.
We proceed as in Theorem~\ref{thm:max-margin_pri}, except for the Fano's inequality step.
We now use the Fano's inequality from \cite{duchi2013distance} together with eq.\eqref{eq:max-margin_pri_MI} and the fact that $N_{\max}(t) = \binom{n}{t} \leq \left(\frac{ne}{t}\right)^t$.
Thus, 
\begin{align*}
	\P[d(\Xh, \X) > t] &\geq 1- \frac{\MI(\X;\Xh)+\log 2}{\log \left(\frac{|\XS|}{N_{\max}(t)}\right)}\\
	&\geq 1-\frac{\frac{n}{2}(2q-1)\log \frac{q}{1-q} + \log 2}{\log \left(\frac{2^n t^t }{(ne)^t}\right)}\\
	&= 1-\frac{\frac{n}{2}(2q-1)\log \frac{q}{1-q} + \log 2}{n\log 2+t(\log \frac{t}{n} - 1 )}\\
	\end{align*}
Not that $t\leq n$ in our analysis. 
In order to have $\P[d(\Xh, \X) > t] \geq \gamma$, we require 
\begin{align*}
    \frac{\frac{n}{2}(2q-1)\log \frac{q}{1-q} + \log 2}{n\log 2+t(\log \frac{t}{n} - 1 )} &\leq 1 - \gamma \\
    (2q-1)\log \frac{q}{1-q} &\leq \frac{(1-\gamma) \left(n\log 2+t(\log \frac{t}{n} - 1 ) \right) -\log 2}{n/2}\\
    (2q-1)\log \frac{q}{1-q} &\leq 2(1-\gamma) \left(\log 2+\frac{t}{n}(\log \frac{t}{n} - 1 ) \right) -\frac{2\log 2}{n}
\end{align*}
Let $G_{\gamma, n, t}=(1-\gamma) \left(\log 2+\frac{t}{n}(\log \frac{t}{n} - 1 ) \right) -\frac{\log 2}{n}$.
A reasoning similar to the proof of Theorem~\ref{thm:max-margin_pri} leads to $q\in (\frac{1}{2}, \frac{1}{2}+\frac{-G_{\gamma, n, t}+\sqrt{G_{\gamma, n, t}(G_{\gamma, n, t} + 8)}}{8})$.
\end{proof}

\section{Irrecoverability Versus Privacy in Our Examples}
\label{app:relation_dp}

\subsection{A General Privacy Example}
\label{app:privacy_example}
We invoke Definition~\ref{def:dp} for sets $S$ of size 1. 
In this case we have $S = \{z\}$  for $z \in \mathcal{Z}$ , and  therefore $\M(x) \in S$ is equivalent to $\M(x) = z$.
Furthermore, we use datasets $x$ and $x'$ that differ in $\alpha n$ samples, where $\alpha \in (0,1]$ is constant with respect to $n$.
We believe this regime is fair for comparison, since Theorem~\ref{thm:irrecov} uses privacy for arbitrary datasets $x$ and $x'$ (i.e., $\alpha = 1$). 
Furthermore, regimes such as differential privacy (where datasets $x$ and $x'$ differ in a single data point) assume that the attacker knows all samples except one (which is an irrelevant regime for recoverability where the attacker does not know any of the samples).

Let $\DD_x$ be the domain of samples, denote a dataset with $n$ samples as $x \in \XS \equiv \DD_x^n$.
Let $z \in \mathcal{Z}\equiv\DD_z^n$ be the perturbed version of $x$ (i.e., $z_i$ is the noisy observation of $x_i$ for $i \in [n]$).
Let $\mathcal{M}$ be the perturbation algorithm that takes $x$ as input and returns $z$ as output. 
We have:
\begin{align*}
    \P_{\M}[\M(x)=z] &\leq e^\epsilon \P[\M(x') = z] + \delta, 
\end{align*}
which is equivalent to
\begin{align*}
     p(z|x) &\leq e^\epsilon p(z|x') + \delta.
\end{align*}
By independence, we have
\begin{align*}
    \prod_{i=1}^{n}p(z_i|x_i) &\leq e^\epsilon\prod_{i=1}^{n}  p(z_i|x'_i)  + \delta
\end{align*}
Now $\forall i \in [n], p(z_i|x_i) \leq e^{\frac{\epsilon}{\alpha n}}  p(z_i|x'_i) + \frac{\delta}{n}$ is a sufficient condition to satisfy privacy.

\subsection{Irrecoverable Example in Section~\ref{subsec:MLE}}
Following Section~\ref{app:privacy_example},
this example satisfies Definition~\ref{def:dp} if $\tilde{\sigma}_\eta \geq \frac{\alpha n\sqrt{8 \sqrt{p}\log \frac{1.25}{\delta}}}{\epsilon} $ for two datasets with $\alpha n$ different samples,   because of the additive Gaussian noise  and robustness of post-processing ~\cite{dwork2014algorithmic}.
If the variance $\sigma^2_\eta \in \left[\frac{4}{(1-\gamma)\log 2}, \tilde{\sigma}^2_\eta \right)$, the example is not private but irrecoverable.

\subsection{Irrecoverable Example in Section~\ref{subsec:glm}}
Following Section~\ref{app:privacy_example} and similar to the previous example, this example satisfies Definition~\ref{def:dp} if $\tilde{\sigma}_\eta  \geq \frac{\alpha n\sqrt{8 \log \frac{1.25}{\delta}}}{\epsilon} $ for two datasets with $\alpha n$ different samples, because of the additive Gaussian noise ~\cite{dwork2014algorithmic}.
If the variance $\sigma^2_\eta \in \left[\frac{8}{(1-\gamma)\log 2}, \tilde{\sigma}^2_\eta \right)
$, the example is not private but irrecoverable.

\subsection{Irrecoverable Example in Section~\ref{subsec:pca}}
Following Section~\ref{app:privacy_example},
similar to the previous examples, this example satisfies Definition~\ref{def:dp} if $\tilde{\sigma}_\eta  \geq \frac{\alpha n\sqrt{8 \log \frac{1.25}{\delta}}}{\epsilon} $ for two datasets with $\alpha n$ different samples, because of the additive Gaussian noise~\cite{dwork2014algorithmic}.
If the variance $\sigma^2_\eta \in \left[\frac{8}{(1-\gamma)\log 2}, \tilde{\sigma}^2_\eta \right)$, the example is not private but irrecoverable.

\subsection{Irrecoverable Example in Section~\ref{subsec:max-margin}}
Following Section~\ref{app:privacy_example},
this example satisfies $(\epsilon,0)$-privacy from Definition~\ref{def:dp} for the matrix $\X \in \{-1, 1\}^{n_1 \times n_2}$, $n=n_1 n_2$ if we have $\tilde{q} = \frac{e^{\frac{\epsilon}{\alpha n}}}{e^{\frac{\epsilon}{\alpha n}}+1}$ for two datasets with $\alpha n$ different samples.
If $q \in \left[\frac{1}{2}, \frac{1}{2} + \frac{(1-\gamma)\log 2}{8} \right]$ and $q > \tilde{q}$, the example is not private but irrecoverable.

\end{document}